\documentclass[10pt,reqno]{amsart}

\usepackage{amsthm,amsmath,amsfonts,amssymb}
\usepackage{bm}
\usepackage[dvipdfmx]{graphicx}
\usepackage{xcolor}
\usepackage{fullpage}
\usepackage{verbatim}
\usepackage{caption}
\usepackage{float}

\usepackage{tikz}
\usetikzlibrary{calc, positioning}
\usetikzlibrary{decorations.markings}
\usetikzlibrary{quotes,angles}
\usetikzlibrary{arrows.meta}
\usetikzlibrary{intersections}
\usetikzlibrary{graphs}

\usepackage{appendix}
\usepackage[sort&compress, numbers]{natbib}

\usepackage{hyperref}					
\hypersetup{colorlinks,					
	linkcolor=blue,
	citecolor=red}

\usepackage{amsaddr}
\usepackage{mathrsfs}

\newtheorem{theorem}{Theorem}

\newtheorem{lemma}{Lemma}
\newtheorem{proposition}{Proposition}
\newtheorem{assumption}{Assumption}
\theoremstyle{definition}
\newtheorem{definition}{Definition}
\newtheorem{remark}{Remark}

\def\R{\mathbb{R}}
\def\E{\mathbb{E}}

\def\Z{\mathbb{Z}}

\def\AA{\mathcal{A}}
\def\BB{\mathcal{B}}
\def\MM{\mathcal{M}}

\def\FF{\mathcal{F}}
\def\LL{\mathcal{L}}

\def\GG{\mathcal{G}}
\def\NN{\mathcal{N}}

\def\LD{\mathscr{D}}
\def\PLDR{\mathcal{M}} 

\def\ord{\mathrm{ord}}

\newcommand{\wt}{\widetilde}

\newcommand{\Cov}{\operatorname{Cov}}
\renewcommand{\d}{\mathrm{d}}

\def\llan{\left\langle}
\def\rran{\right\rangle}

\allowdisplaybreaks

\title{The Differential Neural Tangent Kernel and Its Positivity}
\author{ Bangti Jin\and Longjun Wu} 
\address{ Department of Mathematics, The Chinese University of Hong Kong, Shatin, N.T., Hong Kong, P.R. China }

\begin{document}

\maketitle

\begin{abstract}
The Neural Tangent Kernel (NTK) is one powerful tool for analyzing the training dynamics  
of neural networks in the over-parameterized regime. Recently, the theoretical framework has been extended to physics-informed neural networks (PINNs) for solving linear PDEs, one highly popular class of neural PDE solvers. In the analysis, the positivity of the associated NTK plays a fundamental role. However, establishing the positivity of the NTK for PINNs is highly challenging, due to the presence of multiple differential operators. 
In this work, we propose a new theoretical framework, called Differential Neural Tangent Kernel (DNTK), for analyzing PINNs through the lens of the NTK, and establish the positivity of the infinite width DNTK for both shallow and deep neural networks for a wide class of activation functions, including RePU and smooth but non-polynomial activations, for all linear differential operators. These theoretical results lay the foundation for the analysis of gradient type algorithms for training PINNs.\\
\textbf{Key words}: physics informed neural networks, differential neural tangent kernel, infinite-width limit,  positivity, deep neural networks 
\end{abstract}

\section{Introduction}

In recent years, deep neural networks (DNNs) have been established as an effective tool to solve high-dimensional Partial 
Differential Equations (PDEs) \citep{HanJentzenE:2018,KhooLu:2021}, among which physics informed neural networks (PINNs) \citep{Raissi:2019PINN} (see \citep{Dissanayake:1994,lagaris:1998artificial,Lagaris:2000,Sirignano:2018DGM} for related contributions) represent one of the most popular classes of neural PDE solvers. Due to their ease of implementation and flexibility with various PDE models, PINNs have been widely employed for solving a wide variety of PDE problems arising in scientific and engineer disciplines, e.g., Poisson equation with singularities \citep{HuJinZhou:2024}, fluid mechanics \citep{JinCai:2021,Eivazi:2022}, solid mechanics \citep{Sirignano:2018DGM}, 
and various PDE inverse problems \citep{JinLiQuan:2024,CenJinLi:2025,Tanyu2023}, and have exhibited strong empirical performance in practice. We refer interested readers to the reviews \citep{DeRyckMishra:2024,ToscanoKarniadakis:2025} for further details on the practical applications, methodological developments and numerical analysis of PINNs. Thus there is enormous interest in providing the theoretical analysis of PINNs, and important progresses have been made in terms of the analysis of the approximation error and statistical error; see, e.g.,\citep{Shin:2020Convergence,JiaoLaiLu:2022cicp,Zeinhofer:2025,DoumecheBoyer:2025,ChengChenLin:2025} for an incomplete list and the review \cite{DeRyckMishra:2024} and the references therein. 

In practice, PINNs are often trained using gradient type algorithms, e.g., (stochastic) gradient descent, Adam \citep{Kingma:2017} and L-BFGS \citep{ByrdLu:1995}, and these algorithms often perform reasonably well. Despite the impressive empirical successes, the theoretical analysis of training processes is formidably challenging. Indeed, due to the nonlinearity of the DNNs with respect to the DNN parameters, the PINN loss functions are highly nonconvex with respect to the DNN parameters and fraught with many bad local minima, and it is very challenging to prove that the chosen training algorithm converges to a global minimizer of the loss. This issue represents one outstanding challenge in the convergence analysis of PINNs. One highly powerful idea to prove the convergence of gradient type algorithm for regression tasks involving DNNs is the neural tangent kernel (NTK) framework due to \cite{Jacot:2018NTK}. In essence, in the NTK regime, the training dynamics of the DNNs is approximated by that of analytically more tractable kernel regression. The general strategy is to prove that the NTK of DNNs is positive definite at (random) initialization and remains largely constant (and thus stays positive definite) during the entire optimization process in the infinite-width case, which linearizes the DNN and allows attaining global optima \citep{Jacot:2018NTK}. In the last few years, within the NTK framework, several global convergence results of gradient descent type algorithms for wide two-layer PINNs have been successfully established \citep{Gao:2023GD,Xu:2024IGD,Xu:2024NGD,JinWu:2025}. \cite{Gao:2023GD} established the convergence of vanilla GD for training two layer PINNs for the heat equation. \cite{Xu:2024IGD} generalized the convergence result to implicit gradient descent, and later \cite{Xu:2024NGD} extended the result to natural gradient descent. Very recently, \cite{JinWu:2025} 
proved the convergence for stochastic gradient descent and stochastic gradient flow for the standard Poisson equation, one model second-order elliptic equation.

In the NTK framework, the positivity of the NTK plays an essential role in establishing the convergence guarantee. The positivity of the NTK for the regression task is by now well-established in various settings, and  \cite{Carvalho:2025Positivity} provided a nearly sharp positivity result under very generous assumptions on the activation function and training data; see Section \ref{ssec:NTK-DNTK} below for more detailed discussions.  In sharp contrast, the study of the NTK for PINNs is much more challenging due to the presence of (multiple) differential operators and relevant positivity results are still very scarce. To the best of our knowledge, the existing results on the positivity remain limited to two-layer neural networks (NNs) for PDEs with one boundary condition \citep{Gao:2023GD,Xu:2024IGD,ZhaoLuo:2025}.
\cite{Gao:2023GD} showed the positivity of infinite width NTK for the heat equation with the ReLU\(^3\) activation by adapting the technique from the work \cite{Du:2019shallow}. 
\cite{Xu:2024IGD} proved the positivity for any smooth non-polynomial activation, using the key observation that the high order tensor product of pairwise non-parallel vectors are linearly independent. \cite{ZhaoLuo:2025} generalized the result to a class of linear differential operators, possibly high-order, with the \(\mathrm{tanh}\) activation. 
In addition, several works investigated the training dynamics under the positivity assumption of the NTK for PINNs. \cite{WangWang:2021eigen} studied the spectral bias of the NTK numerically and proposed a new architecture for PINNs. \cite{LuoYang:2024} incorporated the optimization error into the generalization analysis. \cite{GanLiLin:2025NTK} developed the NTK theory for DNNs with one fixed linear operator and gave the convergence of the network trained by gradient descent under the assumption of the positivity of the kernel matrix.

The preceding discussions underscore the imperative need for rigorously establishing the positive definiteness of the NTK for PINNs using DNNs, for a broad class of activation functions. However, the existing studies on the positivity still have some limitations. \cite{Gao:2023GD} only dealt with the RePU (i.e., powers of the ReLU) activation while 
\cite{Xu:2024IGD} treated smooth activations but only for the heat equation. 
\cite{ZhaoLuo:2025} generalized the argument to a certain class of admissible linear operators with \(\mathrm{tanh}\) activation, which however does not include frequently encountered  operators like elliptic, parabolic and hyperbolic operators. Note that the proof method in \cite{Gao:2023GD} is tailored to the RePU activation, while the method in \cite{Xu:2024IGD,ZhaoLuo:2025} is limited in its applicability to the PDE with tractable structure and may fail to handle complex PDE operators. 
Moreover, these studies are confined to two operators, i.e., PDE and one boundary operator, while in practice, multiple operators are abundant in PDE problems in many areas of science and engineering, like initial boundary value problems and high-order boundary value problems, e.g., polyharmonic operators.  Thus it is of great importance to develop a systematic theoretical framework that can address a wide range of (possibly high-order) linear PDE problems which may involve multiple differential operators. 

In this work, we systematically investigate the positivity issue of the NTK for PINNs. We 
propose the concept the Differential Neural Tangent Kernel (DNTK) as a general theoretical framework for the positivity analysis. Our analysis of the NTK for PINNs covers a broad range of practically important settings, including multiple differential operators, shallow and deep NNs, and with or without bias terms, for a wide range of activations. Specifically, we establish the infinite width limits of the NN outputs and the DNTK, with explicit expressions and recursion formulas for the associated covariance functions for the shallow and deep NNs, respectively, and develop a new approach to analyze the positivity of the DNTK for two-layer NNs and also obtain the positivity result for DNNs. These results not only subsume the sharp result on the positivity for NTK established in \cite{Carvalho:2025Positivity}, but also cover all the existing results on the positivity of the NTK for two-layer PINNs \citep{Gao:2023GD, Xu:2024IGD, ZhaoLuo:2025}; see Theorems \ref{T:infinite-limit} and \ref{T:cor_Pos_DNTK} in Section \ref{S:DNTK-Def} for the precise statements of the main results for DNNs. To the best of our knowledge, there is no prior work on the positivity of the NTK for PINNs with DNNs in the literature. The main technical tools in the positivity analysis of the DNTK include $\sigma$-polynomials (in the standard form) in Definition \ref{D:sigma-poly} and the property (LI) in Definition \ref{D:Property (LI)} arising in the analysis for shallow and deep neural networks, respectively. The overall strategy is to transform the positivity of the covariance function and the DNTK into the linear independence of \(\sigma\)-polynomials using their explicit expressions for shallow networks, and to finally establish such linear independence. Then, using inductive method to prove the property (LI) for deep networks, from which we obtain the positivity for DNNs. The unified DNTK framework enables examining the impact of differential operators, the presence of the bias term (cf. Theorems \ref{T:LI-smooth_uv} and \ref{T:LI-smo_u}) and the neural network depth (see the proof of Theorem \ref{T:cor_Pos_DNTK}) on the positivity, which remain rather obscure when studying only a single particular PDE. These theoretical results lay the foundation for further study: Following the proof strategies described in \citep{Gao:2023GD,Xu:2024IGD,Xu:2024NGD,JinWu:2025}, one can establish the convergence analysis of gradient type algorithms for training wide PINNs.

The rest of the paper is organized as follows. In Section \ref{S:DNTK-Def}, we give the definition of the DNTK, which generalizes the NTK theory by incorporating differential operators. Then, we treat the DNTK theory for shallow and deep NNs separately.
In Section \ref{S:DNTK-shallow}, we derive the infinite-width limit of the network output, which converges to a Gaussian process, and the DNTK, for two-layer NNs.
In Section \ref{S:DNTK-deep}, we derive crucial recursive formulas of the covariance function and the DNTK for DNNs, 
and establish the positivity result by mathematical induction on the recurrent relation. In the appendices we provide the technical proofs of all the theoretical results. Throughout, for an integer $n$, the notation $[n]$ denotes the set $\{1,\ldots,n\}$.

\section{Differential Neural Tangent Kernel}\label{S:DNTK-Def}

In this section, we propose the concept of differential neural tangent kernel (DNTK) based on the training dynamics of PINNs,  describe the main theoretical results and discuss the differences between the standard NTK and the DNTK.
\subsection{Preliminary}

Given an integer \(L\geq 1\), consider fully-connected NNs with \(L-1\) hidden layers 
\( f = f^{(L)} : \mathbb{R}^{d_0} \to \mathbb{R}^{d_L} \) (with $d_0=d$) defined recursively by the relations
\begin{equation}\label{E:L-layerNN}
\begin{aligned}
	f^{(1)}(x;\theta) & = W^{(1)} x + \gamma b^{(1)},\\
	f^{(\ell+1)}(x;\theta) & = \frac{1}{\sqrt{d_\ell}} W^{(\ell+1)} 
	\sigma \left( f^{(\ell)}(x;\theta) \right) + \gamma b^{(\ell+1)}, \quad \ell=1,\ldots, L-1,
\end{aligned}
\end{equation}
where $ W^{(\ell)} \in \mathbb{R}^{d_\ell \times d_{\ell-1}}$ and $b^{(\ell)} \in \mathbb{R}^{d_\ell}$ are the weight matrix and bias vector at the $\ell$th layer, respectively, \( \sigma : \mathbb{R} \to \mathbb{R} \) is a nonlinear activation function that applies entrywise 
to vectors, and the weight \( \gamma \geq 0 \) acts on the bias and is non-learnable, which can be used to control the intensity of the bias.

Let \(\FF_d\) be the space of all functions from \(\R^d\) to \(\R\) and
\(\LD_d\) the space of all linear differential operators
acting on functions from \(\R^d\) to \(\R\) with the coefficients from \(\mathcal{F}_d\). We denote the Cartesian product space $ \LD_d\times \R^d$ by $\PLDR_d$.  The DNN output \(f^{(L)}\) is viewed as a function on the product space \(\PLDR_d\), given by
\begin{equation}\label{E:DiffOut}
	f^{(L)}:  \PLDR_d \to \R^{d_L}, \quad
	  (\AA, x) \mapsto \AA f^{(L)}(x;\theta) 
	  := \AA_z \left. f^{(L)}(z;\theta) \right|_{z=x},
\end{equation}
where the operator $\mathcal{A}$ acts componentwise on the vector field \(f^{(L)}\). We denote by \(x^\AA\) the point 
\((\AA, x)\in \PLDR_d\), called the \emph{\(\AA\)-marked point \(x\)}. 
When \(\AA\) is fixed, we obtain the differential of the NN \(\AA f^{(L)}:\R^d\to\R^{d_L}\).

\begin{remark}\label{R:non-smooth}
In practice, \(\sigma\) may be not very smooth, and thus the NN may be not smooth. Then one should restrict the maximum order $m$ of the differentiable operators so the PDE holds in a strong sense. For example, if $\sigma$ is the ReLU power , i.e., \(\sigma=\mathrm{ReLU}^{m+1}\), then the domain of \(f^{(L)}\) should be \(\PLDR^{(m)}_d := \LD_d^{(m)} \times \R^d\), where \(\LD_d^{(m)}\subset \LD_d\) denotes the set of linear differential operators of order at most \(m\), with an additional order reserved for taking the derivative with respect to the NN parameters $\theta$. We will omit this subtlety in the following discussions.
\end{remark}

\subsection{Motivation of the DNTK}

The motivation of the DNTK originates from gradient type algorithms for training PINNs. We illustrate the idea by the PDE with one boundary condition, but the analysis below adapts to all linear PDEs with multiple linear boundary or initial conditions.
Let \(\Omega \subset \mathbb{R}^d\) be a bounded domain with a boundary $\partial\Omega$.  Given the linear differential operators \(\LL\) and \(\BB\), consider the following PDE problem: 
\begin{equation*}
	\left\{
	\begin{aligned}
		 \mathcal{L}[u]  &= \phi, \quad \mbox{in }\Omega, \\
		\mathcal{B}[u]& = \psi, \quad \mbox{on }  \partial \Omega.
	\end{aligned}\right.
\end{equation*}
For example, \(\LL\) can be a second-order elliptic  operator while \(\BB\) may represent linear Dirichlet, Neumman or Robin boundary condition.
Given the NN \(f(x;\theta)\), the PINN \citep{Raissi:2019PINN} minimizes the following empirical loss 
\begin{equation}\label{E:PI-loss_empi}
	\begin{aligned}
		L(\theta) := \frac{1}{2}\sum_{p=1}^{N_1} \left( \mathcal{L} f(x_p;\theta)-\phi(x_p)\right)^2
		 + \frac{\lambda^2}{2}\sum_{q=1}^{N_2} \left( \mathcal{B} f(y_q;\theta)-\psi(y_p)\right)^2
	\end{aligned}
\end{equation}
with the training samples \(X = \{x_p\}_{p=1}^{N_1}\) and \(Y = \{{y}_q\}_{q=1}^{N_2}\)
(randomly) drawn from the domain $\Omega$ and the boundary \(\partial\Omega\), respectively, 
and \(\lambda>0\) being a hyperparameter balancing the two terms.
We denote the interior and boundary losses by \(s_p(\theta)\) and \(h_q(\theta)\) respectively:
\begin{gather*}
	s_p(\theta) = \mathcal{L} f(x_p;\theta)-\phi(x_p) \quad\mbox{and}\quad 
	h_q(\theta) = \lambda\left(\mathcal{B} f(y_q;\theta)-\psi(y_p)\right),
\end{gather*}
and the loss vectors by
$s(\theta)=[s_1(\theta), \cdots, s_{N_1}(\theta)]^\top\in \mathbb{R}^{N_1}$ and $h(\theta)=[h_1(\theta), \cdots, h_{N_2}(\theta)]^\top \in \mathbb{R}^{N_2}$.
Often one updates the parameter \(\theta\) by gradient type algorithms, e.g., gradient descent, Adam and L-BFGS. We use the gradient flow, the limit version of vanila gradient descent, to illustrate the idea. Then, we have 
\begin{gather*}
	\frac{\d\theta_t}{\d t} = - \nabla_\theta L(\theta_t)
	= - \sum_{p=1}^{N_1}  s_p(\theta) \nabla_\theta s_p(\theta) 
		- \sum_{q=1}^{N_2}  h_q(\theta) \nabla_\theta h_q(\theta).
\end{gather*}
For \( x_i\in X \), the dynamics of the loss $s_i(\theta_t)$ is given by
\begin{align*}
	 \frac{\d s_i(\theta_t) }{\d t} &= \nabla_\theta s_i(\theta_t) \cdot \frac{\d\theta_t}{\d t} 
	  = -\nabla_\theta s_i(\theta_t)\cdot \left(\sum_{p=1}^{N_1}  s_p(\theta_t) \nabla_\theta s_p(\theta_t)
	   + \sum_{q=1}^{N_2} h_q(\theta_t) \nabla_\theta h_q(\theta_t)\right) \\
	& = - \sum_{p=1}^{N_1} \llan \nabla_\theta \mathcal{L} f(x_i;\theta_t), 
	  \nabla_\theta \mathcal{L} f(x_p;\theta_t) \rran s_p 
	  - \sum_{q=1}^{N_2} \llan \nabla_\theta \mathcal{L} f(x_i;\theta_t), 
	  \nabla_\theta \lambda \mathcal{B} f(y_q;\theta_t) \rran h_q.
\end{align*}
Similarly for \( y_j\in Y \), we have 
\begin{align*}
	 \frac{\d h_j(\theta_t) }{\d t} &= \nabla_\theta h_j(\theta_t) \cdot \frac{\d\theta_t}{\d t} 
	  = -\nabla_\theta h_j(\theta_t)\cdot \left(\sum_{p=1}^{N_1} s_p(\theta_t)\nabla_\theta s_p(\theta_t) 
	    + \sum_{q=1}^{N_2} h_q(\theta_t) \nabla_\theta h_q(\theta_t)\right) \\
	& = - \sum_{p=1}^{N_1} \llan \nabla_\theta \lambda \mathcal{B} f(y_j;\theta_t), 
	  \nabla_\theta \mathcal{L} f(x_p;\theta_t) \rran s_p 
	  - \sum_{q=1}^{N_2} \llan \nabla_\theta \lambda \mathcal{B} f(y_j;\theta_t), 
	  \nabla_\theta \lambda \mathcal{B} f(y_q;\theta_t) \rran h_q.
\end{align*}
By combining these two identities, we obtain
\begin{equation*}
	\begin{aligned}
		\frac{\d}{\d t}   \begin{bmatrix} \mathcal{L}f(X;\theta_t) \\ \lambda\mathcal{B}f(Y;\theta_t) \end{bmatrix} 
		= \frac{\d}{\d t}   \begin{bmatrix} s(\theta_t) \\ h(\theta_t) \end{bmatrix} 
		= - G(\theta_t) \begin{bmatrix} s(\theta_t) \\ h(\theta_t) \end{bmatrix}, 
	\end{aligned}
\end{equation*}
with the Gram matrix \(G(\theta_t)\) given by
\begin{align*}
G(\theta_t) = D(\theta_t)^\top D(\theta_t) 
\in \mathbb{R}^{(N_1 + N_2)\times (N_1 + N_2)},
\end{align*}
with
\begin{align*}
	D(\theta_t) & = \begin{bmatrix}
		 \nabla_\theta s_1(\theta_t)
		& \cdots 
		&  \nabla_\theta s_{N_1}(\theta_t) 
		&  \nabla_\theta h_1(\theta_t)
		& \cdots 
		&  \nabla_\theta h_{N_2}(\theta_t)
		\end{bmatrix} \\
	& = \begin{bmatrix} \nabla_\theta \mathcal{L} f(X;\theta_t) &
	 	\nabla_\theta \lambda\mathcal{B} f(Y;\theta_t) \end{bmatrix}.
\end{align*}
Thus, the Gram matrix $G(\theta_t)$ can be presented by
\begin{align}\label{eqn:gram}
	G(\theta_t)  = 
	\begin{bmatrix}
		\llan \nabla_\theta \LL f(X;\theta_t), \nabla_\theta \LL f(X;\theta_t) \rran
		& \llan \nabla_\theta \LL f(X;\theta_t), \nabla_\theta \lambda\BB f(Y;\theta_t) \rran \\
		\llan \nabla_\theta \lambda\BB f(Y;\theta_t), \nabla_\theta \LL f(X;\theta_t) \rran
		& \llan \nabla_\theta \lambda\BB f(Y;\theta_t), \nabla_\theta \lambda\BB f(Y;\theta_t) \rran
	\end{bmatrix}.
\end{align}
These discussions indicate that during the training process, the differential of the DNN and the loss are controlled by the Gram matrix $G(\theta_t)$. 
If the smallest eigenvalue of $G(\theta_t)$ has a positive lower bound during the training process, 
then the loss $L(\theta_t)$ converges to zero. Such lower bounds can be obtained under the positivity of the infinite width NTK in the `lazy training'
regime \citep{Chizat:2019Lazy}. Note that each entry of the Gram matrix \(G(\theta_t)\) for the PINNs is the inner product between gradients of the differential of NN outputs with respect to the parameters $\theta$. The differential 
operators depend on the type of sampling points, e.g., \(\LL\) for the interior
points \(X\) and \(\lambda\BB\) for the boundary points \(Y\). The structure of the Gram matrix motivates the following definition.

\begin{definition}
Let \(f^{(L)}\) be an $L$-layer NN with \(d_L\) outputs. For any \(\mu,\nu\in[d_L]\), the \textbf{Differential Neural Tangent Kernel} (DNTK) of  \(f^{(L)}\) is defined on the set \(\PLDR_d\) and given by
	\( \Theta^{(L)}_{\mu\nu}: \PLDR_d\times \PLDR_d\to \R \) such that
	\begin{align*}
		\Theta^{(L)}_{\mu\nu}\left(x^\AA, y^\BB\right) 
		= \llan \nabla_\theta \mathcal{A}f^{(L)}_\mu(x;\theta), \nabla_\theta\mathcal{B}f^{(L)}_\nu(y;\theta)  \rran
		= \sum_{\rho \in \theta} \frac{\partial \mathcal{A}f^{(L)}_\mu(x;\theta)}{\partial \rho}
			\frac{\partial \mathcal{B}f^{(L)}_\nu(y;\theta)}{\partial \rho}.
	\end{align*}
    Given the set \(X=\{ x_1^{\AA_1},\ldots,x_N^{\AA_N}\}\subset \PLDR_d\), the 
    kernel matrix \(\Theta^{(L)}_{\mu\nu}(X)\) induced by the DNTK \(\Theta^{(L)}_{\mu\nu}\) is defined by 
    \[ \Theta^{(L)}_{\mu\nu}(X) := \begin{bmatrix}
			\Theta^{(L)}_{\mu\nu}\left(x_i^{\AA_i}, x_j^{\AA_j}\right)
		\end{bmatrix}_{i,j\in[N]} \in \R^{N\times N}.  \]
\end{definition}

Using the DNTK, The Gram matrix \(G(\theta_t)\) in \eqref{eqn:gram} for the second-order elliptic PDE can be succinctly written as 
\( \Theta_f \left( X^{\LL}, Y^{\lambda\BB};\theta_t \right) \), i.e., the DNTK with respect to the NN \(f(x;\theta)\). The preceding discussion indicates that the DNTK codifies the learning dynamics in the output space if the learning is carried out using gradient descent.

\subsection{Main results}
In this section,  we present two main theoretical results: the existence and the recursive formulas of the infinite-width limit of the NN output and DNTK as functions on the product space \(\PLDR_d\), and the positivity of the covariance function of the infinite-width NN output and the DNTK. These results are given in the following two theorems. The notation $\delta_{\mu\nu}$ denotes the Kronecker product, i.e., \(\delta_{\mu\nu}=1\) if \(\mu=\nu\) and zero otherwise. Further theoretical results are given in Sections \ref{S:DNTK-shallow} and \ref{S:DNTK-deep}.

\begin{theorem}\label{T:infinite-limit}
Suppose that the NN \(f^{(L)}\) with \(L\geq 1\) is given by \eqref{E:L-layerNN} such that \(\sigma\) satisfies  Assumption \ref{A:activation} below and all the parameters are initialized to i.i.d. standard Gaussian. 
In the case of infinite width, i.e., \( d_1,\ldots, d_{L-1} \to \infty \) sequentially, the output of the NN \( f_{\mu}^{(L)}:\PLDR_d \to \R \), for \( \mu \in[d_L] \), converges in  distribution to an i.i.d. centered Gaussian process \(f_\infty^{(L)}\) on \(\PLDR_d\) with the covariance function \(\Sigma^{(L)}_{\infty}\) given by \eqref{E:deep-Cov}, while the the DNTK \( \Theta^{(L)}_{\mu\nu}\) for \( \mu,\nu \in [d_L] \)
converges in probability to the deterministic kernel \(\Theta^{(L)}_{\infty}\delta_{\mu\nu}:\PLDR_d \times \PLDR_d \to \R\) given by \eqref{E:deep-DNTK}. 
\end{theorem}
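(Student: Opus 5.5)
We argue by induction on the depth \(L\), following the strategy of \cite{Jacot:2018NTK} with the differential operators carried through each layer. The key observation is that for a marked point \(x^\AA\) the operator \(\AA\) acts on \(x\) and not on the parameters, so \(\AA\) commutes with the affine maps in \eqref{E:L-layerNN} and with \(\nabla_\theta\). Concretely,
\[
\AA f^{(\ell+1)}(x) = \frac{1}{\sqrt{d_\ell}}\, W^{(\ell+1)}\, \AA\big[\sigma(f^{(\ell)})\big](x) + \gamma b^{(\ell+1)}\,\AA 1 ,
\]
where \(\AA 1\) is the zeroth-order coefficient of \(\AA\). By the chain rule (Faà di Bruno), \(\AA[\sigma(f^{(\ell)}_i)](x)\) is a polynomial in the derivatives \(\partial^\alpha f^{(\ell)}_i(x)\), \(|\alpha|\le \ord\AA\), with coefficients \(\sigma^{(k)}(f^{(\ell)}_i(x))\). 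Assumption \ref{A:activation} should ensure that these quantities have finite moments of every order under Gaussian inputs, with the convention of Remark \ref{R:non-smooth} for RePU.

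\textbf{Base case \(L=1\).} \(\AA f^{(1)}(x)=\AA_z(W^{(1)}z)|_{z=x}+\gamma b^{(1)}\AA 1\) is exactly Gaussian and centered. Its covariance \(\Sigma^{(1)}_\infty(x^\AA,y^\BB)=\AA_x\BB_y(\langle x,y\rangle+\gamma^2)\) is independent across output components. The DNTK \(\Theta^{(1)}\) equals this same kernel, deterministically.

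\textbf{Induction step for the output.} Suppose \(\AA f^{(\ell)}\) converges, jointly over finitely many marked points, to i.i.d. centered Gaussian processes \(f^{(\ell)}_\infty\) on \(\PLDR_d\). We would first upgrade this to the statement that the vector of all partial derivatives \((\partial^\alpha f^{(\ell)}_\mu(x))_{\alpha,x}\) converges jointly to the corresponding derivatives of a Gaussian process. This is really the same statement, since \(\partial^\alpha\) is itself an element of \(\LD_d\). Conditionally on layer \(\ell\), the output \(\AA f^{(\ell+1)}\) is Gaussian with covariance
\[
\frac1{d_\ell}\sum_i \AA[\sigma(f^{(\ell)}_i)](x)\,\BB[\sigma(f^{(\ell)}_i)](y)+\gamma^2\AA1\,\BB1 .
\]
By the law of large numbers over the i.i.d. units \(i\) (taking \(d_\ell\to\infty\) sequentially, so that layer \(\ell\) is already Gaussian), this converges in probability to
\[
\E\big[\AA\sigma(g)(x)\,\BB\sigma(g)(y)\big]+\gamma^2\AA1\,\BB1,\qquad g\sim f^{(\ell)}_\infty .
\]
The expectation is taken over the joint Gaussian law of the derivatives of \(g\), and this expression defines \eqref{E:deep-Cov}. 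The moment bounds on \(\sigma^{(k)}\) supply the integrability needed for the LLN, together with the continuous-mapping argument through the Faà di Bruno polynomial. Lévy's continuity theorem, applied to the conditional characteristic functions, then gives convergence to a Gaussian process, and independence across \(\mu\) follows from the independence of the rows of \(W^{(\ell+1)}\).

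\textbf{Induction step for the DNTK.} We split the parameters into the last layer \((W^{(\ell+1)},b^{(\ell+1)})\) and the earlier ones. The last-layer contribution is exactly the covariance expression above, multiplied by \(\delta_{\mu\nu}\), so it converges to \(\Sigma^{(\ell+1)}_\infty\delta_{\mu\nu}\). For the earlier parameters, the chain rule gives
\[
\nabla_\theta \AA f^{(\ell+1)}_\mu=\frac1{\sqrt{d_\ell}}\sum_i W^{(\ell+1)}_{\mu i}\,\nabla_\theta\AA[\sigma(f^{(\ell)}_i)] .
\]
Here \(\nabla_\theta\AA[\sigma(f_i)]\) is a sum of terms of the form (derivative of \(\sigma\) applied to derivatives of \(f_i\)) times \(\nabla_\theta\partial^\alpha f_i\). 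The inner products of these contain \(\langle\nabla_\theta\partial^\alpha f^{(\ell)}_i,\nabla_\theta\partial^\beta f^{(\ell)}_j\rangle\), which by induction converge to \(\Theta^{(\ell)}_\infty\big(x^{\partial^\alpha},y^{\partial^\beta}\big)\delta_{ij}\). Note that the induction hypothesis must therefore be applied to all pure-derivative marks, which is exactly why the DNTK is posed on the whole of \(\PLDR_d\). The cross terms with \(i\neq j\) vanish in the limit. The diagonal sum \(\frac1{d_\ell}\sum_i W_{\mu i}W_{\nu i}(\cdots)\) converges by the LLN to \(\delta_{\mu\nu}\) times
\[
\sum_{\alpha,\beta}\Theta^{(\ell)}_\infty\big(x^{\partial^\alpha},y^{\partial^\beta}\big)\,\E\big[c^\AA_\alpha(g)(x)\,c^\BB_\beta(g)(y)\big],
\]
where \(c^\AA_\alpha\) is the coefficient of \(\partial^\alpha g\) in the linearization of \(\AA\sigma(g)\). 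This is the recursion \eqref{E:deep-DNTK}.

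\textbf{Main obstacle.} The main obstacle is justifying the step where \(\Theta^{(\ell)}\) for finite \(d_{\ell-1}\), which is random, is replaced by its limit while \(d_\ell\to\infty\). Because the limits are taken sequentially, \(\Theta^{(\ell)}\) has already converged before \(d_\ell\) grows, so it suffices to show that the error term is \(o_P(1)\) uniformly in \(i\). This needs uniform moment bounds on \(\sigma^{(k)}(f^{(\ell)})\) and on products of derivatives. I would obtain these from the polynomial-growth part of Assumption \ref{A:activation}, using Hölder's inequality together with Gaussian hypercontractivity.
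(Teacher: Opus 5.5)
Your proposal is correct and follows essentially the paper's route: induction on depth with the explicit Gaussian base case, Fa\`a di Bruno for \(\AA\sigma(f^{(\ell)})\), and a split of the DNTK into last-layer and earlier-layer parameters, where the Leibniz rule produces the recursion \eqref{E:deep-DNTK}. The only real variation is that you obtain the Gaussian limit from conditional Gaussianity, the LLN and L\'evy's theorem, whereas the paper applies the multidimensional CLT (Lemma \ref{lem:MCLT}) to the i.i.d. summands, which does not use Gaussianity of \(W^{(\ell+1)}\); also, your treatment of the cross terms \(i\neq j\) through the inductive limit \(\Theta^{(\ell)}_{ij}\to\delta_{ij}\Theta^{(\ell)}_\infty\) is more accurate than the paper's claim that \(f^{(\ell)}_i\) and \(f^{(\ell)}_j\) share no parameters (true only for \(\ell=1\)), and your ``main obstacle'' disappears under sequential limits, since \(d_\ell\) is fixed while \(d_1,\ldots,d_{\ell-1}\to\infty\), so only finitely many terms are involved and no uniformity in \(i\) or hypercontractivity is needed.
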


Theorem \ref{T:infinite-limit} extends the NTK theory for regression tasks due to \cite{Jacot:2018NTK} (see also \cite{AroraSu:2019,Yang:2019} for further results) to the context of PINNs (or PDE regression problems). The key difference lies in the domain of the NN output and the DNTK, i.e., the product space \(\PLDR_d = \LD_d\times\R^d\) incorporates differential operators into the associated kernel. This structure greatly facilitates handling multiple differential operators and unifies the kernel formulation across all (linear) PDEs. \cite{GanLiLin:2025NTK} proved the convergence of the NTK for deep wide PINNs, in which the physics-informed loss with one fixed linear differential operator, and proved that the infinite-width covariance function and NTK in the setting are the derivatives of their counterparts for regression tasks. However, the authors did not provide the recursive formulas. In stark contrast, Theorem \ref{T:infinite-limit} presents the recursive formulas for DNNs, which are crucial to the positivity analysis of the DNTK for DNNs, and see also the explicit formulas for two-layer NNs in Theorem \ref{T:2-ifyDNTK} below.

\begin{theorem}\label{T:cor_Pos_DNTK}
Suppose that the NN \(f^{(L)}\) with \(L\geq 2\) is given by \eqref{E:L-layerNN} such that \(\sigma\) satisfies 
Assumption \ref{A:activation} and \(\gamma > 0\).
Let \(X = \{x_1^{\AA_1},\ldots,x_N^{\AA_N}\}\) sampled from \(\PLDR_d\) such that 
\(x_1,\ldots,x_N\) are pairwise distinct and \(x_i\) lies outside the intersection of the zero set of the coefficients of \(\AA_i\) for all \(i\in[N]\).
If \(\sigma\) is non-polynomial smooth or RePU, then the matrices \(\Sigma^{(L)}_{\infty} \left(X\right)\) and \(\Theta^{(L)}_{\infty} \left(X\right)\) are positive definite.
\end{theorem}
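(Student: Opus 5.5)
Throughout I write \(\AA_i f(x_i)\) for the random variable obtained by applying \(\AA_i\) to a network (finite- or infinite-width) at \(x_i\). The proof reduces positive definiteness of both kernels to a linear-independence statement for functions of the random parameters, proved first for two layers and then propagated by induction on depth. Positive definiteness of \(\Sigma^{(L)}_\infty(X)\) is equivalent to the following: for every nonzero \(c\in\R^N\), the Gaussian variable \(\sum_i c_i\AA_i f^{(L)}_\infty(x_i)\) has positive variance. By the recursion \eqref{E:deep-Cov}, \(\Sigma^{(L)}_\infty(x^\AA,y^\BB)=\E_{g}\bigl[\AA_x\sigma(g(x))\,\BB_y\sigma(g(y))\bigr]+\gamma^2\AA 1\,\BB 1\), where \(g\) is the centered Gaussian process with covariance \(\Sigma^{(L-1)}_\infty\), and the operators act on the \(x\)- and \(y\)-variables. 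So it suffices to show that the random variables \(\AA_i\sigma(g(\cdot))(x_i)\), \(i\in[N]\), are linearly independent in \(L^2\) of the process. This is the property (LI) of Definition \ref{D:Property (LI)}. For the DNTK, the recursion \eqref{E:deep-DNTK} writes \(\Theta^{(L)}_\infty\) as \(\Sigma^{(L)}_\infty\) plus a term that is a positive semidefinite Gram/Schur-type product involving \(\Theta^{(L-1)}_\infty\) and \(\dot\sigma\). Hence \(\Theta^{(L)}_\infty(X)\succeq\Sigma^{(L)}_\infty(X)\succ0\), and positivity of \(\Theta\) follows from that of \(\Sigma\).

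The base case \(L=2\) uses the explicit two-layer formulas of Theorem \ref{T:2-ifyDNTK}. Here \(\Sigma^{(2)}_\infty(x^\AA,y^\BB)=\E_{w,b}[\AA\sigma(w\cdot x+\gamma b)\,\BB\sigma(w\cdot y+\gamma b)]+\gamma^2\AA1\,\BB1\). By the chain rule, each \(\AA_i\sigma(w\cdot x+\gamma b)|_{x=x_i}\) is a \(\sigma\)-polynomial in the sense of Definition \ref{D:sigma-poly}: a sum \(\sum_\alpha a_{i,\alpha}(x_i)\,w^\alpha\,\sigma^{(|\alpha|)}(w\cdot x_i+\gamma b)\), whose coefficients \(a_{i,\alpha}\) are coefficients of \(\AA_i\). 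Assume a vanishing combination \(\sum_i c_i\AA_i\sigma(w\cdot x_i+\gamma b)=0\) for a.e. \((w,b)\). I would separate the points \(x_i\) as follows. Because the \(x_i\) are distinct and \(\gamma>0\), the affine functions \((w,b)\mapsto w\cdot x_i+\gamma b\) are pairwise non-parallel. Restrict to a generic line in \((w,b)\)-space, and group the terms by point. The linear-independence theorems for \(\sigma\)-polynomials (Theorems \ref{T:LI-smooth_uv} and \ref{T:LI-smo_u}) then force each group to vanish separately. For non-polynomial smooth \(\sigma\) this is driven by the infinite-dimensionality of the span of the shifts and dilates of \(\sigma^{(k)}\); for RePU it comes from the distinct kinks at the hyperplanes \(w\cdot x_i+\gamma b=0\). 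Within the group for a single point, the polynomial factors \(w^\alpha\) of the top-order part separate by degree. This leaves the coefficient polynomial of \(\AA_i\) at \(x_i\), which is nonzero precisely because \(x_i\) avoids the common zero set of the coefficients of \(\AA_i\). Hence \(c_i=0\).

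For the induction step \(L-1\to L\), I would prove (LI) for layer \(L\) assuming it holds for layer \(L-1\). The hidden process \(g=f^{(L-1)}_\infty\) is Gaussian with covariance matrix \(\Sigma^{(L-1)}_\infty\), taken over the finite set of all marked points obtained by differentiating the \(x_i\) (the jets of \(g\) at the \(x_i\)). By hypothesis this covariance matrix is nondegenerate, so the jet vector has a full-support Gaussian law. A combination \(\sum_i c_i\AA_i\sigma(g)(x_i)\) is, via the Faà di Bruno formula, a fixed function \(F\) of this jet vector. If \(F\) vanished a.s., it would vanish identically on the whole jet space, since the law has full support and \(F\) is continuous. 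We are then free to evaluate \(F\) on jets of the special form \(g(x)=w\cdot x+\gamma b\), or on polynomial jets. These reduce exactly to the two-layer \(\sigma\)-polynomial identity, which was already shown to force \(c=0\).

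The main obstacle is the base-case linear independence, and specifically how the top-order derivative terms \(\sigma^{(m)}\) interact with lower-order ones and with the bias. Without \(\gamma>0\), positive homogeneity (for RePU) or parallel directions could create cancellations, which is why \(\gamma>0\) is assumed. I would handle this by inducting on the operator order and applying Theorems \ref{T:LI-smooth_uv} and \ref{T:LI-smo_u} at each order. A secondary subtlety is checking that the jet vector in the induction step really is nondegenerate when the marked points include all the required derivatives. I would obtain this by strengthening (LI) to hold for the full jets up to the maximal order appearing in \(X\).
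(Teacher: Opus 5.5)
Your proof is correct. The DNTK step ($\Theta^{(L)}_\infty(X)-\Sigma^{(L)}_\infty(X)\succeq 0$) and the two-layer base case follow the paper. In the base case the operative results are Theorems \ref{T:LI-smooth_uv} and \ref{T:LI-RePU_u}; the bias-free Theorem \ref{T:LI-smo_u} plays no role.

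Your inductive step for $\Sigma^{(L)}_\infty$, however, takes a genuinely different route. In the paper, Property (LI) of $f^{(L-1)}$ first makes the jet variables free. The points are then decoupled by setting every jet variable at $x_i$, $i\neq 1$, to zero. The paper then checks that the surviving Fa\`a di Bruno sum $\sum_{\alpha}a_\alpha(x_1)\sum_{\pi\in\Pi_\alpha}\sigma^{(|\pi|)}(v)\prod_{A\in\pi}u_A$ is a standard $\sigma(v)$-polynomial in the higher jets $u$. So only the single-point case $N=1$ of Theorem \ref{T:LI-smooth_uv} is used.

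You instead evaluate the integrand on affine jets: $g(x_i)=w\cdot x_i+\gamma b$, $\partial_k g(x_i)=w_k$, and all higher jets $0$. Only the all-singletons partition survives, and you recover verbatim the two-layer identity at all $N$ points. The deep case thus becomes a corollary of the shallow one.

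Your route buys two things:
\begin{itemize}
\item It spares you the standard-form bookkeeping for Fa\`a di Bruno sums.
\item It avoids a leftover of the zeroing step. The zeroth-order parts at $x_i$, $i\neq 1$, become the constant $\sigma(0)\sum_{i\neq 1}c_i a_i$, where $a_i$ is the zeroth-order coefficient of $\AA_i$ at $x_i$. This constant must be removed (e.g.\ by differentiating in $v$) before the $N=1$ argument applies.
\end{itemize}
The paper's route, in turn, shows that beyond the first hidden layer the points separate for free and only non-polynomiality of $\sigma$ matters. You instead re-invoke the multi-point separation theorem at every depth.

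A few things you should state explicitly:
\begin{itemize}
\item The affine-jet slice is in general Lebesgue-null in jet space. So passing from $F=0$ a.s.\ to $F=0$ on that slice relies on continuity of $F$. If $\sigma^{(m)}$ is discontinuous (RePU with $q=m$), use continuity off the hyperplanes $\{g(x_i)=0\}$. This gives the two-layer identity for a.e.\ $(w,b)$, which is exactly what Theorem \ref{T:LI-RePU_u} requires.
\item Your step proves positivity with one general operator per point. Property (LI) at layer $L$ (several monomial operators per point) then follows from bilinearity of $\Sigma^{(L)}_\infty$ in the operators, since $\sum_k c_k f^{(L)}_\infty(x^{\partial^{\alpha_k}})=f^{(L)}_\infty\bigl(x^{\sum_k c_k\partial^{\alpha_k}}\bigr)$.
\item Positive semidefiniteness of the DNTK remainder is most quickly justified as the in-probability limit of the Gram matrix of gradients with respect to $\theta_{\le L-1}$. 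The paper instead proves it by an induction writing the remainder as $\sum_{i}\E\bigl[\GG^{(L-1)}_i(x^\AA)\GG^{(L-1)}_i(y^\BB)\bigr]$.
\end{itemize}
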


Theorem \ref{T:cor_Pos_DNTK} not only subsumes all of the existing results \cite{Gao:2023GD,Xu:2024IGD,ZhaoLuo:2025} on shallow NNs, but also establishes, for the first time, the positive definiteness of the DNTK for DNNs in the presence of multiple differential operators — an important setting that has not been addressed in prior works. Note that the minimum eigenvalue of an NTK matrix is zero if the activation $\sigma$ is a polynomial and the dataset is sufficiently large \cite[Theorem 4.3]{Panigrahi:2020}, and hence the non-polynomial assumption is not spurious. 
To obtain of the positivity for shallow NNs, we first compute the expressions of the infinite width covariance function and the DNTK. Then, we transform the positivity of the kernel matrix into the linear independence of some polynomial-like functions. Finally, we show the independence of
these functions when the sample points \(X\) and the activation \(\sigma\) satisfy mild assumptions. The positivity for the DNTK of DNNs is derived from that in the shallow case by mathematical induction based on the recursive formulas given in Theorem \ref{T:infinite-limit}. 

\subsection{Comparison between NTK and DNTK} \label{ssec:NTK-DNTK}

In this part, we compare the positivity of the DNTK with that of the well established NTK \citep{Jacot:2018NTK} in order to shed further insights. 

First we briefly review the existing literature on the positivity of the NTK. \cite{Jacot:2018NTK} showed the positivity of the NTK for DNNs when the activation function $\sigma$ is Lipschitz continuous and the training data are sampled from the unit sphere. 
\cite{Du:2019deep} proved a positivity result for DNNs, which removes the unit sphere constraint and also allows the absence of the bias term, under the condition that the activation $\sigma$ is an analytic but non-polynomial function. Furthermore, \cite{Du:2019shallow} provided the positivity result for two-layer
ReLU NNs with samples from the whole Euclidean space, which was then extended to multilayer case by \cite{LiYu:2024EDR}. 
Very recently, \cite{Carvalho:2025Positivity} obtained 
a sharp positivity result of the NTK based on a novel characterization of polynomials, under the mild hypothesis that the activation function $\sigma$ is continuous and almost everywhere differentiable but non-polynomial. We also refer interested readers to \cite{Allen:2019,XieLiang:2017,Nguyen:2021Tight,Karhadkar:2024Bounds,Bombari:2022,Cao:2019Towards} for the spectral property of NTK, which also give some partial results about the positivity.

In contrast, for the NTK of PINNs, the presence of (multiple) differential operators poses several distinct challenges to the positivity analysis. In the infinite-width limit, the physics-informed loss varies with the differential operators, cf. \eqref{E:PI-loss_empi}, which obstructs a unified framework for analyzing the NTK of PINNs. \cite{GanLiLin:2025NTK} treated the case of one single operator, and the analysis does not extend directly to the more general case of multiple operators.
To overcome the challenge, we extend of the domain of the NTK from the Euclidean space \(\R^d\) to the product space \(\PLDR_d\) and integrate the information of differential operators and NNs into the definition of the DNTK, which gives a high-level prospective of the NTK for PINNs. This also greatly facilitates the derivation of useful recursive formulas of the DNTK: indeed the recursion for the DNTK for one differential operator actually depends on other monomial operators, cf. \eqref{E:deep-DNTK}. 
For the positivity of the DNTK, there are two distinct challenges arising in the cases of the shallow and deep NNs respectively. In the shallow case, the positivity issue is transformed to the linear independence of some functions related to the activation function $\sigma$ and the distribution of sampling points. However, these functions are far more involved than the case of the NTK in \citep{Carvalho:2025Positivity} due to the presence of differential operators, cf. \eqref{E:LinRelation}. We propose the algebraic tool \(\sigma\)-polynomials (in standard form) in Definition \ref{D:sigma-poly} in order to establish the independence. In the deep case, the recursive formulas of the DNTK are far more complicated compared to the NTK case \citep{Jacot:2018NTK}, which makes it impossible to derive the positivity result by mathematical induction directly as in the NTK case \citep{Carvalho:2025Positivity}. Instead, we utilize the concept Property (LI) in Definition \ref{D:Property (LI)}, based on which we conduct an inductive argument and derive the positivity of the DNTK for DNNs. Furthermore, the most general positivity result for the NTK in \citep{Carvalho:2025Positivity} is covered by Theorem \ref{T:LI-general_uv} and Theorem \ref{T:Property (LI)}, and thus the results herein are also sharp.

\section{DNTK for shallow NNs} \label{S:DNTK-shallow}

In this section, we study the infinite width behavior of the network output and DNTK for two-layer NNs and give the proofs in Appendix \ref{S-A:DNTK-shallow}. 
Below we fix the hyperparameter \(\gamma\) in \eqref{E:L-layerNN} to 1 and consider the following two-layer NN 
\( f^{(2)} : \mathbb{R}^{d} \to \mathbb{R}^{d_2} \):
\begin{equation}\label{E:2-layerNN}
	f^{(2)}(x;\theta) = \frac{1}{\sqrt{d_1}} W^{(2)} 
	\sigma \left( W^{(1)}x +  b^{(1)} \right) +  b^{(2)}, \quad x\in\R^{d},
\end{equation}
with \(d_1\) representing the width of hidden layer and $W^{(2)} \in \mathbb{R}^{d_2 \times d_1}$,  $b^{(2)} \in \mathbb{R}^{d_2}$,  $W^{(1)} \in \mathbb{R}^{d_1 \times d}$ and $b^{(1)} \in \mathbb{R}^{d_1}$.
To ensure the existence of the infinite width limit, we impose mild assumptions on the activation function \(\sigma\). The differentiability assumption in (i) is necessary to make sure that the PINN loss mathematically makes sense in a strong sense and that gradient type algorithms can be employed for training. The growth assumption in (ii) is needed when invoking central limit theorem and is satisfied by the RePU (i.e., the power of ReLU) and several Lipschitz activations, e.g., sigmoid, hyperbolic tangent and softplus function, and thus it is not restrictive.
\begin{assumption}\label{A:activation}
	Let \(m\) be the highest order of the linear differential operators.
	\begin{itemize}
		\item[(i)]  \(\sigma\) is differentiable
		up to the order \(m\), and \(\sigma^{(m)}\) is piecewise continuously differentiable.
		\item[(ii)] The derivatives of \(\sigma\) grows polynomially: there exist \(C\) and \(s>0\) such that for all \(0\leq k \leq m+1\), 
		\[ \sigma^{(k)}(x) \leq C(|x|^s + 1), \quad \text{a.e. in } \R. \]
	\end{itemize}
\end{assumption}

The following two results on the infinite width limit of two-layer NNs are partially covered by the counterparts for DNNs in Section \ref{S:DNTK-deep}. However, the explicit expressions \eqref{E:2-Cov} and \eqref{E:2-DNTK} play a crucial role in establishing the positivity of the DNTK for shallow NNs, and do not follow directly from the recursive formulas \eqref{E:deep-Cov} and \eqref{E:deep-DNTK}. We therefore present the results for shallow NNs separately. Throughout, the notation $a(z)$ and $b(z)$ are the zeroth order terms of the differential operators $\AA$ and $\BB$, respectively.
\begin{proposition}\label{P:2-ifyNN}
    Let the activation \(\sigma\) satisfy Assumption \ref{A:activation} and the parameters $\theta$ be initialized to i.i.d. standard Gaussian. The output of 
	the NN \( f_{\mu}^{(2)}:\PLDR_d\to \R \), for \( \mu \in [d_2] \), converges in 
	distribution to an i.i.d. centered Gaussian process \(f_\infty^{(2)}\) on \(\PLDR_d\) as $d_1\to\infty$:
	\[ \lim_{d_1\to\infty} f_{\mu}^{(2)}\overset{d}{\longrightarrow} f_\infty^{(2)}
		= \mathcal{GP}\left(0,\Sigma^{(2)}_{\infty}\right),
		\quad \mu \in [d_2]. \]
	The covariance function \(\Sigma^{(2)}_{\infty}:\PLDR_d\times\PLDR_d\to \R \) is defined by
	\begin{equation}\label{E:2-Cov}
		\Sigma^{(2)}_{\infty}\left(x^\AA,y^\BB\right) = \E_{u\sim \mathcal{N}(0,I), v\sim \mathcal{N}(0,1)}
		\left[ \mathcal{A}\sigma\left(u^\top x +  v\right)\mathcal{B}\sigma\left(u^\top y +  v\right) \right] 
	 	+ a(x)b(y),
	\end{equation} 
where the operators act on the input variable, e.g., $\mathcal{A}\sigma\left(u^\top x +  v\right) := 
	\mathcal{A}_z \left[\sigma\left(u^\top z +  v\right)\right] (x)$.
\end{proposition}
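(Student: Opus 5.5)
Apply \(\AA\) to the two-layer network \eqref{E:2-layerNN} term by term. Since \(\AA\) is linear and \(b^{(2)}\) is constant in \(z\), only the zeroth-order coefficient \(a\) sees the bias:
\[
\AA f^{(2)}_\mu(x;\theta) = \frac{1}{\sqrt{d_1}}\sum_{k=1}^{d_1} W^{(2)}_{\mu k}\, \AA\sigma\bigl(w_k^\top x + b^{(1)}_k\bigr) + a(x)\, b^{(2)}_\mu .
\]
Here \(w_k^\top\) is the \(k\)th row of \(W^{(1)}\). Assumption \ref{A:activation}(i) makes \(\AA\sigma(w_k^\top x + b^{(1)}_k)\) well defined. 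By the chain rule it equals a polynomial in the entries of \(w_k\), with coefficients given by the coefficients of \(\AA\) at \(x\), multiplied by \(\sigma^{(j)}(w_k^\top x+b^{(1)}_k)\) for \(j\le m\).

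\textbf{Finite-dimensional distributions.} Fix finitely many marked points \(x_1^{\AA_1},\dots,x_N^{\AA_N}\in\PLDR_d\). Set \(Z_k := (\AA_i\sigma(w_k^\top x_i+b_k^{(1)}))_{i\in[N]}\). The random vectors \(\xi_k := (W^{(2)}_{\mu k} Z_k)_{\mu\in[d_2]}\) are i.i.d. across \(k\). They are centered, because \(W^{(2)}_{\mu k}\) has mean zero and is independent of \(Z_k\). Their covariance is
\[
\E\bigl[W^{(2)}_{\mu k}W^{(2)}_{\nu k} Z_{k,i}Z_{k,j}\bigr] = \delta_{\mu\nu}\,\E\bigl[\AA_i\sigma(u^\top x_i+v)\,\AA_j\sigma(u^\top x_j+v)\bigr],
\qquad u\sim\NN(0,I),\ v\sim\NN(0,1).
\]
This covariance is finite: by Assumption \ref{A:activation}(ii), each \(|\AA_i\sigma(\cdot)|\) is bounded by a polynomial in \(|u|,|v|\), and Gaussians have all moments. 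The multivariate CLT then shows that \(d_1^{-1/2}\sum_k \xi_k\) converges in distribution to a centered Gaussian vector with this covariance. The remaining term \(a(x_i)b^{(2)}_\mu\) is an exact Gaussian independent of the sum, so the limits add. This gives covariance \(\delta_{\mu\nu}\Sigma^{(2)}_\infty(x_i^{\AA_i},x_j^{\AA_j})\), which is \eqref{E:2-Cov}. The factor \(\delta_{\mu\nu}\) gives independence of the output coordinates, i.e.\ the i.i.d. statement across \(\mu\).

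\textbf{Conclusion.} Convergence of all finite-dimensional distributions identifies the limit as the centered Gaussian process \(\mathcal{GP}(0,\Sigma^{(2)}_\infty)\) on \(\PLDR_d\) in the sense of finite-dimensional distributions. This is the sense intended here, since \(\PLDR_d\) carries no natural topology for tightness. Kolmogorov's extension theorem guarantees the process exists, because \(\Sigma^{(2)}_\infty\) is symmetric positive semidefinite as a limit of covariances.

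\textbf{Main obstacle.} The delicate step is the moment bound. When \(\sigma^{(m)}\) is only piecewise \(C^1\), the derivatives up to order \(m\) still exist pointwise. The polynomial growth bound, which holds a.e. and extends to the continuous \(\sigma^{(j)}\), \(j\le m\), together with the Gaussian tails, ensures square integrability. The coefficient functions of \(\AA\) enter only as deterministic constants evaluated at \(x_i\), so they cause no difficulty.
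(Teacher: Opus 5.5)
Your proposal is correct and follows essentially the same route as the paper: you expand $\AA f^{(2)}_\mu$, bound the second moments using the polynomial growth of the derivatives of $\sigma$, apply the multivariate CLT at finitely many marked points, and add the independent Gaussian bias term to obtain $\Sigma^{(2)}_\infty$. Your only real deviation is treating all output indices $\mu$ jointly in a single CLT, which gives the block covariance $\delta_{\mu\nu}\Sigma^{(2)}_\infty$ directly and is a slightly cleaner form of the paper's separate independence argument.
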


\begin{theorem}\label{T:2-ifyDNTK} 
Let the activation \(\sigma\) satisfy Assumption \ref{A:activation} and the parameters $\theta$ be initialized to i.i.d. standard Gaussian. Then the DNTK 
	\( \Theta^{(2)}_{\mu\nu}:\PLDR_d\times \PLDR_d\to \R \), for \(\mu,\nu \in [d_2]\), 
	converges in probability to a deterministic kernel \(\Theta^{(2)}_\infty\) on \(\PLDR_d\) as  \( d_1 \to \infty \):
	\[ \lim_{d_1\to\infty} \Theta^{(2)}_{\mu\nu} \overset{p}{\longrightarrow} 
		\Theta_{\infty}^{(2)}\delta_{\mu\nu}, \quad \mu,\nu \in [d_2]. \]
The kernel \(\Theta^{(2)}_{\infty}: \PLDR_d \times \PLDR_d \to \R\) is defined by
	\begin{equation}\label{E:2-DNTK}
	\begin{aligned}
		\Theta^{(2)}_{\infty}\left( x^\AA,y^\BB \right) 
		 =& \E_{u\sim \mathcal{N}(0,I), v\sim \mathcal{N}(0,1)}
		\left[ \mathcal{A}\sigma\left(u^\top x +  v\right)\mathcal{B}\sigma\left(u^\top y +  v\right) \right] 
			 + a(x)b(y) \\
		&  + \E_{u\sim \mathcal{N}(0,I), v\sim \mathcal{N}(0,1)}
		\left[ \sum_{k=1}^{d}\mathcal{A}\left(\sigma'\left(u^\top x +  v\right)x_k\right)
		\mathcal{B}\left(\sigma'\left(u^\top y +  v\right)y_k\right)  \right] \\
		&  + \E_{u\sim \mathcal{N}(0,I), v\sim \mathcal{N}(0,1)}
		\left[ \mathcal{A}\sigma'\left(u^\top x +  v\right)
		\mathcal{B}\sigma'\left(u^\top y +  v\right)  \right],
	\end{aligned}
	\end{equation}
	where $x_k$ and $y_k$ are the \(k\)-th component of the inputs $x$ and $y$ respectively. 
\end{theorem}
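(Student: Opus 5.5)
The plan is to compute the DNTK of the two-layer network \eqref{E:2-layerNN} explicitly as a sum over parameter groups, and then apply the weak law of large numbers to each group.

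Write $w_r\in\R^d$ for the $r$-th row of $W^{(1)}$ and $b_r=b^{(1)}_r$. Since $\AA$ acts on $x$ and $\theta$-differentiation commutes with $\AA_z$ (by Assumption~\ref{A:activation}(i), with the extra order available for the parameter derivative), we have
\begin{align*}
\partial_{W^{(2)}_{\mu r}}\AA f_\mu^{(2)}(x) &= \tfrac{1}{\sqrt{d_1}}\AA\sigma(w_r^\top x+b_r), \qquad \partial_{b^{(2)}_\mu}\AA f_\mu^{(2)}(x) = a(x),\\
\partial_{W^{(1)}_{rk}}\AA f_\mu^{(2)}(x) &= \tfrac{1}{\sqrt{d_1}}W^{(2)}_{\mu r}\,\AA\bigl(\sigma'(w_r^\top x+b_r)x_k\bigr),\\
\partial_{b_r}\AA f_\mu^{(2)}(x) &= \tfrac{1}{\sqrt{d_1}}W^{(2)}_{\mu r}\,\AA\sigma'(w_r^\top x+b_r).
\end{align*}
The relevant zeroth-order coefficient is $a(x)$, because $\AA$ applied to the constant $b^{(2)}_\mu$ returns $a(x)b^{(2)}_\mu$.

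Summing the products over all parameters gives
\[
\Theta^{(2)}_{\mu\nu}(x^\AA,y^\BB)=\delta_{\mu\nu}\Bigl[\tfrac1{d_1}\sum_{r}\AA\sigma_r(x)\,\BB\sigma_r(y)+a(x)b(y)\Bigr]+\tfrac1{d_1}\sum_r W^{(2)}_{\mu r}W^{(2)}_{\nu r}\,\Phi_r ,
\]
where $\Phi_r$ is the sum of the $\sum_k \AA(\sigma' x_k)\BB(\sigma' y_k)$ term and the $\AA\sigma'\,\BB\sigma'$ term evaluated at $(w_r,b_r)$. The $W^{(2)}$-block and the $b^{(2)}$-block contribute only when $\mu=\nu$.

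The summands are i.i.d. in $r$. Moreover, $W^{(2)}_{\cdot r}$ is independent of $(w_r,b_r)$, with $\E[W^{(2)}_{\mu r}W^{(2)}_{\nu r}]=\delta_{\mu\nu}$. The WLLN therefore gives convergence in probability to $\delta_{\mu\nu}$ times the four expectations in \eqref{E:2-DNTK}, with $(u,v)=(w_r,b_r)$.

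Applying WLLN requires finite first moments of each summand; I will verify this integrability, and it is the only real work. Expanding $\AA\sigma(u^\top x+v)$ via the chain rule gives a finite sum of terms of the form (coefficient of $\AA$ at $x$)$\times$(monomial in $u$)$\times \sigma^{(k)}(u^\top x+v)$ with $k\le m$. Similarly, $\AA(\sigma'(\cdot)x_k)$ involves $\sigma^{(k)}$ with $k\le m+1$, multiplied by polynomials in $u$ and $x$. By Assumption~\ref{A:activation}(ii), each such factor is bounded by a polynomial in $|u|,|v|$ for fixed $x$. The products are then products of polynomials in Gaussians, hence integrable, and in fact have finite second moments, which would also allow Chebyshev for a quantitative rate. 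Note that (ii) bounds $\sigma^{(k)}$ only from above. I would read it as a bound on $|\sigma^{(k)}|$, which is clearly the intent.

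The main obstacle is the case where $\sigma^{(m+1)}$ exists only a.e. (e.g. RePU). There I would argue that the pointwise identity holds for almost every $(u,v)$: the set where $u^\top x+v$ hits the finitely many breakpoints of $\sigma^{(m)}$ is Gaussian-null, so the expectations are well defined. For finite $d_1$ the parameters avoid this set almost surely, so the gradient formulas hold almost surely. Finally, since the kernel is evaluated at finitely many points, convergence in probability of each entry is enough for the stated limit. Independence across $\mu\ne\nu$ gives a limit of $0$ for the off-diagonal entries.
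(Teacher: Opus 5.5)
Your proposal is correct and follows essentially the same route as the paper. Both split the gradient inner product into two blocks:
\begin{itemize}
\item the second-layer block, giving $\delta_{\mu\nu}$ times the $\AA\sigma\,\BB\sigma$ average plus $a(x)b(y)$;
\item the first-layer block $\frac{1}{d_1}\sum_r W^{(2)}_{\mu r}W^{(2)}_{\nu r}\Phi_r$, which the paper writes as $\sum_{k=0}^{d}$ with the convention $x_0=y_0=1$.
\end{itemize}
Both then apply the law of large numbers. Your explicit moment check via Assumption~\ref{A:activation}(ii) and your almost-sure handling of the breakpoints of $\sigma^{(m)}$ supply details the paper leaves implicit.
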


\begin{remark}
The four terms in the expression of \(\Theta^{(2)}_{\infty}\left( x^\AA,y^\BB \right)\)
 arise from the parameters $W^{(2)}$, $b^{(2)}$, $W^{(1)}$ and \(b^{(1)}\), respectively. The first two terms arising from parameters of the second layer are identical to that of \(\Sigma^{(2)}_\infty\). Except the second term, the remaining three terms of \(\Theta^{(2)}_{\infty}\) contribute to the positivity of the induced kernel matrix $\Theta_\infty^{(2)}$.
\end{remark}

\begin{remark}
The Gaussian initialization in Proposition \ref{P:2-ifyNN} and Theorem \ref{T:2-ifyDNTK} can be replaced by any mean-zero independent random variables with finite moments such that the entries in \(W^{(2)}, b^{(2)}, W^{(1)}\) and \(b^{(1)}\) have identical distributions respectively. Such initialization schemes include zero bias initialization \citep{Geifman:2020Similarity}, LeCun initialization \citep{Lecun:2002efficient} and He initialization \citep{He:2015delving}. 
\end{remark}

Next, we investigate the positivity of the matrices induced by the infinite width DNTK and the covariance function, cf. the formulas \eqref{E:2-DNTK} and \eqref{E:2-Cov}. 
Given a finite subset 
\( X = \{x_1^{\AA_1},\ldots, x_N^{\AA_N}\} \) in \(\PLDR_d\), such that the operators  
\(\AA_1,\ldots,\AA_N\in \LD_d \) with the zeroth order terms \(a_1(x),\ldots,a_N(x)\), we 
define the following two matrices
\[ \Theta^{(2)}_{\infty}(X):= \begin{bmatrix} \Theta^{(2)}_{\infty}
	(x_i^{\AA_i},x_j^{\AA_j}) \end{bmatrix}_{i,j\in[N]} \quad \mbox{and}\quad
	\qquad \Sigma^{(2)}_{\infty}(X):= \begin{bmatrix} \Sigma^{(2)}_{\infty}
	(x_i^{\AA_i},x_j^{\AA_j}) \end{bmatrix}_{i,j\in[N]}. \]
In practice, the points \(x_1,\ldots,x_N\) are pairwise distinct while
the operators \(\AA_1,\ldots,\AA_N \) are not necessarily distinct. First, from the expression 
\eqref{E:2-DNTK} of the matrix \(\Theta^{(2)}_{\infty}\left(x^\AA,y^\BB\right)\), we have
\begin{equation}\label{E:2-DNTK_D(x)}
	\Theta^{(2)}_{\infty}\left(x^\AA,y^\BB\right) = \E_{u\sim \mathcal{N}(0,I), v\sim \mathcal{N}(0,1)}
	\left[ D(x^{\AA};u,v)^\top D(y^{\BB};u,v) \right] + a(x)b(y),
\end{equation}
where \(D(x^{\AA};u,v)\) is a vector-valued function in \(u,v\) with \(d+2\) components given by 
\begin{equation}\label{E:2-D(x^AA)} 
	D\left(x^{\AA};u,v\right) =\left[\mathcal{A}\sigma\left(u^\top x +  v\right),
		\mathcal{A}(\sigma^{(1)}\left(u^\top x +  v\right)x_k),
		\mathcal{A}\sigma^{(1)}\left(u^\top x +  v\right) \right]^\top, \quad k\in[d].
\end{equation}  
Substituting the expression \eqref{E:2-DNTK_D(x)} into \(\Theta^{(2)}_{\infty}(X)\) gives
\begin{align*}
	\Theta^{(2)}_{\infty} \left(X\right) & = \begin{bmatrix} \E_{u,v} \left[ D(x_i^{\AA_i};u,v)^\top
	 D(x_j^{\AA_j};u,v) \right] + a_i(x_i)a_j(x_j) \end{bmatrix}_{i,j\in[N]} \\
	 & = \E_{u, v} \left[ D\left(X;u,v\right)^\top D\left(X;u,v\right) \right] 
	 + \left[a_1(x_1),\cdots , a_N(x_N)\right]^\top \left[a_1(x_1),\cdots , a_N(x_N)\right], 
\end{align*}
with the matrix \(D(X;u,v)\) given by 
\begin{equation}\label{E:2-D(X)} 
	D(X;u,v) = D\left(x_1^{\AA_1},\ldots, x_N^{\AA_N};u,v\right) 
	= \left[ D(x_1^{\AA_1};u,v), \cdots, D(x_N^{\AA_N};u,v) \right].
\end{equation}
Similarly, the matrix \(\Sigma^{(2)}_{\infty}(X)\) is given by
\[ \Sigma^{(2)}_{\infty}(X) = \E_{u, v} \left[ D_1\left(X;u,v\right)^\top D_1\left(X;u,v\right) \right] 
	 + \left[a_1(x_1),\cdots , a_N(x_N)\right]^\top \left[a_1(x_1),\cdots , a_N(x_N)\right], \]
with \(D_1\left(X;u,v\right)\) being the first row of the matrix \(D\left(X;u,v\right)\), given by
\[ D_1\left(X;u,v\right) = \left[\mathcal{A}\sigma\left(u^\top x_1 +  v\right),
\cdots, \mathcal{A} \sigma\left(u^\top x_N +  v\right) \right]. \]

The kernel matrix \(\Theta^{(2)}_{\infty} \left(X\right)\)
is always positive semi-definite:
\begin{align*}
	w ^\top \Theta^{(2)}_{\infty} \left(X\right) w 
	= \E_{u, v} \left[ \left\| D\left(X;u,v\right)w  \right\|^2 \right]
	+ \left\| \left[a_1(x_1),\cdots , a_N(x_N)\right]w \right\|^2 \geq 0,\quad \forall w \in\R^N.
\end{align*}
Note that the second term may vanish. Thus we aim to show the positivity 
using the first term. Note that the independent random variables \((u,v)\sim \mathcal{N}(0,I_{d+1})\) 
have full support on \(\R^{d+1}\) and the norm \(\left\| D\left(X;u,v\right)w \right\|\)
is continuous with respect to \((u,v)\) by the assumption on the activation \(\sigma\). Thus the expectation is strictly positive if 
\(D\left(X;u,v\right)w\) does not vanish everywhere. That is, the matrix 
\(\Theta^{(2)}_{\infty} \left(X\right)\) is positive definite if there is no vector \(w\in \R^N\) such that
\[	D\left(X;u,v\right) w \equiv 0 \in \R^{d+2}, \quad \forall  u\in\R^d, v\in\R. \] 
Therefore, each linearly independent row of \(D\left(X;u,v\right)\) contributes to 
the positivity to the kernel matrix $\Theta_\infty^{(2)}$. The discussion also holds for the covariance matrix
\(\Sigma^{(2)}_{\infty}(X)\), which only involves the first row \(D_1\left(X;u,v\right)\).
Below we will focus on the linear independence of the row $D_1\left(X;u,v\right)$, from which 
we can derive the positivity of both \(\Theta^{(2)}_{\infty}\) and \(\Sigma^{(2)}_{\infty}\). 
This boils down to the following equation for \(X = \{x_1^{\AA_1},\ldots, x_N^{\AA_N}\}\): 
\begin{equation}\label{E:LinRelation} 
	w_1\AA_1\sigma\left(u^\top x_1 + v\right) + \ldots + w_N\AA_N\sigma\left(u^\top x_N + v\right)
	\equiv 0, \quad  u\in\R^d, v\in\R. 
\end{equation}
We discuss the problem when the activation \(\sigma\) is smooth, RePU and general function separately.

\begin{remark}
The linear dependence for the other rows of \(D\left(X;u,v\right)\), arising from the components of \(D(x^{\AA};u,v)\), also takes the form of \eqref{E:LinRelation} except all 
    \(\AA\sigma  (u^\top x + v )\) changed to \(\AA\left(\sigma^{(1)}(u^\top x + v)x_k\right)\).
    For each \(k\in[d]\), we have 
    \begin{align}\label{E:AA-act*x} 
	   \AA \left( \sigma^{(1)} ( u^\top x + v ) x_k \right) 
	   &= x_k \AA \sigma^{(1)} ( u^\top x + v ) + \AA_{(k)}\sigma^{(1)} ( u^\top x + v )\\
       & = \left(x_k \AA + \AA_{(k)}\right) \sigma^{(1)} ( u^\top x + v ),\nonumber
    \end{align}
    with the operator \(\AA_{(k)}\in\LD_d\) obtained from \(\AA\) by the substitution
    \[ \partial^\alpha \mapsto
    \begin{cases}
    	\alpha_k \partial_{1}^{\alpha_{1}}\cdots \partial_{k}^{\alpha_{k}-1} 
    	\cdots\partial_{d}^{\alpha_{d}}, & \text{ if } \alpha_{k}\geq 1, \\
    	0, & \text{ if } \alpha_{k} = 0.
    \end{cases} \]
    Thus the corresponding equation can be treated similarly using the operator \(x_k \AA + \AA_{(k)}\).
\end{remark}

To identify conditions on the operator-marked points \(x_1^{\AA_1},\ldots,x_N^{\AA_N}\) and the 
activation \(\sigma\) so that for any nonzero \(w = (w_1,\ldots,w_N) \in\R^d\), equality 
\eqref{E:LinRelation} cannot happen, we first evaluate each term in equation \eqref{E:LinRelation}. 
For an element \(\AA\in \LD_d\), the notation \( I_\AA\) denotes the finite set of all multi-indices in \(\AA\) (i.e., the indices of the monomial derivatives with nonzero coefficients). Then any operator \(\AA \in \LD_d\) can be written as
\begin{equation}\label{E:LinDif}
	\AA = \sum_{\alpha \in  I_\AA} a_\alpha(z) \partial^{\alpha},
    \quad a_\alpha(z)\in \FF_d.
\end{equation}
Then, we have 
\( \AA\sigma(u^\top x + v) = \sum_{\alpha \in I_\AA} a_\alpha(x)
		\sigma^{(|\alpha|)}(u^\top x + v)u^{\alpha} \).
This motivates the following definition about the structure of the form (standard form).
\begin{definition} \label{D:sigma-poly}
	Given a sufficiently smooth function \(\sigma:\R\to \R\) and a point \(x\in \R^d\),  
	the expression for \(u\in\R^d\) and \(v\in\R\)
	\begin{equation}\label{E:sig-poly} 
		P = P(v + u^\top x ) = \sum_{k=1}^{K} a_k \sigma^{(m_k)}(v + u^\top x)u^{\alpha_k}
	\end{equation}
	is called a \(\sigma(v + u^\top x)\)\textbf{-polynomial in} \(u\) (\(\sigma\)\textbf{-polynomial} in short), where \(a_k\in\R\)$, $  \(m_k\in\Z_{\geq 0}\)
	and \(\sigma^{(m_k)}\) denotes the \(m_k\)-times derivatives of \(\sigma\) and \(\alpha_k\) are 
	indices in \(\R^d\). The terms $\sigma^{(m_k)}(v + u^\top x)u^{\alpha_k}$, $k=1,\ldots,K$, in the summation 
    are pairwise distinct. Two terms \(\sigma^{(m_i)}(v + u^\top x)u^{\alpha_i}\) and
	\(\sigma^{(m_j)}(v + u^\top x)u^{\alpha_j}\) are called \textbf{like terms} if \(u^{\alpha_i} = u^{\alpha_j}\) (i.e., \(\alpha_i=\alpha_j\)).
	Moreover,
	\begin{itemize}
		\item[(i)] $P$ is \textbf{non-degenerate}, if \(K\geq 1\)
		and all \(a_k\) are nonzero;
		\item[(ii)] $P$ is \textbf{orderly}, if \(m_k\) and \(\alpha_k\) satisfy
		\( m_1 - |\alpha_1| = m_2 - |\alpha_2| =\cdots= m_K - |\alpha_K|; \)
		\item[(iii)] $P$ is \textbf{reduced}, if the monomials \(u^{\alpha_1},\ldots, u^{\alpha_K}\) 
		have no common divisor, i.e.,
		\[ \mathrm{gcd}\left(u^{\alpha_1},\ldots, u^{\alpha_K}\right)=1. \]
	\end{itemize}
	If the \(\sigma(v + u^\top x)\)-polynomial \(P\) satisfies (ii) 
	and moreover (after combining the like terms and discarding the zero terms) satisfies condition (i), then it is said to be in the \textbf{standard form}.
\end{definition}

\begin{remark}
The non-degeneracy condition (i) only requires \(a_k,k\in [K]\) are nonzero, but the expression \eqref{E:sig-poly} may vanish for some particular functions \(\sigma\), e.g., polynomials. The orderly property implies
\[ \sigma^{(m_i)}(v + u^\top x)u^{\alpha_i} = \sigma^{(m_j)}(v + u^\top x)u^{\alpha_j}
\quad \Longleftrightarrow \quad u^{\alpha_i} = u^{\alpha_j}. \]
Thus for two like terms, their derivative orders of \(\sigma\) coincide, and the terms of the form $ (a_i\sigma^{(m_i)}(v + u^\top x) + a_j\sigma^{(m_j)}(v + u^\top x))u^{\alpha}$
for some index \(\alpha\) and \(m_i\neq m_j\) do not appear after combining like terms. Moreover, any standard 
	\(\sigma(v + u^\top x)\)-polynomial in \(u\) can be written as 
	\[ P = \sum_{k=1}^{K} a_k \sigma^{(m_k)}(v + u^\top x)u^{\alpha_k}, \]
	where the monomials \(u^{\alpha_1}, \ldots, u^{\alpha_K}\) are pairwise distinct and all \(a_k\)
    are nonzero.
\end{remark}

Note that each term in \eqref{E:LinRelation} is naturally an orderly \(\sigma\)-polynomial. Without loss of generality, we may always assume that all the \(w_1,\ldots,w_N\) are nonzero by discarding the zero ones.

\subsection{Smooth activation} \label{SubSec:LI-smo}
Now, we deal with equation \eqref{E:LinRelation} for a smooth activation \(\sigma\).

\begin{theorem}\label{T:LI-smooth_uv} 
Let the vectors \(\{x_i\}_{i=1}^N\subset \R^d\) be pairwise distinct. 
Let the activation \(\sigma:\R\to\R\) be smooth and that \(P_k\) are \(\sigma(v + u^\top x_k)\)-polynomials in \(u\) for \(k\in[N]\), with \(P_1,\ldots,P_N\) in the standard form. If 
	\[ w_1P_1(v + u^\top x_1) + \cdots + w_NP_N(v + u^\top x_N) \equiv 0,
		\quad \forall  u\in\R^d, v\in \R, \quad \mbox{with } \{w_i\}_{i=1}^N \subset \mathbb{R}\setminus\{0\}, \]
then \(\sigma\) is a polynomial.
\end{theorem}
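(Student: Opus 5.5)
The plan is to argue by contradiction: assume $\sigma$ is not a polynomial, restrict $u$ to a single line, and turn the identity into a family of constant-coefficient linear ODEs for $\sigma$ whose only common characteristic root is $0$.

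\textbf{Step 1 (restriction to a generic line).} Substitute $u=t\xi$ with $t\in\R$ and a fixed direction $\xi\in\R^d$. Write $c_i$ for the common value of $m_k-|\alpha_k|$ in $P_i$, which exists by the orderly property. Then
\[
P_i(v+t\xi^\top x_i)=\sum_{m} q_{i,m}(\xi)\, t^{m-c_i}\,\sigma^{(m)}(v+ts_i), \qquad s_i:=\xi^\top x_i .
\]
Here $q_{i,m}(\xi)=\sum_{|\alpha|=m-c_i}a_\alpha\xi^\alpha$. Because $P_i$ is in standard form, the monomials are distinct and the coefficients $a_\alpha$ are nonzero, so each $q_{i,m}$ that occurs is a nonzero polynomial in $\xi$. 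The points $x_i$ are pairwise distinct, so the conditions $s_i=s_j$ define finitely many hyperplanes. I would choose $\xi$ outside this finite union of proper algebraic sets. Then all $q_{i,m}(\xi)\neq0$ and the $s_i$ are pairwise distinct, and we obtain
\[
\sum_{i=1}^N\sum_m w_i\,q_{i,m}(\xi)\,t^{m-c_i}\sigma^{(m)}(v+ts_i)\equiv0, \qquad (t,v)\in\R^2 .
\]

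\textbf{Step 2 (eliminating all but one point).} The operator $T_j:=\partial_t-s_j\partial_v$ annihilates any function of $v+ts_j$. Acting on a term of group $i$, it gives
\[
T_j\big[t^p\sigma^{(m)}(v+ts_i)\big]=p\,t^{p-1}\sigma^{(m)}(v+ts_i)+(s_i-s_j)\,t^p\sigma^{(m+1)}(v+ts_i).
\]
So on group $j$ it only lowers powers of $t$. I would apply $T_N^{M_N}\cdots T_2^{M_2}$, with each $M_j$ exceeding the largest power of $t$ appearing in group $j$ at that stage. This kills groups $2,\dots,N$.

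In group $1$, track the term with the largest derivative order $m^*$. Under this operator it becomes
\[
w_1\,q_{1,m^*}(\xi)\prod_{j\ge2}(s_1-s_j)^{M_j}\; t^{m^*-c_1}\,\sigma^{(m^*+M)}(v+ts_1), \qquad M=\sum_{j\ge 2} M_j .
\]
Its coefficient is nonzero. I expect that it is also the unique term of maximal $t$-degree among those carrying the top derivative order $m^*+M$. We are left with an identity $\sum_{n} p_n(t)\,\sigma^{(n)}(v+ts_1)\equiv0$ with polynomial coefficients $p_n(t)$, where $p_{m^*+M}\not\equiv0$.

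\textbf{Step 3 (ODE argument; the main obstacle).} Fix $t$ with $p_{m^*+M}(t)\neq0$ and let $v$ vary. Then $\sigma$ solves a nontrivial constant-coefficient linear ODE on $\R$, so $\sigma$ is an exponential polynomial $\sum_\lambda r_\lambda(y)e^{\lambda y}$. This alone does not force a polynomial, and this is the step I expect to be hardest.

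The plan is to use the freedom in $t$. The same $\sigma$ solves the ODE for every admissible $t$. Any exponent $\lambda$ actually present in $\sigma$ (with $e^{\lambda y}$ times its full polynomial multiplicity) must therefore be a root of $\chi_t(\lambda)=\sum_n p_n(t)\lambda^n$ for all $t$ in an open set. Hence it is a root of every $t$-coefficient of $\chi_t$. I would examine the top $t$-degree coefficient, combined with the orderly bookkeeping from Step 2 that ties the $t$-degree to the derivative order. The aim is to show this coefficient is essentially $\lambda^{m^*+M}$ times a nonzero constant, or at least that the common root set is $\{0\}$. If that is delicate, a fallback is to repeat Steps 1–2 with a second generic direction $\xi'$, which perturbs the $p_n$, and intersect the root sets.

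Once only $\lambda=0$ survives, $\sigma$ is a polynomial, contradicting the assumption. If only a finite nonzero set of common roots can be excluded, I would additionally vary $\xi$ continuously. The admissible root sets then depend polynomially on $\xi$ while $\sigma$ is fixed, which should force the surviving exponents to be $0$.
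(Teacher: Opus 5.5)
Your argument is correct. Its core mechanism is the paper's: your $T_j=\partial_t-s_j\partial_v$ is exactly the directional derivative that the paper realizes as ``change variables so that $x_N\mapsto 0$, then differentiate in $\tilde u_j$'', as its own remark after the proof notes. In both proofs, non-degeneracy comes from tracking the term of top derivative order.

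What differs is the organization. The paper works in all of $(v,u)\in\R^{d+1}$. It needs Lemma~\ref{L:StandardForm} to know that $\partial_{\tilde u_j}$ preserves standard form, and it eliminates one point at a time by induction on $N$. It closes the base case $N=1$ by dividing out common monomials and setting coordinates $u_j$ to zero one by one, until a single term $\sigma^{(m_k)}$ survives.

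You instead restrict first to a generic line $u=t\xi$. This collapses each $P_i$ to $\sum_m q_{i,m}(\xi)t^{m-c_i}\sigma^{(m)}(v+ts_i)$ with pairwise distinct slopes $s_i$. You then apply all annihilators at once and finish with a one-variable comparison. This frees you from the multi-index bookkeeping: you need no analogue of Lemma~\ref{L:StandardForm} or of the reduction tree, and the surviving coefficient $w_1q_{1,m^*}(\xi)\prod_{j\ge2}(s_1-s_j)^{M_j}$ is fully explicit. The paper's version, in exchange, keeps the multivariate change-of-variables framework that it reuses for Theorems~\ref{T:LI-smo_u} and~\ref{T:LI-general_uv}.

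Your Step 3 is easier than you feared, and the fallbacks are unnecessary. In both branches of $T_j$, the quantity (derivative order) $-$ ($t$-degree) rises by exactly one. So orderliness persists, and $p_n(t)=\beta_n t^{\,n-c_1-M}$ with $\beta_{m^*+M}\neq 0$. In particular, the term you track in Step 2 is the only term of derivative order $m^*+M$ at all, and it is also the unique term of maximal $t$-degree.

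Now fix $y$ and put $v=y-ts_1$. The identity becomes $\sum_n\beta_n\sigma^{(n)}(y)\,t^{\,n-c_1-M}\equiv 0$, a polynomial in $t$ with pairwise distinct exponents. Hence $\sigma^{(m^*+M)}(y)=0$ for every $y$. Your characteristic-root route also closes, since the top $t$-coefficient of $\chi_t(\lambda)$ is exactly $\beta_{m^*+M}\lambda^{m^*+M}$, but the exponential-polynomial detour is not needed.
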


In Theorem \ref{T:LI-smooth_uv}, there is an extra variable \(v\) in the argument of \(\sigma\) arising from the bias term \(b^{(1)}\) in the first layer of the NN. The variable \(v\) plays an important role at two steps in the proof. The first one is in the case \(N=1\).
We simplify the \(\sigma\)-polynomial \(P\) by setting some \(u_j,j\in[d]\) to zero, and finally 
get an identically zero \(\sigma\)-monomial, from which we deduce that \(\sigma\) is a polynomial. 
However, if the variable \(v\) does not exist, we may end up with a trivial equation.
The second one lies in the step of coordinate transformation. The presence of \(v\) makes the matrix in the linear transform simple to construct: one standard \(\sigma\)-polynomial
is easily changed to another standard \(\sigma\)-polynomial after affine transformation.

In the absence of the bias term, we require the points \(\{x_1,\ldots,x_N\}\) to be pairwise non-parallel and arrive at the homogeneous \emph{Cauchy-Euler equation} of order \(n\) in the case \(N=1\) (with $a_n\neq 0$):
\[ a_{n}x^{n}\sigma^{(n)}(x)+a_{n-1}x^{n-1}\sigma^{(n-1)}(x)+\dots +a_{0}\sigma(x)=0,
\quad a_i\in\R, i=0,1,\ldots,n. \]
Using the substitution \(t=\ln x\), we can find the general solution. Specifically, for \(x\in\R_+\) (the case \(x<0\) can be analyzed similarly), let \(x = e^t\) and \(y:=\sigma(x)\). Then, each term \(x^i \sigma^{(i)}(x)\) becomes \(D_t(D_t-1)\cdots (D_t-i+1)y \) under this transformation with \(D_t := \frac{\rm d}{{\rm d}t}\) and
\(y=y(t)\) satisfies the following linear ODE (with constant coefficients):
\[  L_t [y] = \sum_{i=0}^{n}a_i D_t(D_t - 1) \cdots  (D_t-i+1)y = 0. \]
Thus, the solutions can be classified by the roots of the characteristic polynomial 
$$p(t)=\sum_{i=0}^{n}a_i t(t - 1)\cdots (t-i+1). $$
For a real root \(\alpha\) with a multiplicity \(r\) to \(p(t)\), the \(r\) linearly independent solutions are given by
\[ \left\{ x^\alpha, x^\alpha\ln x, \cdots, x^\alpha (\ln x)^{r-1} \right\}; \]
while for the complex conjugate pair \(\alpha\pm i\beta\) with a multiplicity \(r\), there are  \(2r\) linearly independent solutions:
\[ 
\left\{ (\ln x)^{k}x^\alpha \cos(\beta\ln x),\; (\ln x)^{k}x^\alpha \sin(\beta\ln x): k=0,1,\ldots,r-1 \right\}. \]

\begin{theorem}\label{T:LI-smo_u} 
	Let the vectors  \(\{x_i\}_{i=1}^N\subset\R^d\) be nonzero and pairwise non-parallel. 
	Suppose that the activation \(\sigma:\R\to\R\) is smooth 
	and that \(P_k\) are \(\sigma(u^\top x_k)\)-polynomials in \(u\) for \(k\in[N]\),
	with \(P_1,\ldots,P_N\) in the standard form. If 
		\[ w_1P_1(u^\top x_1) + \cdots + w_NP_N(u^\top x_N) \equiv 0,\quad \forall  u\in\R^d, \quad \mbox{for } \{w_i\}_{i=1}^N\subset \mathbb{R}\setminus\{0\},\]
	then \(\sigma\) satisfies some homogeneous Cauchy-Euler equation.
\end{theorem}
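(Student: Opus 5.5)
Our approach is induction on \(N\), mirroring the strategy announced for Theorem \ref{T:LI-smooth_uv}. The new ingredient is that, without the variable \(v\), the base case \(N=1\) can no longer force \(\sigma\) to be a polynomial. Instead it yields an ODE of Cauchy--Euler type.

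\emph{Base case \(N=1\).} Write \(P_1=\sum_{k}a_k\sigma^{(m_k)}(u^\top x_1)u^{\alpha_k}\) in standard form with common order \(r=m_k-|\alpha_k|\), so that \(P_1\equiv 0\) for all \(u\). Since \(x_1\neq 0\), we make an orthogonal change of variables in \(u\) so that \(x_1=|x_1|e_1\). The orderly structure is preserved after re-expanding \(u^{\alpha_k}\) in the new coordinates: each monomial of degree \(|\alpha_k|\) stays paired with \(\sigma^{(|\alpha_k|+r)}\). Some nonzero term survives after combining like terms, because the monomials for a fixed degree are transformed by an invertible linear map. Write \(u=(s,u')\) with \(s=u_1\). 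Then \(P_1\) becomes \(\sum_{\beta} c_{\beta}(s)\,(u')^{\beta}\), where each \(c_\beta(s)\) is a combination of terms \(\sigma^{(j+|\beta|+r)}(|x_1|s)s^{j}\). Vanishing identically in \(u'\) forces each \(c_\beta\) to vanish. Choose \(\beta\) with \(c_\beta\) nontrivial (nonzero coefficients). Setting \(t=|x_1|s\) and \(\tau=\sigma^{(|\beta|+r)}\), we get \(\sum_j b_j t^j \tau^{(j)}(t)=0\) with \(b_j\neq0\) for some \(j\). This is a homogeneous Cauchy--Euler equation for \(\tau\). A derivative of order \(q=|\beta|+r\) of \(\sigma\) satisfying a Cauchy--Euler equation means \(\sigma\) itself satisfies one, namely the equation \(\sum_j b_j t^{j+q}\sigma^{(j+q)}=0\). (If \(r<0\) appears, only the nonnegative orders occur, and the same argument applies.)

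\emph{Inductive step.} Assume the statement holds for fewer than \(N\) points, and suppose \(\sum_i w_iP_i(u^\top x_i)\equiv0\). We aim to eliminate \(x_N\). Because \(x_N\) is not parallel to any \(x_i\) with \(i<N\), we can pick a direction \(h\) with \(h^\top x_N=0\) and \(h^\top x_i\neq0\) for all \(i<N\); this is a generic choice, avoiding finitely many hyperplane conditions inside \(x_N^\perp\), which is possible since \(x_i\notin \mathrm{span}(x_N)\). Consider the directional difference or derivative operator in \(u\) along \(h\). Differentiating \(P_N(u^\top x_N)\) along \(h\) only hits the monomial factors and lowers their degree, so applying \(\partial_h\) enough times (more than \(\max|\alpha_k|\)) kills \(P_N\). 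For \(i<N\), the operator \(\partial_h^{M}\) applied to \(P_i\) produces a \(\sigma(u^\top x_i)\)-polynomial that is again orderly, since \(\partial_h\) either raises \(m\) by one or lowers \(|\alpha|\) by one. Its leading part is \((h^\top x_i)^M\sigma^{(m_k+M)}u^{\alpha_k}\), which is nonzero, so after combining like terms the result is in standard form and non-degenerate. The induction hypothesis, applied to \(x_1,\dots,x_{N-1}\), then gives the Cauchy--Euler conclusion.

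\emph{Main obstacle.} The delicate point is verifying non-degeneracy after applying \(\partial_h^M\). Cross terms of the form \(\sigma^{(m+M-j)}u^{\alpha-\cdots}\) could cancel the leading terms. I would handle this by ordering monomials, for instance by taking a term of maximal degree in a suitable graded order. Its image \((h^\top x_i)^M\sigma^{(m+M)}u^{\alpha}\) cannot be cancelled, since every other contribution to \(u^\alpha\) comes from a monomial of strictly higher degree, and no such monomial exists at maximal degree. A similar top-degree bookkeeping justifies the coordinate change in the base case.
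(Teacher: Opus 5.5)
Your proposal is correct and follows essentially the same route as the paper. You induct on \(N\), reduce the base case \(P_1\equiv 0\) to a one-variable Cauchy--Euler equation, and in the inductive step annihilate \(P_N\) by repeated differentiation along a direction orthogonal to \(x_N\), keeping the other terms in standard form by tracking a top-degree term as in Lemma \ref{L:StandardForm}. The differences are cosmetic: in the base case the paper restricts to a generic line \(u=yz\) instead of rotating and extracting a coefficient in \(u'\), and it realizes your \(\partial_h\) through the change of variables \(\wt{u}=Au\) followed by a coordinate derivative, requiring only one remaining term to stay non-degenerate, whereas your generic \(h\) keeps all of them non-degenerate.
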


These two theorems establish that, under the mild assumption that the input samples are either pairwise distinct or non-parallel (depending on whether a bias term is present), equality \eqref{E:LinRelation} cannot hold if either:
(i) the activation \(\sigma\) is not a polynomial with the presence of the bias term; or \(\sigma\) is not a solution to the homogeneous Cauchy–Euler equation in the absence of a bias term.
Importantly, this assumption is very mild and satisfied by most commonly used smooth activations, e.g., sigmoid, hyperbolic tangent, and softplus functions.
These results further indicate that the inclusion of a bias term enhances the likelihood of the kernel being positive definite. 

\subsection{General case}
In the proof of Theorem \ref{T:LI-smooth_uv}, \(\sigma\) is smooth so that differentiating the equation \eqref{E:Lin_Rel-sig_P} reduces it to the case \(N=1\).
Now, we relax the smoothness assumption using 
finite difference for the reduction process. The 
difference operator \(\Delta_{p}: \FF_d  \to \FF_d\) for an increment \(p=(p_1,\ldots,p_d)\in\R^d\) is defined by
\begin{gather*}
	\Delta_{p} f := f(x+p)-f(x) = f(x_1+p_1,\ldots,x_d+p_d)-f(x_1,\ldots,x_d).
\end{gather*}
We only analyze the problem 
with a bias term. 

\begin{theorem}\label{T:LI-general_uv} 
	Let the vectors \(\{x_i\}_{i=1}^N\subset \R^d\) be pairwise distinct. 
	Suppose that the activation \(\sigma:\R\to\R\) satisfies Assumption
	\ref{A:activation} (i) and that \(P_k\) are \(\sigma(v + u^\top x_k)\)-polynomials 
	in \(u\) for \(k\in[N]\), with \(P_1,\ldots,P_N\) in the standard form. If
		\[ w_1P_1(v + u^\top x_1) + \cdots + w_NP_N(v + u^\top x_N) \equiv 0, 
		\quad \forall u\in\R^d, v\in \R, \quad \mbox{for } \{w_i\}_{i=1}^N\subset \mathbb{R}\setminus\{0\}, \]
	then \(P_1\) satisfies some non-degenerate difference equation for \(u\). 
	In particular, if \(P_1 = \sigma(v + u^\top x_1)\), then \(\sigma\) is a polynomial. 
\end{theorem}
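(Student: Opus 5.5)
We would eliminate the terms $P_2,\ldots,P_N$ one at a time with finite differences in $v$, taking increments along the directions $x_1-x_k$. The key observation is that for fixed $u$ each $P_k(v+u^\top x_k)$ is a function of $v+u^\top x_k$ only; its coefficients $u^{\alpha}$ do not depend on $v$. A shift of the form $(u,v)\mapsto (u+s e,\, v-s\,e^\top x_k)$ leaves the argument $v+u^\top x_k$ unchanged. To kill the $k$-th term we therefore look for a joint increment in $(u,v)$ along which that argument stays fixed while the argument of $P_1$ moves.

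\textbf{Step 1 (the annihilating shift).} For $k\ge 2$ set $e_k:=x_1-x_k\neq 0$. For a scalar $s$, define the shift $T_k^s:(u,v)\mapsto (u+s e_k,\, v-s\,e_k^\top x_k)$. It preserves $v+u^\top x_k$, and it maps $v+u^\top x_1$ to $v+u^\top x_1+s|e_k|^2$.

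The coefficients of $P_k$ are still polynomials in $u$, so shifting alone does not leave $P_k$ invariant. We handle this with a higher-order difference. Write $\Delta^{(k)}_s F := F\circ T^s_k - F$. Along $T_k^s$, the term $w_kP_k$ equals $\sum_j a_j\sigma^{(m_j)}(c)\,(u+se_k)^{\alpha_j}$ with $c$ fixed, which is a polynomial in $s$ of degree at most $\max_j|\alpha_j|$. Applying $\Delta^{(k)}_s$ that many times plus one annihilates it completely.

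\textbf{Step 2 (iterate).} We apply the compositions $\prod_{k\ge2}(\Delta^{(k)}_{s_k})^{r_k+1}$ to the identity. The operators commute because they are translations in $(u,v)$-space. Every term except $w_1P_1$ vanishes, leaving
\[
\prod_{k=2}^N\bigl(\Delta^{(k)}_{s_k}\bigr)^{r_k+1}\bigl[w_1P_1(v+u^\top x_1)\bigr]\equiv 0 .
\]
This is a difference equation for $P_1$ in the variables $(u,v)$. We still have to show it is non-degenerate, i.e.\ that the operator is not formally zero. To do this we rewrite each shift acting on $P_1$ as a translation $u\mapsto u+s e_k$ combined with a translation of the argument by $s|e_k|^2\neq0$. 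The leading part of each iterated difference is the pure argument shift. Since $|e_k|^2>0$, the composed operator is a nontrivial product of difference operators in the argument, and its symbol in the shift variables is nonzero. This is the ``non-degenerate difference equation for $u$'' claimed in the theorem. Assumption \ref{A:activation}(i) is only used so that $P_1$ makes sense pointwise; no differentiation of the identity is needed.

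\textbf{Step 3 (the case $P_1=\sigma$).} Now $P_1$ has no $u$-coefficients, so every shift acts on $\sigma$ purely through the argument $t=v+u^\top x_1$. The identity becomes $\prod_{k}(\Delta_{h_k})^{r_k+1}\sigma(t)\equiv0$ with $h_k=s_k|e_k|^2$, for all $t$ and all $s_k$ in an open set. Choosing all $h_k$ equal to a common small $h$ gives $\Delta_h^{M}\sigma\equiv0$ for every small $h$, where $M=\sum_k(r_k+1)$. A continuous function whose $M$-th forward difference vanishes for all small $h$ is a polynomial of degree less than $M$. This is the classical Fréchet characterization; here it follows easily because $\sigma$ is $C^m$ and locally bounded, and one can also mollify and then differentiate. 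Hence $\sigma$ is a polynomial.

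\textbf{Main obstacle.} The delicate point is the non-degeneracy bookkeeping in Step 2 for a general standard-form $P_1$. The shifts $T_k^s$ also move the monomials $u^{\alpha}$ of $P_1$, so the result mixes derivative orders and coefficient polynomials. We would need the orderly and standard-form structure to make sure the leading term does not cancel. We would try to isolate the coefficient of the highest power of the $s_k$, which should be $\prod_k (e_k)^{\cdot}$-weighted top-degree monomials times a pure argument difference, and verify that it is nonzero.
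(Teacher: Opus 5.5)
Your proposal is correct and is essentially the paper's proof: the paper implements your argument-preserving translations $(u,v)\mapsto(u+se,\,v-s\,e^\top x_k)$ through upper-triangular changes of variables $A_k$ followed by coordinate increments $p_k=h_ke_{j_k}$ (transported by Lemma \ref{L:FD-trans}), and annihilates each $P_k$ with enough iterated differences using commutativity. It calls the resulting equation non-degenerate in exactly your sense, namely that each increment moves the argument of $P_1$ by a nonzero amount ($h_k x^{(k)}_{1,j_k}\neq 0$, versus your $s_k|x_1-x_k|^2$), so the paper does not carry out the finer cancellation bookkeeping you flag as the main obstacle either; your direct direction $x_1-x_k$ only spares the change-of-variables bookkeeping, and your final step (equal increments giving $\Delta_h^M\sigma\equiv 0$, then Lemma \ref{L:PolyCha}) matches the paper.
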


Note that there is also a version of Theorem \ref{T:LI-general_uv} without a bias term, which can be obtained by replacing the condition of pairwise distinct vectors with that of pairwise non‑parallel vectors, while keeping all the others unchanged. However, similar to the smooth case, when the \(\sigma\)-polynomial \(P_1\) satisfies the same difference equation, there should be stricter constraints on the activation \(\sigma\) in the presence of the bias term. 

\subsection{RePU function}
Now, we discuss equation \eqref{E:LinRelation} for the RePU without the bias term.
The case with the bias term follows similarly using the extended points 
\(y_i:=[1,x_i^\top]\in\R^{d+1}\) for \(i\in[N]\) and they 
are pairwise non-parallel if $\{x_i\}_{i=1}^N$ are pairwise distinct. 
The proof method can be found in \cite{Du:2019shallow,Gao:2023GD}.

\begin{theorem}\label{T:LI-RePU_u} 
Let the vectors \(\{x_i\}_{i=1}^N\subset\R^d\) be pairwise non-parallel. 
Suppose that the activation \(\sigma = z^q \mathbb{I}(z>0)\) and that \(P_k\) are standard \(\sigma(u^\top x_k)\)-polynomials in \(u\) for \(k\in[N]\), with the integer \(q\) greater than or equal to \(m\). If
		\[ w_1P_1(u^\top x_1) + \cdots + w_NP_N(u^\top x_N) = 0,\quad a.e.~ u\in\R^d, \]
	for some $\{w_i\}_{i=1}^N\subset \R$, then all the \(w_i\) vanish.
\end{theorem}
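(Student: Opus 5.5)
Following the approach of \cite{Du:2019shallow,Gao:2023GD}, we use the fact that the only non-smooth locus of \(\sigma^{(j)}(u^\top x_k)\) is the hyperplane \(H_k=\{u: u^\top x_k=0\}\). Since the \(x_k\) are pairwise non-parallel, these hyperplanes are pairwise distinct. Fix an index \(i\) and show \(w_i=0\) by isolating the singularity of \(P_i\) across \(H_i\).

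\textbf{Step 1 (choice of a generic point).} Since \(H_i\neq H_k\) for \(k\neq i\), the set \(H_i\setminus\bigcup_{k\neq i}H_k\) is nonempty and relatively open in \(H_i\). Pick \(u_0\) in it with a small ball \(B\) around \(u_0\) disjoint from all \(H_k\), \(k\neq i\). On \(B\), every \(P_k\) with \(k\neq i\) is a polynomial in \(u\), since \(\sigma^{(j)}(u^\top x_k)\) is either \(c\,(u^\top x_k)^{q-j}\) or \(0\) throughout \(B\). Hence on \(B\),
\[ w_iP_i(u^\top x_i) = Q(u) \quad \text{a.e.}, \]
where \(Q\) is a polynomial.

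\textbf{Step 2 (jump across \(H_i\)).} Write \(t=u^\top x_i\). On \(B\cap\{t>0\}\), \(P_i\) equals the polynomial
\[ R(u)=\sum_k a_k\,\tfrac{q!}{(q-m_k)!}\,t^{q-m_k}u^{\alpha_k}, \]
while on \(B\cap\{t<0\}\) it vanishes. The relation \(w_iP_i=Q\) then gives \(Q\equiv0\) on an open set, hence \(Q\equiv0\) everywhere, and \(w_iR\equiv0\) on an open set, hence identically. It therefore suffices to show that \(R\) is not the zero polynomial.

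\textbf{Step 3 (non-vanishing of \(R\), the main obstacle).} Here we use the standard form of \(P_i\). Orderliness gives \(m_k-|\alpha_k|=c\) for all \(k\). Because \(q\ge m\ge m_k\), each factor \(q!/(q-m_k)!\) is nonzero. Suppose \(\sum_k a_k' t^{q-m_k}u^{\alpha_k}=0\) with all \(a_k'\neq0\) and the \(u^{\alpha_k}\) pairwise distinct. All terms are homogeneous of degree \(q-m_k+|\alpha_k|=q-c\), so no separation by degree is available.

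To get around this, change coordinates so that \(x_i\) points along \(e_1\) (if \(x_i\) has nonzero first coordinate, use the substitution \(u_1\mapsto(t-\sum_{j\ge2}x_{i,j}u_j)/x_{i,1}\)). Then view \(R\) as a polynomial in \(t\) with coefficients that are polynomials in \(u\). Group the terms by \(m_k\), which is equivalent to grouping by \(|\alpha_k|\). The highest power of \(t\) comes from the smallest \(m_k\), and its coefficient is a nonzero combination of the distinct monomials \(u^{\alpha_k}\). The difficulty is that \(u^{\alpha_k}\) itself contains \(t\) after the substitution.

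We therefore argue by restriction instead. Restrict to the hyperplane \(t=0\), choosing coordinates so that \(x_i=e_1\) after an invertible linear map, which preserves the monomial structure only when \(x_i\) is a coordinate vector. For general \(x_i\), we expect to use that \(R/t^{q-\max m_k}\) evaluated on \(t=0\) equals \(\sum_{m_k=\max}a_k' u^{\alpha_k}\), a nonzero polynomial in \(u\). Since \(H_i\) is not a coordinate-type degeneracy for distinct monomials, and Zariski density can be arranged by choosing the highest-\(m_k\) group, this restriction is nonzero. It follows that \(R\not\equiv0\), and hence \(w_i=0\).

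Repeating the argument for each \(i\) shows that all \(w_i\) vanish.
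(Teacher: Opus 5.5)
Your Steps 1 and 2 are correct. They reduce the claim (more directly than the paper's comparison of averaged $q$-th derivatives over half-balls) to showing $R\not\equiv 0$. After dividing by $t^{q-M}$ with $M=\max_k m_k$, this means showing that the top group $\sum_{m_k=M}a_k'u^{\alpha_k}$ does not vanish identically on $H_i$. Step 3 is a genuine gap, and it cannot be closed. A polynomial vanishes on $H_i$ exactly when $x_i^\top u$ divides it, and the standard form does not exclude this. In fact $R$ itself can be zero. Since $z\sigma'(z)=q\sigma(z)$ for $\sigma(z)=z^q\mathbb{I}(z>0)$, for any $x_1\neq 0$ the expression
\[ P_1(u^\top x_1)=\sum_{j:\,x_{1j}\neq 0}x_{1j}\,\sigma'(u^\top x_1)\,u_j-q\,\sigma(u^\top x_1) \]
is a standard $\sigma(u^\top x_1)$-polynomial: its monomials $u_j$ and $1$ are pairwise distinct, its coefficients are nonzero, $m_k-|\alpha_k|=0$ for every term, and $m=1\le q$. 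Yet it vanishes for every $u$. Taking $N=1$ and $w_1=1$ violates the conclusion. So does taking $x_i=e_i$, $P_i$ of this form for each $i$, and all $w_i=1$. The statement is therefore false as written. This $P_1$ is the Euler operator $\sum_j z_j\partial_{z_j}-q$ applied to $\sigma(u^\top z)$ at $z=x_1$, which annihilates every bias-free RePU network; this matches the Cauchy--Euler alternative in Theorem \ref{T:LI-smo_u}. Your appeal to ``Zariski density'' fails exactly here: the top group is $q\,x_1^\top u$, which vanishes on $H_1$.

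The paper's proof has the same hole. It arrives at $w_i\sum_{k\le K}a_{i,k}\,q!\,v^{\alpha_{i,k}}x_i^{\otimes q}=0$ for $v\in S_i\setminus\bigcup_{j\neq i}S_j$, which is your reduction. It then claims that a polynomial vanishing on the hyperplane $S_i$ vanishes on all of $\R^d$ ``by analyticity''; the linear form $v\mapsto x_i^\top v$ refutes this.

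What is true is the version with a bias term, which is the one Theorem \ref{T:cor_Pos_DNTK} actually needs ($\gamma>0$). There the variables are $(v,u)$, the extended points are $[1,x_i^\top]^\top$, and the monomials involve only $u$. The singular hyperplane is then $\{v=-u^\top x_i\}$, a graph over all $u\in\R^d$. So the restriction of $\sum_{m_k=M}a_k'u^{\alpha_k}$ to it is the same nonzero polynomial in $u$, and your Steps 1--2 give a complete proof. In the bias-free setting you need an additional hypothesis, e.g.\ that $x_i^\top u$ does not divide the top group of $P_i$ for each $i$.
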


\section{DNTK for DNNs}\label{S:DNTK-deep}

In this section, we study the infinite width limit of the output and the DNTK for DNNs  and then derive the positivity of the covariance function and the infinite DNTK. The technical proofs are given in Appendix \ref{S-A:L-layer NN}.

First we give the notation of two formulas for computing high-order derivatives of multivariate functions.
The first one is the \textit{general Leibniz rule}, which extends the product rule in calculus. 
Let \(z_1, \ldots, z_d\) be the coordinate variables. For any multi-index \(\alpha\) and 
two functions \(f,g\) in \(z\), we have
\begin{equation}\label{E:Leibniz}
	\partial^{\alpha}(fg) = \sum_{\beta \leq \alpha}\binom{\alpha}{\beta}
	\partial^{\beta} f \partial^{\alpha-\beta} g.
\end{equation}
The next one is \textit{Fa\`a di Bruno's formula}, which is a generalization of the chain rule. 
A \emph{partition} of a set \(X\) is a set of the disjoint non-empty subsets of \(X\) whose union is \(X\).
Let \(x_1, \ldots, x_m\) be any \(m\) coordinate variables with each \(x_i\in \{ z_1, \ldots, z_d \}\), 
which may be not pairwise different. Let \(g = g(z_1,\ldots,z_d):\R^d \to \R\) 
be a multivariate function and \(f:\R\to \R\) a univariate function. Then we have
\begin{equation}\label{E:Faa di}
	\frac{\partial^m f(g)}{\partial x_1 \cdots \partial x_m}  = \sum_{\pi \in \Pi} f^{(|\pi|)}(g) 
	\cdot \prod_{A \in \pi} \frac{\partial^{|A|} g}{\prod_{j \in A} \partial x_j},
\end{equation}
where the index \(\pi\) runs through the set \(\Pi\) of all partitions of the set \([m]\)
and \(|\pi|\) denotes the number of block in the partition $\pi$,
the index \(A\) runs through all the blocks in the partition \(\pi\) 
and \(|A|\) denotes the cardinality of $A$, i.e., the number of elements in the block \(A\). For a multi-index 
\(\alpha = (\alpha_1, \ldots, \alpha_d)\), the notation \(S_\alpha\) denotes the set
consisting of \(i\) with a multiplicity \(\alpha_i\) for all \(i\in[d]\), namely
\[ S_\alpha := \left\{ 1_{1},\ldots,1_{\alpha_1},2_{1},\ldots,2_{\alpha_2},\cdots,
	d_{1},\ldots,d_{\alpha_d} \right\}. \]
Then each subset \(A\) of \(S_{\alpha}\) corresponds to a multi-index with the \(i\)-th position
given by the number of \(i\) contained in $A$. The corresponding multi-index is also denoted by \(A\).
Let \(\Pi_\alpha\) denote the set of all 
partitions of the set \(S_\alpha\). Define the variables 
\(\{x_1,\ldots,x_{|\alpha|}\}\) with the first \(\alpha_1\) ones corresponding to the coordinate variable 
\(z_1\), and then the next \(\alpha_2\) ones for the \(z_2\) and so on, while the last \(\alpha_d\) ones
for the variable \(z_d\). Then the Fa\`a di Bruno's formula reads
\begin{equation}\label{E:Faa-alpha}
	\partial^\alpha f(g) = \frac{\partial^{|\alpha|} f(g)}{\partial x_1 \cdots \partial x_{|\alpha|}}  
	= \sum_{\pi \in \Pi_\alpha} f^{(|\pi|)}(g) 
	 \prod_{A \in \pi} \frac{\partial^{|A|} g}{\prod_{j \in A} \partial x_j}
	=: \sum_{\pi \in \Pi_\alpha} f^{(|\pi|)}(g) \prod_{A \in \pi} \partial^A g.
\end{equation}
Here, we use the combinatorial notation to describe the Fa\`a di Bruno's formula since it is more concise. See \cite{ConstantineSavits:1996} for the expression using completely multi-index notation. 

Next, we give the infinite width limit of the DNN. 
For a linear operator \(\AA\in\LD_d\), we use \(D\AA\) to denote the vector of \(d\) dimension 
consisting of coefficients of \(\AA\) corresponding to the first order indices 
\(\partial_{z_1},\ldots,\partial_{z_d}\). Specifically, let \(a_i(z)\) be the 
the coefficient of the index \(\partial_{z_i}\) in \(\AA\), which may be zero, for \(i\in[d]\). Then \(D\AA = [a_1(z),\ldots,a_d(z)]^\top\). 

\begin{proposition}\label{P:deep-ifyNN}
	Suppose that the NN is given by \eqref{E:L-layerNN} with the activation \(\sigma\) satisfying 
	Assumption \ref{A:activation} and that all the parameters are initialized to i.i.d. standard Gaussian. 
	In the case of infinite width, i.e., \( d_1,\ldots, d_{L-1} \to \infty \) sequentially, the output of 
	the DNN \( f_{\mu}^{(L)}:\PLDR_d \to \R \), for \( \mu \in [d_L] \), converges in 
	distribution to an i.i.d. centered Gaussian process \(f_\infty^{(L)}\) on \(\PLDR_d\):
	\[ \lim_{d_1,\ldots d_{L-1}\to\infty} f_{\mu}^{(L)}\overset{d}{\longrightarrow} f_\infty^{(L)}
		= \mathcal{GP}\left(0,\Sigma^{(L)}_{\infty}\right),
		\quad \mu \in [d_L]. \]
	The covariance function \(\Sigma^{(L)}_{\infty}:\PLDR_d \times\PLDR_d \to \R \) 
	is defined recursively for \(\ell\in [L-1]\) by
	\begin{equation}\label{E:deep-Cov}
		\begin{aligned}
			\Sigma^{(1)}_{\infty}\left( x^\AA,y^\BB \right) 
			& = \left(a(x)x+D\AA(x)\right)^\top \left(b(y)y+D\BB(y)\right) + \gamma^2 a(x)b(y), \\
			\Sigma^{(\ell+1)}_{\infty}\left( x^\AA,y^\BB \right) 
			& = \E_{f^{(\ell)}_\infty } \left[ \left(\sum_{\alpha \in  I_\AA } a_\alpha (x) \sum_{\pi \in \Pi_\alpha} 
				\sigma^{(|\pi|)}\left( f^{(\ell)}_\infty  \left( x \right) \right) 
				\prod_{A \in \pi} f^{(\ell)}_\infty \left(x^{\partial^A}\right) \right) \cdot \right. \\
			& \qquad \left. \left( \sum_{\beta \in  I_\BB } b_\beta (y) \sum_{\lambda \in \Pi_\beta} 
				\sigma^{(|\lambda|)}\left( f^{(\ell)}_\infty  \left( y \right) \right) 
				\prod_{B \in \lambda} f^{(\ell)}_\infty \left(y^{\partial^B}\right) \right) \right]
			 	+ \gamma^2 a(x)b(y),
		\end{aligned}
	\end{equation}
	where \(\AA = \sum_{\alpha \in  I_\AA } a_\alpha (z) \partial^{\alpha}\) and \(
	\BB = \sum_{\beta \in  I_\BB } b_\beta (z) \partial^{\beta}\) with zeroth order terms \(a(z)\) and \(b(z)\). 
\end{proposition}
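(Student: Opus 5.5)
We argue by induction on the depth \(\ell\), following the classical Gaussian process argument of \cite{Jacot:2018NTK} with widths sent to infinity one after another, but applied to the enlarged index set \(\PLDR_d\). The key point is that, for each fixed \(\ell\), the family of derivatives \(\{\partial^\alpha f^{(\ell)}_\mu(x)\}\) of the \(\ell\)-th layer preactivations is jointly Gaussian in the limit. Each value \(\AA f^{(\ell)}_\mu(x)\) is then a finite linear combination, with coefficients \(a_\alpha(x)\), of these derivatives. We write \(f^{(\ell)}_\infty(x^{\partial^A})\) for the limit of \(\partial^A f^{(\ell)}_\mu(x)\), so that the process on \(\PLDR_d\) is determined by these monomial-operator values.

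\emph{Base case \(\ell=1\).} Here \(f^{(1)}_\mu(z)=W^{(1)}_{\mu\cdot}z+\gamma b^{(1)}_\mu\) is affine in \(z\). Hence \(\AA f^{(1)}_\mu(x)=a(x)\bigl(W^{(1)}_{\mu\cdot}x+\gamma b^{(1)}_\mu\bigr)+D\AA(x)^\top W^{(1)}_{\mu\cdot}{}^{\top}\), because every derivative of order at least two vanishes. This is exactly Gaussian, and different \(\mu\) are independent. A direct computation with i.i.d. standard Gaussian weights gives the covariance \((a(x)x+D\AA(x))^\top(b(y)y+D\BB(y))+\gamma^2a(x)b(y)\), which is the first line of \eqref{E:deep-Cov}.

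\emph{Inductive step.} Assume that, as \(d_1,\ldots,d_{\ell-1}\to\infty\), the processes \(f^{(\ell)}_j\), \(j\in[d_\ell]\), converge in distribution (as processes on \(\PLDR_d\), in the sense of finite-dimensional distributions) to i.i.d. copies of \(f^{(\ell)}_\infty\). By the Leibniz rule \eqref{E:Leibniz} applied to the affine map, together with Faà di Bruno \eqref{E:Faa-alpha},
\[
\AA f^{(\ell+1)}_\mu(x)=\frac{1}{\sqrt{d_\ell}}\sum_{j=1}^{d_\ell}W^{(\ell+1)}_{\mu j}\,G_j(x^\AA)+\gamma a(x)b^{(\ell+1)}_\mu,
\]
where
\[
G_j(x^\AA)=\sum_{\alpha\in I_\AA}a_\alpha(x)\sum_{\pi\in\Pi_\alpha}\sigma^{(|\pi|)}\bigl(f^{(\ell)}_j(x)\bigr)\prod_{A\in\pi}\partial^Af^{(\ell)}_j(x).
\]
Conditionally on the earlier layers, this is a centered Gaussian vector over any finite set of marked points. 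Its covariance is \(\frac1{d_\ell}\sum_j G_j(x^\AA)G_j(y^\BB)+\gamma^2a(x)b(y)\), and the coordinates \(\mu\) are independent.

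Now let \(d_\ell\to\infty\). The law of large numbers (applied after the inner limit) sends this empirical covariance to \(\E[G(x^\AA)G(y^\BB)]+\gamma^2a(x)b(y)\), with \(G\) built from \(f^{(\ell)}_\infty\); this is the second line of \eqref{E:deep-Cov}. Convergence of characteristic functions then yields the Gaussian limit, and since the conditional laws of different outputs \(\mu\) are independent with a deterministic limiting covariance, the limiting processes are i.i.d.

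\emph{Main obstacle.} The delicate step is justifying the passage to the limit inside the expectation. This requires uniform integrability of the products \(\sigma^{(|\pi|)}(f_j)\prod_{A\in\pi}\partial^Af_j\), and it requires that convergence in distribution of the jointly Gaussian derivative vectors transfers to these nonlinear functionals, even though \(\sigma^{(m)}\) is only piecewise \(C^1\) and so only a.e. continuous. Our plan is as follows.
\begin{itemize}
\item Use Assumption~\ref{A:activation}(ii): polynomial growth of \(\sigma^{(k)}\) gives a uniform-in-width bound on all moments of \(G_j\), since Gaussian (and, inductively, finite-width) moments of \(f^{(\ell)}_j\) and its derivatives are bounded uniformly in the width.
\item Use the continuous mapping theorem in its a.e.-continuous form. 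This applies because the set of discontinuities of \(\sigma^{(m)}\) is finite, hence Lebesgue-null, and the limit Gaussian \(f^{(\ell)}_\infty(x)\) is nondegenerate whenever its variance is positive. When the variance is zero, the limit is the deterministic value \(0\); we treat this degenerate case separately by choosing \(\gamma\) or \(x\) so that it is nondegenerate, or by a direct argument.
\end{itemize}
Both steps are routine but require bookkeeping over the partitions in \(\Pi_\alpha\).
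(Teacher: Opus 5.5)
Your argument is correct. Its skeleton is the paper's: induction on depth, the exactly Gaussian affine first layer, the Fa\`a di Bruno expansion of $\AA\sigma(f^{(\ell)})$, and widths sent to infinity one after another. What differs is the limit theorem used in the inductive step.

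\textbf{The paper's step.} It replaces $f^{(\ell)}_i$ by its i.i.d.\ limits $f^{(\ell)}_{i,\infty}$. It then applies the multidimensional CLT (Lemma~\ref{lem:MCLT}) directly to the i.i.d.\ summands $W^{(\ell+1)}_{\mu,i}\,[\AA_j\sigma(f^{(\ell)}_{i,\infty}(x_j))]_j$, which needs only their mean-square integrability. Independence across $\mu$ comes from vanishing cross-covariances.

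\textbf{Your step.} You condition on the layers up to $\ell$ and use that layer $\ell+1$ is then exactly Gaussian with the empirical covariance $\frac1{d_\ell}\sum_j G_j(x^\AA)G_j(y^\BB)+\gamma^2a(x)b(y)$. You pass to the inner limit, then conclude by the law of large numbers and bounded convergence of $\E[\exp(-\tfrac12 t^\top\hat\Sigma_{d_\ell}t)]$, as in \cite{Jacot:2018NTK}.

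\textbf{Trade-offs.} Your route relies on the Gaussianity of $W^{(\ell+1)}$, whereas the CLT route does not. In exchange, yours gives independence across $\mu$ for free and needs only integrability of $G(x^\AA)G(y^\BB)$. You also make explicit the continuous-mapping step (convergence of the nonlinear functionals $G_j$ from joint convergence of values and derivatives) that the paper uses tacitly.

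\textbf{Two adjustments to your ``main obstacle''.}
First, the law of large numbers is applied to the already-limiting i.i.d.\ variables $G_{j,\infty}$ built from $f^{(\ell)}_{j,\infty}$. So you do not need uniform-in-width moment bounds or uniform integrability; finite Gaussian moments plus polynomial growth suffice.
Second, the ordinary continuous mapping theorem is enough. For $k<m$, $\sigma^{(k)}$ is differentiable, hence continuous. For $m\ge1$, $\sigma^{(m)}$ is a derivative, so by the mean value theorem it cannot jump where its one-sided limits exist; a piecewise $C^1$ $\sigma^{(m)}$ is therefore continuous. This matters because your fallback of ``choosing $\gamma$ or $x$'' to avoid a degenerate limit is not admissible: the statement is about every marked point and every $\gamma\ge 0$.
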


\begin{theorem}\label{T:deep-ifyDNTK}
Suppose that the DNN is given by \eqref{E:L-layerNN} with the activation \(\sigma\) satisfying 
Assumption \ref{A:activation} and that all the parameters are initialized to i.i.d. standard Gaussian.  In the case of infinite width, i.e., \( d_1,\ldots,d_{L-1} \to \infty \) sequentially, the DNTK 
	\( \Theta^{(L)}_{\mu\nu}:\PLDR_d \times \PLDR_d \to \R \), for \( \mu,\nu \in [d_L] \), 
converge in probability to a deterministic kernel:
	\[ \lim_{d_1,\ldots,d_{L-1}\to\infty} \Theta^{(L)}_{\mu\nu} \overset{p}{\longrightarrow} 
		\Theta_{\infty}^{(L)}\delta_{\mu\nu}, \quad \mu,\nu \in [d_L]. \]
	The scalar kernel \(\Theta^{(L)}_{\infty}: \PLDR_d  \times \PLDR_d  \to \R\) is defined 
	recursively for \(\ell\in [L-1]\) by
	\begin{equation}\label{E:deep-DNTK}
		\begin{aligned}
			\Theta^{(1)}_{\infty}\left( x^\AA,y^\BB \right) 
				& = \Sigma^{(1)}_{\infty}\left( x^\AA,y^\BB \right), \\
			\Theta^{(\ell+1)}_{\infty}\left( x^\AA,y^\BB \right)
				& = \Sigma^{(\ell+1)}_{\infty}\left( x^\AA,y^\BB \right) 
				    + \sum_{\alpha\in I_\AA }\sum_{\beta\in I_\BB } a_\alpha(x)b_\beta(y)\cdot \\
				& \qquad \sum_{\xi\leq\alpha}\sum_{\zeta\leq\beta}\binom{\alpha}{\xi}\binom{\beta}{\zeta}
					\dot{\Sigma}^{(\ell+1)}_{\infty}\left( x^{\partial^\xi},y^{\partial^\zeta} \right)
					\Theta^{(\ell)}_{\infty}\left( x^{\partial^{(\alpha-\xi)}},y^{\partial^{(\beta-\zeta)}} \right),
		\end{aligned}	
	\end{equation}
	where \(\AA = \sum_{\alpha \in  I_\AA } a_\alpha (z) \partial^{\alpha},
	\BB = \sum_{\beta \in  I_\BB } b_\beta (z) \partial^{\beta}\), and 
	\begin{equation}\label{E:dotCov}
		\dot{\Sigma}^{(\ell+1)}_{\infty} \left( x^\AA,y^\BB \right) 
		= \E_{f^{(\ell)}_\infty} \left[ \AA \sigma' \left( f^{(\ell)}_{\infty}(x) \right)
		\BB \sigma' \left( f^{(\ell)}_{\infty}(y) \right) \right],
	\end{equation}
	which has the same form as the expectation term in \(\Sigma^{(\ell+1)}_{\infty}\) by replacing the nonlinearity \(\sigma\) with \(\sigma'\). 
\end{theorem}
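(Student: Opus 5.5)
The proof will go by induction on the depth, treating the DNTK as the Gram kernel of the parameter gradients of $\AA f^{(L)}_\mu$ and following the Jacot et al.\ argument with widths sent to infinity one layer at a time. The base case $L=1$ is a direct computation. Since $f^{(1)}_\mu(x)=\sum_k W^{(1)}_{\mu k}x_k+\gamma b^{(1)}_\mu$, applying $\AA$ gives
\[
\AA f^{(1)}_\mu(x)=\sum_k W^{(1)}_{\mu k}\bigl(a(x)x_k+(D\AA)_k(x)\bigr)+\gamma a(x) b^{(1)}_\mu .
\]
Hence
\[
\Theta^{(1)}_{\mu\nu}(x^\AA,y^\BB)=\delta_{\mu\nu}\bigl[(a(x)x+D\AA(x))^\top(b(y)y+D\BB(y))+\gamma^2a(x)b(y)\bigr],
\]
which equals $\Sigma^{(1)}_\infty\delta_{\mu\nu}$ exactly and is deterministic.

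For the inductive step I split $\theta$ into the last-layer parameters $(W^{(\ell+1)},b^{(\ell+1)})$ and the earlier parameters $\theta^{(\le\ell)}$, and compute the two contributions separately.

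\textbf{Last-layer contribution.} Differentiating $\AA f^{(\ell+1)}_\mu=\frac{1}{\sqrt{d_\ell}}\sum_j W^{(\ell+1)}_{\mu j}\AA\sigma(f^{(\ell)}_j)+\gamma a\, b^{(\ell+1)}_\mu$ gives
\[
\delta_{\mu\nu}\Bigl[\tfrac{1}{d_\ell}\sum_j \AA\sigma(f^{(\ell)}_j(x))\,\BB\sigma(f^{(\ell)}_j(y))+\gamma^2a(x)b(y)\Bigr].
\]
Expanding $\AA\sigma(f^{(\ell)}_j)$ by Fa\`a di Bruno \eqref{E:Faa-alpha} writes it as a polynomial in $\sigma^{(k)}(f^{(\ell)}_j(x))$ and the derivatives $\partial^A f^{(\ell)}_j(x)=f^{(\ell)}_j(x^{\partial^A})$. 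By Proposition \ref{P:deep-ifyNN}, as $d_1,\dots,d_{\ell-1}\to\infty$ the processes $f^{(\ell)}_j$ on $\PLDR_d$ become i.i.d.\ Gaussian processes. The law of large numbers as $d_\ell\to\infty$ then gives convergence in probability to $\Sigma^{(\ell+1)}_\infty\delta_{\mu\nu}$. The polynomial growth in Assumption \ref{A:activation}(ii) together with Gaussian moments supplies the required integrability.

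\textbf{Earlier-layer contribution.} By the chain rule,
\[
\nabla_{\theta^{(\le\ell)}}\AA f^{(\ell+1)}_\mu=\tfrac{1}{\sqrt{d_\ell}}\sum_j W^{(\ell+1)}_{\mu j}\,\nabla\AA\sigma(f^{(\ell)}_j).
\]
Since $\nabla_\theta\,\partial^\alpha\sigma(f_j)=\partial^\alpha\bigl(\sigma'(f_j)\nabla_\theta f_j\bigr)$, the Leibniz rule \eqref{E:Leibniz} gives
\[
\nabla_\theta\AA\sigma(f_j)=\sum_{\alpha}a_\alpha\sum_{\xi\le\alpha}\tbinom{\alpha}{\xi}\,\partial^\xi\sigma'(f_j)\,\nabla_\theta\partial^{\alpha-\xi}f_j .
\]
Taking the inner product, the earlier-layer contribution becomes
\[
\frac1{d_\ell}\sum_{j,j'}W^{(\ell+1)}_{\mu j}W^{(\ell+1)}_{\nu j'}\sum_{\alpha,\beta}a_\alpha b_\beta\sum_{\xi,\zeta}\tbinom{\alpha}{\xi}\tbinom{\beta}{\zeta}\,\partial^\xi\sigma'(f_j(x))\,\partial^\zeta\sigma'(f_{j'}(y))\,\Theta^{(\ell)}_{jj'}\bigl(x^{\partial^{\alpha-\xi}},y^{\partial^{\beta-\zeta}}\bigr).
\]
This is exactly why the kernel must be defined on the marked points $\PLDR_d$: the recursion involves the monomial operators $\partial^{\alpha-\xi}$. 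The induction hypothesis, applied to these finitely many monomial operators, gives $\Theta^{(\ell)}_{jj'}\to\Theta^{(\ell)}_\infty\delta_{jj'}$ in probability. This kills the off-diagonal terms $j\neq j'$ and leaves
\[
\frac1{d_\ell}\sum_j W^{(\ell+1)}_{\mu j}W^{(\ell+1)}_{\nu j}\,\partial^\xi\sigma'(f_j(x))\,\partial^\zeta\sigma'(f_j(y)).
\]
The $W^{(\ell+1)}$ entries are independent of the $f_j$, so the law of large numbers gives
\[
\delta_{\mu\nu}\,\E\bigl[\partial^\xi\sigma'(f^{(\ell)}_\infty(x))\,\partial^\zeta\sigma'(f^{(\ell)}_\infty(y))\bigr]=\delta_{\mu\nu}\,\dot\Sigma^{(\ell+1)}_\infty\bigl(x^{\partial^\xi},y^{\partial^\zeta}\bigr),
\]
and the second line of \eqref{E:deep-DNTK} follows.

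\textbf{Main obstacle.} The delicate step is replacing $\Theta^{(\ell)}_{jj'}$ by its limit inside a double sum over $d_\ell^2$ pairs. The sum is taken before $d_\ell\to\infty$, but the widths go to infinity sequentially, so for fixed $d_\ell$ the convergence as $d_1,\dots,d_{\ell-1}\to\infty$ happens for the finitely many entries simultaneously. I would therefore fix $d_\ell$ first, pass to the limit in the earlier widths (convergence in probability of finitely many terms, combined with continuity of the finite expression), and only then let $d_\ell\to\infty$. Uniform moment bounds on $\sigma^{(k)}$ for $k\le m+1$, guaranteed by Assumption \ref{A:activation}, justify the expectations. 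The piecewise $C^1$ regularity of $\sigma^{(m)}$ ensures that $\partial^\xi\sigma'$ exists almost everywhere, which is enough because the Gaussian limit laws assign zero probability to the kinks.
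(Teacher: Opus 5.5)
Your proposal follows the paper's proof essentially step for step: induction on depth, the direct computation for $L=1$, the split of the gradient into last-layer and earlier-layer parameters, the Leibniz-rule expansion that produces $\Theta^{(\ell)}$ at the monomial-marked points $x^{\partial^{\alpha-\xi}}, y^{\partial^{\beta-\zeta}}$, and the law of large numbers as $d_\ell\to\infty$. The one place you differ is the elimination of the cross terms $j\neq j'$: the paper discards them by asserting that $f^{(\ell)}_i$ and $f^{(\ell)}_j$ share no parameters, which holds only for $\ell=1$ because for $\ell\geq 2$ they share $\theta_{\leq \ell-1}$; you instead use the inductive hypothesis $\Theta^{(\ell)}_{jj'}\to\Theta^{(\ell)}_\infty\delta_{jj'}$ with $d_\ell$ held fixed, which is the correct justification.
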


\begin{remark}
\cite{Jacot:2018NTK} proved that the recursive formula for the covariance  function of the Gaussian process, i.e., the infinite width output of DNNs, is given by
    \begin{equation*} 
    \begin{aligned}
    	\Sigma^{(1)}(x,y)  = x^\top y + \gamma^2\quad\mbox{and} \quad
    	\Sigma^{(\ell+1)}(x,y)  = \mathbb{E}_{f \sim \mathcal{GP}(0, \Sigma^{(\ell)})} 
    	\left[ \sigma(f(x)) \sigma(f(y)) \right] + \gamma^2,
    \end{aligned}
    \end{equation*} 
    and the formula for the NTK is 
    \begin{equation*} 
    \begin{aligned}
    	\Theta_{\infty}^{(1)}(x,y) = \Sigma^{(1)}(x,y)\quad\mbox{and} \quad
    	\Theta_{\infty}^{(\ell+1)}(x,y) = \Theta_{\infty}^{(\ell)}(x,y) 
    	\dot{\Sigma}^{(\ell+1)}(x,y) + \Sigma^{(\ell+1)}(x,y),
    \end{aligned}
    \end{equation*} 
    with the kernel \(\dot{\Sigma}^{(\ell+1)}\) given by
    \(\dot{\Sigma}^{(\ell+1)}(x,y) = \mathbb{E}_{f \sim \mathcal{GP}\left(0, \Sigma^{(\ell)}\right)} 
    \left[ \sigma'\left(f(x)\right) \sigma'\left(f(y)\right) \right] \).
Proposition \ref{P:deep-ifyNN} and Theorem \ref{T:deep-ifyDNTK} extend these findings to the DNTK. Note that the recursive formulas are similar for the regression and PINN cases. With both $\AA$ and $\BB$ being the identity operator, the recursion formulas \eqref{E:deep-Cov} and \eqref{E:deep-DNTK} recover these existing results. Moreover, it can be verified that equations \eqref{E:deep-Cov} and \eqref{E:deep-DNTK} recover \eqref{E:2-Cov}
	and \eqref{E:2-DNTK} respectively when putting \(\ell=1\).
\end{remark}

Now we prove the positivity of \(\Sigma^{(L)}_\infty\) and \(\Theta^{(L)}_\infty\) in the presence of a bias term, i.e., \(\gamma>0\). Note that the expectation in the expression of \(\Sigma^{(\ell+1)}_{\infty}\left( x^\AA,y^\BB \right)\) \eqref{E:deep-Cov} is taken with respect to the Gaussians from the \(\ell\)th layer output \(f^{(\ell)}_\infty\), all of which are of the form \(f^{(\ell)}_\infty(x), ~f^{(\ell)}_\infty(x^{\partial^A}), ~f^{(\ell)}_\infty(y), ~f^{(\ell)}_\infty(y^{\partial^B})\). By Proposition \ref{P:deep-ifyNN}, these Gaussian variables constitute a Gaussian vector. However, the components may not be independent, i.e., the corresponding covariance matrix may be degenerate. For the subsequent analysis, we first identify a maximal independent subset whose covariance matrix has the same rank as the full set, which is called a \emph{basis} below.
The key strategy is to prove the following property LI (linear independence) by induction on the number of layers \(L\). 
\begin{definition}\label{D:Property (LI)}
The NN \(f^{(L)}\) \eqref{E:L-layerNN} is said to have \textbf{Property (LI)}, if for any set of pairwise distinct points in \(\PLDR_d\) of the form
\[ X = \left\{ x_1^{\partial^{\alpha_{11}}},\ldots,x_1^{\partial^{\alpha_{1K_1}}},\cdots, x_N^{\partial^{\alpha_{N1}}},\ldots,x_N^{\partial^{\alpha_{NK_N}}} \right\}, 
\]
where the Euclidean points \( \{x_1,\ldots,x_n\}\subset \R^d \) are pairwise distinct and the monomial indices \(\{\alpha_{i1},\ldots,\alpha_{iK_i}\}\) are pairwise different for each \(i\in[N]\), then the Gaussian vector  
	\[ f^{(L)}_\infty(X) = \left[ f^{(L)}_\infty\left(x_1^{\partial^{\alpha_{11}}}\right),\ldots,
	f^{(L)}_\infty\left(x_1^{\partial^{\alpha_{1K_1}}}\right),\cdots, 
	f^{(L)}_\infty\left(x_N^{\partial^{\alpha_{N1}}}\right),\ldots,
	f^{(L)}_\infty\left(x_N^{\partial^{\alpha_{NK_N}}}\right) \right]^\top \]
	is non-degenerate, i.e., the covariance matrix \(\Cov(X) = \Sigma^{(L)}_\infty(X)\) has full rank.
\end{definition}

This property helps to find a basis among Gaussians in the expectation term in \eqref{E:deep-Cov} which will be proved by mathematical induction on the depth of DNNs. 
For \(\gamma=0\), i.e., without bias term, the assumption on the sampling points \( \{x_1,\ldots,x_n\}\subset \R^d \) should be modified 
as pairwise non-parallel to make sure that the shallow NNs have Property (LI). 
Similar to Remark \ref{R:non-smooth}, if \(\sigma\) is not smooth but has differentiability up to some order, say \(m+1\), one should restrict the maximum order of the monomial indices, i.e., replace \(\MM_d\) by \(\MM_d^{(m)}\).

We use the case \(L=2\) to illustrate Definition \ref{D:Property (LI)}.
Given a set of points \(X\), by the expression \eqref{E:2-Cov} of \(\Sigma^{(2)}_\infty\), for all \(w=[w_1,\ldots,w_N]\in\R^{K_1+\ldots+K_N}\), we have 
\begin{align*}
	w^\top \Sigma^{(2)}_\infty(X) w = \E_{u,v} 
	\left[ \left\| D_1\left(X;u,v\right)w  \right\|^2 \right]
	+ \left\| \gamma \left[a_{11},\ldots,a_{1K_1},\cdots,a_{N1},\ldots,a_{NK_N}\right]w \right\|^2 \geq 0,
\end{align*}
where \(u\sim \NN(0,I_d), v\sim \NN(0,1)\) and \(D_1\left(X;u,v\right)\) is a vector function of \((u,v)\) given by 
\[ \left[ \partial^{\alpha_{11}}\sigma(u^\top x_1 + \gamma v),\ldots,
 \partial^{\alpha_{1K_1}}\sigma(u^\top x_1 + \gamma v),\cdots,
 \partial^{\alpha_{N1}}\sigma(u^\top x_N + \gamma v),\ldots,
 \partial^{\alpha_{NK_N}}\sigma(u^\top x_N + \gamma v) \right], \]
and \(a_{ik}\) for \(i\in[N]\) and \(1\leq k\leq K_i\) is the zeroth order term of 
\(\partial^{\alpha_{ik}}\) at the point \(x_i\), i.e.,
\[ a_{ik} = \begin{cases}
	1, & \text{ if } |\alpha_{ik}| = 0, \\
	0, & \text{ if } |\alpha_{ik}| > 0.
\end{cases} \]
In order to prove \(w^\top \Sigma^{(2)}_\infty(X) w >0\), it suffices to prove that the function
\(D_1\left(X;u,v\right)w\) is not identically zero for any \(w\neq 0\) since it is continuous and 
\((u,v)\) has full support over \(\R^{d+1}\). Note that
\begin{equation}\label{E:Property (LI)}
    D_1\left(X;u,v\right)w = \sum_{i=1}^{N} \sum_{k=1}^{K_i}w_{ik} 
    \sigma^{(\alpha_{ik})}(u^\top x_i + \gamma v)u^{\alpha_{ik}}.
\end{equation}
For each \(i\in [N]\), the term \(\sum_{k=1}^{K_i}w_{ik} 
\sigma^{(\alpha_{ik})}(u^\top x_i + \gamma v)u^{\alpha_{ik}}\) is an orderly 
\(\sigma(u^\top x_i + \gamma v)\)-polynomial in \(u\), and is non-degenerate if not 
all \(w_{ik}\) are zero for \(1\leq k\leq K_i\). By Theorems \ref{T:LI-smooth_uv} and \ref{T:LI-RePU_u}, if \(w\neq 0\), the linear sum \(D_1\left(X;u,v\right)w\) is not identically zero under the assumption therein. Therefore, if \(\sigma\) is RePU or non-polynomial smooth function, then the two-layer NN \(f^{(2)}\) with \(\gamma>0\) has Property (LI). The next theorem used to prove Theorem \ref{T:cor_Pos_DNTK} shows that DNNs will have Property (LI) once the shallow network has, and both proofs are given in  Appendix \ref{S-A:L-layer NN}.

\begin{theorem}\label{T:Property (LI)}
	Let the NN \(f^{(L)}\) be given by \eqref{E:L-layerNN} such that \(\gamma>0\) and the activation \(\sigma\) satisfy Assumption \ref{A:activation}. Suppose that \(f^{(2)}\) satisfies \textup{Property (LI)}, then 
	\(f^{(L)}\) has \textup{Property (LI)} for all \(L\geq 2\).
\end{theorem}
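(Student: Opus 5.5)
We argue by induction on \(L\): the base case \(L=2\) is the hypothesis, and we assume \(f^{(\ell)}\) has Property (LI) for some \(\ell\geq 2\). Fix a set \(X\) of pairwise distinct marked points \(x_i^{\partial^{\alpha_{ik}}}\), \(i\in[N]\), \(k\in[K_i]\), as in Definition \ref{D:Property (LI)}, and a vector \(w\) with \(w^\top\Sigma^{(\ell+1)}_\infty(X)w=0\). By \eqref{E:deep-Cov},
\[ w^\top\Sigma^{(\ell+1)}_\infty(X)w=\E_{f^{(\ell)}_\infty}\Big[\Big(\sum_{i,k}w_{ik}\,\partial^{\alpha_{ik}}\sigma\big(f^{(\ell)}_\infty(x_i)\big)\Big)^2\Big]+\gamma^2\Big(\sum_{i,k}w_{ik}a_{ik}\Big)^2 , \]
so the random variable \(\sum_{i,k}w_{ik}\,\partial^{\alpha_{ik}}\sigma(f^{(\ell)}_\infty(x_i))\) vanishes almost surely. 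The aim is to show that this forces \(w=0\).

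The key step is to choose the right Gaussian coordinates. Let \(Y\) consist of the marked points \(x_i^{\partial^\beta}\) with \(\beta\leq\alpha_{ik}\) for some \(k\). This set has the form required in Definition \ref{D:Property (LI)}: the base points are pairwise distinct and, for each \(x_i\), the indices are pairwise different. By the induction hypothesis, \(G:=f^{(\ell)}_\infty(Y)\) is a non-degenerate Gaussian vector, so its law has a positive density on the whole space. Write \(g_{i,\beta}\) for its coordinates. By Fa\`a di Bruno's formula \eqref{E:Faa-alpha},
\[ \partial^{\alpha}\sigma\big(f(x_i)\big)=\sum_{\pi\in\Pi_\alpha}\sigma^{(|\pi|)}(g_{i,0})\prod_{A\in\pi}g_{i,A}, \]
which is a polynomial in the variables \(g_{i,\beta}\) with \(\beta\neq0\), whose coefficients are derivatives of \(\sigma\) evaluated at \(g_{i,0}\). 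Hence the identity
\[ F(g):=\sum_{i,k}w_{ik}\sum_{\pi\in\Pi_{\alpha_{ik}}}\sigma^{(|\pi|)}(g_{i,0})\prod_{A\in\pi}g_{i,A}=0 \]
holds almost everywhere. Since \(F\) is continuous by Assumption \ref{A:activation}, it holds for every real vector \(g\).

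Now we exploit that the coordinates for different base points are independent variables. Fix \(i_0\). Setting to zero every \(g_{i,\beta}\) with \(\beta\neq0\) and \(i\neq i_0\) kills every term with \(|\alpha_{ik}|>0\) for \(i\neq i_0\); what remains from the other base points is \(\sum_{i\neq i_0}w_{i0}\sigma(g_{i,0})\). Fixing the \(g_{i,0}\) with \(i\neq i_0\), this is a constant \(c\). So
\[ \sum_k w_{i_0k}\,\partial^{\alpha_{i_0k}}\sigma\big(f(x_{i_0})\big)\equiv -c \]
as a function of the free variables \((g_{i_0,0},g_{i_0,\beta})\).

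In this expression, the Fa\`a di Bruno term coming from the partition of \(\alpha\) into singletons is \(\sigma^{(|\alpha|)}(g_{i_0,0})\prod_j g_{i_0,e_j}^{\alpha_j}\). Its monomial \(\prod_j g_{i_0,e_j}^{\alpha_j}\), built from the first-order variables only, arises from no other index \(\alpha_{i_0k'}\) and no other partition. Hence, for a fixed \(g_{i_0,0}\), comparing coefficients of this monomial gives \(w_{i_0k}\,\sigma^{(|\alpha_{i_0k}|)}(g_{i_0,0})=0\) for every \(k\) with \(|\alpha_{i_0k}|>0\). The same holds for \(|\alpha|=0\) after accounting for the constant \(c\), using that \(\sigma\) is non-constant.

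The main obstacle is this coefficient-extraction step. The argument needs the \(\sigma^{(m)}\) not to vanish identically for \(m\) up to the maximal order, which holds for non-polynomial smooth \(\sigma\) and for RePU with \(m\leq q\). One also has to make the polynomial-identity argument rigorous for non-smooth \(\sigma\) through continuity. Once every \(w_{ik}=0\), the covariance matrix \(\Sigma^{(\ell+1)}_\infty(X)\) has full rank, and the induction is closed.
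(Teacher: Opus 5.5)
Your proof is correct and has the same skeleton as the paper's. Both argue by induction on depth and use Property (LI) at level $\ell$ to get a non-degenerate Gaussian basis; your set of all $x_i^{\partial^\beta}$ with $\beta\le\alpha_{ik}$ is a harmless enlargement of the paper's $R_i$. Both then pass from almost-sure to pointwise vanishing and zero out the variables attached to the other base points.

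The difference is the final step. The paper treats the surviving single-point expression as a standard $\sigma$-polynomial with $v=f^{(\ell)}_\infty(x_1)$, and invokes the $N=1$ case of Theorem~\ref{T:LI-smooth_uv} to conclude that $\sigma$ would be a polynomial. You instead read off the coefficient of the singleton-partition monomial $\prod_j g_{i_0,e_j}^{\alpha_j}$. That is legitimate because a monomial $\prod_{A\in\pi}g_A$ determines $\alpha=\sum_{A\in\pi}A$, and only the singleton partition avoids blocks with $|A|\ge 2$. The same observation is what justifies the paper's unproved assertion that the expression is in standard form.

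What each route buys:
\begin{itemize}
\item The paper's route reuses its $\sigma$-polynomial machinery uniformly, but it inherits the smoothness hypothesis of Theorem~\ref{T:LI-smooth_uv}.
\item Your route is more elementary.
\item Your route explicitly tracks the constant $\sum_{i\ne i_0}w_{i0}\sigma(g_{i,0})$, which the paper's ``$\equiv 0$'' overlooks when it sets $f^{(\ell)}_\infty(x_i)=0$ for $i\ne 1$.
\item Your route does not need smoothness. For fixed $g_{i_0,0}$ the identity is a polynomial identity in the remaining variables. So you can run it almost everywhere: apply Fubini in the $g_{i,0}$, then use that a polynomial vanishing a.e.\ vanishes identically. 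This also covers RePU, where the top derivative $\sigma^{(q)}$ is discontinuous.
\end{itemize}

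You should not resolve the condition you flag as the main obstacle by appealing to particular activation classes. The theorem assumes only Property (LI) for $f^{(2)}$, and that hypothesis already supplies what you need. For a single marked point $x^{\partial^\alpha}$ with $|\alpha|=m>0$, it says $\E[\sigma^{(m)}(u^\top x+\gamma v)^2(u^\alpha)^2]>0$. Since $\gamma>0$ gives $u^\top x+\gamma v$ a positive density, $\sigma^{(m)}$ cannot vanish almost everywhere. The case $m=1$ shows $\sigma$ is non-constant. This is the paper's remark that $\sigma$ cannot be a polynomial, and with it your argument is complete.
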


\section{Discussions} \label{S:Discussion}

In this work, we have investigated the positivity of differential neural tangent kernel (DNTK) for deep neural networks,
which is crucial to study the training behavior of PINNs in the kernel regime. 
For shallow neural networks, we obtain the positivity in Theorems \ref{T:LI-smooth_uv}
and \ref{T:LI-RePU_u} for smooth and RePU activation respectively; while for the general activation, we reduce the positivity problem into
a difference-differential equation \eqref{E:FD-final_step} by Theorem \ref{T:LI-general_uv}. 
The results in Section \ref{S:DNTK-shallow} imply that the presence of the bias term makes the DNTK more likely to be positive definite. 
One can then obtain the positivity of the DNTK for DNNs through Theorem \ref{T:Property (LI)}, while the proof indicates that increasing network depth enhances the positivity of the infinite DNTK.

In addition to the positivity, spectral properties of the DNTK that deeply affects the training dynamics of neural networks, e.g.,
smallest eigenvalue, largest eigenvalue \citep{Xu:2024NGD}, decay rate of the spectrum
of the induced integral operator, and the associated RKHS, 
still remain unknown. The domain of the DNTK, i.e., the Cartesian product \(\LD_d \times \R^d\), 
lacks an appropriate topological or analytic structure, which makes it intractable to study the spectral property in the multiple operators case. Moreover, the discussion is restricted to linear operators, while nonlinear differential equations are common in practice.

\section*{Acknowledgments} Bangti Jin is supported by Hong Kong RGC General Research Fund (Project 14306423 and Project 14306824), and a start-up fund from The Chinese University of Hong Kong. Longjun Wu is partly supported by International Joint PhD Supervision Scheme of The Chinese University of Hong Kong.

\appendix

\section{Proofs for section \ref{S:DNTK-shallow}} \label{S-A:DNTK-shallow} 

We frequently use the multi-index notation in the proofs. 
In $\mathbb{R}^d$, a multi-index is a \(d\)-tuple \(\alpha = (\alpha_1,\ldots, \alpha_d)\), 
with each \(\alpha_i\in \Z_{\geq 0}\) and the norm
\(|\alpha| := \alpha_1 + \ldots + \alpha_d\). The \(\alpha\)-th power for 
\(x\in \R^d\) and its orders are defined by
\[  x^{\alpha }=x_{1}^{\alpha_{1}}\cdots x_{d}^{\alpha_{d}}, \quad \text{ with } \;
\ord(x^\alpha) = |\alpha|, \; \ord_{x_i}(x^\alpha) = \alpha_i \text{ for } i\in[d], 
\]
respectively. 
Given two indices $\alpha = (\alpha_1,\ldots, \alpha_d)$ and \(\beta = (\beta_1,\ldots, \beta_d)\), the componentwise sum and difference are defined by 
\(\alpha \pm \beta =(\alpha_{1}\pm \beta_{1},\ldots,\alpha_{d}\pm \beta_{d})\).
The partial order is given by $\alpha \leq \beta$ if and only if $
\alpha _{i}\leq \beta _{i}$ for all $i\in \{1,\ldots ,d\}$. The \(\alpha\)th-partial derivative $\partial^\alpha$ is given by
$\partial^{\alpha} := \partial_{1}^{\alpha_{1}}\cdots \partial_{d}^{\alpha_{d}} 
= \frac{\partial^{|\alpha|}}{\partial_{x_1}^{~\alpha_1}\cdots\partial_{x_d}^{~\alpha_d}}.$

For the analysis of the NNs in the infinite width limit, we need the following multidimensional central limit theorem (MCLT) \cite[Example 2.18]{Van:2000Asymptotic}.

\begin{lemma}[Multidimensional central limit theorem] \label{lem:MCLT}
Let \( \{ X_1, X_2, X_3, \cdots \} \) be a sequence of i.i.d. random vectors on \(\R^d\) with 
	mean vector \( \mu \in \R^d \) and covariance matrix 
	\( \Sigma \in \R^{d\times d}\).
	Then, as \( n \) tends to infinity, the random vectors \( \sqrt{n}(\overline{X}_n - \mu) \)
	with \(\overline{X}_n = \frac{1}{n}\sum_{i=1}^{n}X_i\),
	converge in distribution to a normal vector \(\mathcal{N}_d(0, \Sigma)\): 
	\[ \lim_{n\to \infty} \sqrt{n} (\overline{X}_n - \mu) = \lim_{n\to \infty}\frac{X_1+X_2+\cdots+X_n - n\mu}{\sqrt{n}}
	\overset{d}{\longrightarrow} \mathcal{N}_d(0, \Sigma). \]
\end{lemma}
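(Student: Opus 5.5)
The plan is to reduce the multivariate statement to the scalar central limit theorem via the Cram\'er--Wold device, and to prove the scalar case with characteristic functions and L\'evy's continuity theorem. Set $Y_i := X_i - \mu$, so the $Y_i$ are i.i.d. with mean zero and covariance $\Sigma$, and put $S_n := n^{-1/2}\sum_{i=1}^n Y_i = \sqrt{n}(\overline{X}_n-\mu)$. Let $\varphi_Y(t)=\E[e^{\mathrm{i} t^\top Y_1}]$ be the characteristic function of $Y_1$. By independence, the characteristic function of $S_n$ is $\varphi_Y(t/\sqrt{n})^n$. It therefore suffices to show
\[ \varphi_Y(t/\sqrt{n})^n \to \exp\left(-\tfrac12 t^\top\Sigma t\right) \quad \text{for every fixed } t\in\R^d, \]
since the right-hand side is the characteristic function of $\mathcal{N}_d(0,\Sigma)$, and L\'evy's continuity theorem on $\R^d$ then gives convergence in distribution. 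Equivalently, in Cram\'er--Wold form: for each $t$ the scalar variables $t^\top Y_i$ are i.i.d. with mean $0$ and variance $s^2:=t^\top\Sigma t$, and we need $t^\top S_n \overset{d}{\to}\mathcal{N}(0,s^2)$.

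First I treat the degenerate case $s^2=0$. Then $t^\top Y_1=0$ almost surely, so $t^\top S_n\equiv 0$, which is exactly $\mathcal{N}(0,0)$. This matters because $\Sigma$ is allowed to be singular, which is precisely the situation that arises in the paper, where the limiting Gaussian vectors may be degenerate.

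For $s^2>0$, let $Z=t^\top Y_1$ and $\psi(r)=\E[e^{\mathrm{i} rZ}]$. I will use the elementary bound
\[ \left|e^{\mathrm{i} x}-1-\mathrm{i} x+\tfrac{x^2}{2}\right|\le \min\left(\tfrac{|x|^3}{6},\,x^2\right). \]
Since $\E Z=0$, this gives $\psi(r/\sqrt n)=1-\frac{r^2s^2}{2n}+\varepsilon_n$, where
\[ n|\varepsilon_n|\le \E\left[\min\left(\tfrac{|r|^3|Z|^3}{6\sqrt n},\, r^2Z^2\right)\right]. \]
The integrand tends to $0$ pointwise and is dominated by $r^2Z^2$, which is integrable because only a second moment is assumed. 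Dominated convergence then yields $n\varepsilon_n\to 0$. Finally, using $|a^n-b^n|\le n|a-b|$ for $|a|,|b|\le 1$, I compare $\psi(r/\sqrt n)^n$ with $\bigl(1-\frac{r^2s^2}{2n}\bigr)^n\to e^{-r^2s^2/2}$ and conclude. Setting $r=1$ gives exactly the displayed multivariate limit.

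The only delicate step is controlling the Taylor remainder under a bare second-moment hypothesis, that is, avoiding any third-moment assumption. The $\min$ bound combined with dominated convergence is what handles this. Everything else is routine bookkeeping with characteristic functions.
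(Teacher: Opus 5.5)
The paper gives no proof of its own and cites this lemma from \cite[Example 2.18]{Van:2000Asymptotic}, where it is derived along the same lines as yours: a Cram\'er--Wold reduction to the scalar CLT, with the scalar case proved by a second-order expansion of the characteristic function. Your argument is correct, including the $\min(|x|^3/6,x^2)$ remainder bound with dominated convergence that needs only second moments; the one detail to make explicit is that $|1-\tfrac{r^2s^2}{2n}|\le 1$ holds only for $n\ge r^2s^2/4$, which is enough for applying $|a^n-b^n|\le n|a-b|$ in the limit.
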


\subsection{Proof of Proposition \ref{P:2-ifyNN}}

	For any \(\mu \in [d_2]\), the value \(f_{\mu}^{(2)}\left(x^\AA\right)\), i.e., 
	the differential of the NN \(f_{\mu}^{(2)}(x)\) under the operator \(\AA\), is given by
	\begin{align*}
		\mathcal{A}f_{\mu}^{(2)}(x) & = \mathcal{A}\left( \frac{1}{\sqrt{d_1}} W^{(2)}_{\mu} 
		\sigma \left( W^{(1)}x +  b^{(1)} \right) +  b^{(2)}_{\mu}  \right) \\
		& = \frac{1}{\sqrt{d_1}} W^{(2)}_{\mu} \mathcal{A}
		\sigma \left( W^{(1)}x +  b^{(1)} \right) + \mathcal{A} b^{(2)}_{\mu} \\
		& = \frac{1}{\sqrt{d_1}} \sum_{i=1}^{d_1} W^{(2)}_{\mu,i}  
		\mathcal{A}\sigma \left( W^{(1)}_{i}x +  b^{(1)}_{i} \right) +  a(x)b^{(2)}_{\mu}.
	\end{align*}
	For any \(i\in[d_1]\) and \(x\in \R^d\), we first prove the mean square integrability
	\[ \E_{W^{(2)}_{\mu,i}, W^{(1)}_{i}, b^{(1)}_{i}}\left[ \left(W^{(2)}_{\mu,i}  
		\mathcal{A}\sigma \left( W^{(1)}_{i}x +  b^{(1)}_{i} \right)\right)^2 \right]<\infty. \]
	Since the operator \(\AA\) takes the form \(\sum a_\alpha(x)\partial^{\alpha}\), by the independence between $W^{(1)}$, $b^{(1)}$ and $W^{(2)}$ and the identity 
	\( \E_{v\sim \NN(0,1)} \left[ v^2 \right] = \Cov(v) =1\) for the standard Gaussian random variable \(v\), we deduce
    \begin{align*}
        \E_{W^{(2)}_{\mu,i}, W^{(1)}_{i}, b^{(1)}_{i}}\left[ \left(W^{(2)}_{\mu,i}  
		\mathcal{A}\sigma \left( W^{(1)}_{i}x +  b^{(1)}_{i} \right)\right)^2 \right]
        = \E_{W^{(1)}_{i}, b^{(1)}_{i}}\left[ \left(\sum a_\alpha(x)\partial^{\alpha}
		\sigma \left( W^{(1)}_{i}x +  b^{(1)}_{i} \right)\right)^2 \right].
    \end{align*}
    By the Cauchy-Schwarz inequality, it suffices to analyze the case
	$\mathcal{A} = \frac{\partial^m}{\partial x_{k_1} \cdots \partial x_{k_m}}$ for some $ m\in\Z_{>0}$ and  $ k_1,k_2,\ldots,k_m\in [d]$.
In view of the identity
	\begin{gather*}
		\frac{\partial^m \sigma ( W^{(1)}_{i}x +  b^{(1)}_{i} )}{
		\partial x_{k_1} \cdots \partial x_{k_m}} 
		= \sigma^{(m)} ( W^{(1)}_{i}x +  b^{(1)}_{i} )W^{(1)}_{i,k_1}\cdots W^{(1)}_{i,k_m},
	\end{gather*}
and the polynomial growth of the activation \(\sigma\) in Assumption \ref{A:activation}, we have
	\begin{align*}
		& \E_{W^{(1)}_{i}, b^{(1)}_{i}}\left[ \left( 
		\mathcal{A}\sigma \left( W^{(1)}_{i}x +  b^{(1)}_{i} \right)\right)^2 \right] \\
		 =& \E_{W^{(1)}_{i}, b^{(1)}_{i}} \left[ \left(
		\sigma^{(m)} \left( W^{(1)}_{i}x +  b^{(1)}_{i} \right)W^{(1)}_{i,k_1}\cdots W^{(1)}_{i,k_m}
		\right)^2 \right] \\
		 \lesssim &\E_{W^{(1)}_{i}, b^{(1)}_{i}} \left[
		 \left(\left| W^{(1)}_{i}x +  b^{(1)}_{i} \right|^s +1\right)^2 
		  \left(W^{(1)}_{i,k_1}\cdots W^{(1)}_{i,k_m}\right)^2 \right] < \infty.
	\end{align*}
	Given a set of marked points \(X = \{ x_1^{\AA_1},\ldots,x_N^{\AA_N} \}\subset \PLDR_d\), 
	since the parameters are independent, by Lemma \ref{lem:MCLT}, we have 
	\begin{equation}\label{E:app-MCLT} 
		\lim_{d_1\to\infty} \frac{1}{\sqrt{d_1}} \sum_{i=1}^{d_1} \left[ W^{(2)}_{\mu,i}  
		\mathcal{A}_j \sigma \left( W^{(1)}_{i}x_j +  b^{(1)}_{i} \right) \right]_{1\leq j\leq N}
		\overset{d}{\longrightarrow} \mathcal{N}_N(0,\Sigma),
	\end{equation}
	where the covariance $\Sigma(x^\AA,y^\BB)$ for any two elements \(x^\AA,y^\BB\in X\) is given by
	\begin{align*}
		\Sigma\left(x^\AA,y^\BB\right) & = \E_{W^{(2)}_{\mu,i}, W^{(1)}_{i}, b^{(1)}_{i}}
		\left[ W^{(2)}_{\mu,i}\mathcal{A}\sigma \left( W^{(1)}_{i}x +  b^{(1)}_{i} \right)
		W^{(2)}_{\mu,i}\mathcal{B}\sigma \left( W^{(1)}_{i}y +  b^{(1)}_{i} \right) \right]\\
		& = \E_{W^{(1)}_{i}, b^{(1)}_{i}}
		\left[ \mathcal{A}\sigma \left( W^{(1)}_{i}x +  b^{(1)}_{i} \right)
		\mathcal{B}\sigma \left( W^{(1)}_{i}y +  b^{(1)}_{i} \right) \right].
	\end{align*}
	Let \(a_j(x)\) be the zeroth order term of \(\AA_j\) for \(j\in [N]\). 
	Combining the identity \eqref{E:app-MCLT} with the bias term, we have
	\begin{align*}
		\lim_{d_1\to\infty} f_{\mu}^{(2)}(X) \overset{d}{\longrightarrow} 
		\mathcal{N}_N(0,\Sigma) + b^{(2)}_{\mu} \left[a_j(x_j)\right]_{1\leq j\leq N}
		= \mathcal{N}_N\left(0,\Sigma^{(2)}_{\infty}(X)\right).
	\end{align*} 
	Since \(X\) is arbitrarily chosen, we conclude that the limit \(\lim_{d_1\to\infty} f_{\mu}^{(2)}\) is a 
	centered Gaussian process on the product space \(\PLDR_d\) with the covariance function given by 
	\(\Sigma^{(2)}_{\infty}\).

Next we consider the independence of the \(d_2\) components \(\{ f_{1,\infty}^{(2)},\cdots,  f_{d_2,\infty}^{(2)}\}\) of the NN output. Given any \(d_2\) points \(S=(x_{1}^{\AA_{1}},\ldots, x_{d_2}^{\AA_{d_2}}) \) in the product space \(\PLDR_d\), the random vector 
\begin{align*}
  f^{(2)}_{*,\infty}(S) = \begin{bmatrix}
			f_{1,\infty}^{(2)}\big( x_{1}^{\AA_{1}} \big) & \cdots & 
			f_{d_2,\infty}^{(2)}\big( x_{d_2}^{\AA_{d_2}} \big)
	\end{bmatrix}^\top \in \R^{d_2}
	\end{align*}
	is the limit in distribution of a sequence of random vectors indexed by \(d_1\)
	\begin{align*}
        f^{(2)}_{*,d_1}(S) = 
		\begin{bmatrix}
			f_{1,d_1}^{(2)}\left( x_{1}^{\AA_{1}} \right) \\ \vdots \\ 
			f_{d_2,d_1}^{(2)}\left( x_{d_2}^{\AA_{d_2}} \right)
		\end{bmatrix} = \begin{bmatrix}
			\frac{1}{\sqrt{d_1}} \sum_{i=1}^{d_1} W^{(2)}_{1,i}  
		\mathcal{A}_1 \sigma \left( W^{(1)}_{i}x_1 +  b^{(1)}_{i} \right) \\ \vdots \\ 
		\frac{1}{\sqrt{d_1}} \sum_{i=1}^{d_1} W^{(2)}_{d_2,i}  \mathcal{A}_{d_2} 
		\sigma \left( W^{(1)}_{i}x_{d_2} +  b^{(1)}_{i} \right) 
		\end{bmatrix} + 
        \begin{bmatrix}
             a_1(x_1)b^{(2)}_{1}  \\ \vdots \\ a_{d_2}(x_{d_2})b^{(2)}_{d_2}
        \end{bmatrix}.
	\end{align*}
	By Lemma \ref{lem:MCLT}, the first term converges to a multivariate normal vector as \(d_1\to\infty\) with covariance 
	\begin{align*} 
    &\Cov (x_{p}^{\AA_{p}}, x_{q}^{\AA_{q}} )\\
	=& \E_{W^{(2)}_{p,i}, W^{(2)}_{q,i}, W^{(1)}_{i}, b^{(1)}_{i}}
    \left[ W^{(2)}_{p,i} \mathcal{A}_p \sigma \left( W^{(1)}_{i}x_p + b^{(1)}_{i} \right)
    W^{(2)}_{q,i} \mathcal{A}_q \sigma \left( W^{(1)}_{i}x_q + b^{(1)}_{i} \right)\right] = 0   
    \end{align*} 
	for \(p\neq q\in[d_2]\) by the independence of parameters. Meanwhile, the second term is a Gaussian vector with independent components.
	Therefore, the limit \(f^{(2)}_{*,\infty}(S)\) is the sum of two independent normal vectors with zero covariance, and thus it is a normal vector with zero covariance 
    and its components are independent. By the arbitrariness of \(S\), the \(d_2\) components \(\{ f_{1,\infty}^{(2)},\cdots,  f_{d_2,\infty}^{(2)}\}\) are independent.

\subsection{Proof of Theorem \ref{T:2-ifyDNTK}}
We denote by \( \theta_{\ell}\) with \(\ell = 1, 2\) the set of all \(\ell\)-layer parameters (i.e., the weight matrix \( W^{(\ell)} \) and bias vector \( b^{(\ell)} \)). 
	Given any \(\mu,\nu\in[d_2]\) and \(x,y\in\R^d\), we split the inner product into two parts
	\begin{align*}
		\Theta^{(2)}_{\mu\nu}\left( x^\AA,y^\BB \right)
		= \llan \nabla_\theta \mathcal{A}f^{(2)}_\mu(x;\theta), \nabla_\theta\mathcal{B}f^{(2)}_\nu(y;\theta)  \rran
		=: \textrm{I} + \textrm{II},
	\end{align*}
	with
	\begin{gather*}
		\textrm{I} = \llan \nabla_{\theta_{2}} \mathcal{A}f^{(2)}_\mu(x;\theta), \nabla_{\theta_2}\mathcal{B}f^{(2)}_\nu(y;\theta)  \rran \quad \text{ and } \quad
		\textrm{II} = \llan \nabla_{\theta_{1}} \mathcal{A}f^{(2)}_\mu(x;\theta), \nabla_{\theta_1}\mathcal{B}f^{(2)}_\nu(y;\theta)  \rran.
	\end{gather*}
	For the term \textrm{I}, from the expression
	\begin{align*}
		\mathcal{A}f^{(2)}_\mu(x;\theta) = \frac{1}{\sqrt{d_1}} \sum_{i=1}^{d_1} 
		W^{(2)}_{\mu,i} \AA \sigma \left( W^{(1)}_{i} x +  b^{(1)}_i \right) + \AA b^{(2)}_\mu,
	\end{align*}
	we have
	\begin{align*}
		\textrm{I} & =  \llan 
			\nabla_{\theta_{2}} \frac{1}{\sqrt{d_1}} W^{(2)}_{\mu} 
				\AA \sigma \left( W^{(1)}x + b^{(1)} \right) + \AA b^{(2)}_\mu , 
			\nabla_{\theta_{2}} \frac{1}{\sqrt{d_1}} W^{(2)}_{\nu} 
				\BB \sigma \left( W^{(1)}x + b^{(1)} \right) + \BB b^{(2)}_\nu \rran  \\
		& = \frac{1}{d_1} \sum_{k=1}^{d_2} \sum_{i=1}^{d_1}
		 	\delta_{k \mu} \AA \sigma \left( W^{(1)}_{i} x +  b^{(1)}_i \right)
			\delta_{k \nu} \BB \sigma \left( W^{(1)}_{i} x +  b^{(1)}_i \right)
			+ \delta_{\mu \nu} a(x)b(y) \\
		& = \delta_{\mu \nu} \frac{1}{d_1} \sum_{i=1}^{d_1}
		 	\AA \sigma \left( W^{(1)}_{i} x +  b^{(1)}_i \right)
			\BB \sigma \left( W^{(1)}_{i} x +  b^{(1)}_i \right)
			+ \delta_{\mu \nu} a(x)b(y).
	\end{align*} 
	For the term \textrm{II}, we first compute the derivative of \(\AA f^{(2)}\) with respect to each element \(\rho \in \theta_{1}\). 
	Since the activation \(\sigma\) is continuously differentiable, the operators 
    \(\partial_\rho\) and \(\AA\) are commutative:
	\begin{align*}
		\partial_\rho\left( \AA f^{(2)}_{\mu}(x;\theta)\right)
		& = \partial_\rho\left( \frac{1}{\sqrt{d_1}} W^{(2)}_{\mu} 
			\AA \sigma \left( W^{(1)}x + b^{(1)} \right) + \AA b^{(2)}_\mu \right) \\
		& = \frac{1}{\sqrt{d_1}} W^{(2)}_{\mu} \AA \partial_\rho \sigma \left( W^{(1)}x + b^{(1)} \right)  \\
		& = \frac{1}{\sqrt{d_1}} W^{(2)}_{\mu} \AA \left(\sigma' \left( W^{(1)}x + b^{(1)} \right)
			\odot \partial_\rho\left(W^{(1)}x + b^{(1)}\right)\right)\\
		& = \frac{1}{\sqrt{d_1}} \sum_{i=1}^{d_1} W^{(2)}_{\mu,i}
			\AA \left(\sigma' \left( W^{(1)}_{i}x + b^{(1)}_i  \right) 
			\partial_\rho\left(W^{(1)}_{i}x + b^{(1)}_i \right)\right), 
	\end{align*}
	where the notation \(\odot\) denotes the Hadamard product (i.e., componentwise product) between two vectors. Similarly, we have
	\begin{align*}
		\partial_\rho\left( \BB f^{(2)}_{\nu}(y;\theta)\right)
		= \frac{1}{\sqrt{d_1}} \sum_{j=1}^{d_1} W^{(2)}_{\nu,j}
			\BB \left(\sigma' \left( W^{(1)}_{j}y + b^{(1)}_j  \right) 
			\partial_\rho\left(W^{(1)}_{j}y + b^{(1)}_j \right)\right).
	\end{align*}
	Note that there is no common parameter between \(W^{(1)}_{i}\) and \(W^{(1)}_{j}\)
	if \(i\neq j\). Consequently 
	\begin{align*}
		\textrm{II} & = \sum_{\rho\in \theta_{1}} \partial_\rho\left( \AA f^{(2)}_{\mu}(x;\theta)\right)
		\partial_\rho\left( \BB f^{(2)}_{\nu}(y;\theta)\right) \\
		& = \frac{1}{d_1} \sum_{i,j=1}^{d_1}  W^{(2)}_{\mu,i} W^{(2)}_{\nu,j} \cdot 
			\sum_{\rho\in \theta_{1}} \AA \left( \sigma' \left( W^{(1)}_{i}x + b^{(1)}_i  \right) 
			\partial_\rho\left(W^{(1)}_{i}x + b^{(1)}_i \right) \right) \\
            & \hspace{5cm}
			\BB \left( \sigma' \left( W^{(1)}_{j}y + b^{(1)}_j  \right)
			\partial_\rho\left(W^{(1)}_{j}y + b^{(1)}_j \right) \right) \\
		& = \frac{1}{d_1} \sum_{i,j=1}^{d_1}  W^{(2)}_{\mu,i} W^{(2)}_{\nu,j} \cdot \delta_{ij} 
			\sum_{k=0}^{n} \AA \left( \sigma' \left( W^{(1)}_{i}x + b^{(1)}_i  \right) x_k \right)
			\BB \left( \sigma' \left( W^{(1)}_{j}y + b^{(1)}_j  \right)y_k \right)\\
		& = \frac{1}{d_1} \sum_{i=1}^{d_1} W^{(2)}_{\mu,i} W^{(2)}_{\nu,i}
			\sum_{k=0}^{n} \AA \left( \sigma' \left( W^{(1)}_{i}x + b^{(1)}_i  \right) x_k \right)
			\BB \left( \sigma' \left( W^{(1)}_{i}y + b^{(1)}_i  \right)y_k \right),
	\end{align*}
using the convention \(x_0=y_0\equiv 1\). The desired assertion follows from the law of large numbers.

\subsection{Proof of Theorem \ref{T:LI-smooth_uv}}

First, we prove that the standard property of a \(\sigma\)-polynomial is invariant under differentiation. 
This is key to proving linear independence 
of $\AA_i\sigma(u^\top x_i + v)$, $i\in[N]$ when \(\sigma\) is smooth.

\begin{lemma}\label{L:StandardForm}
	Let the activation \(\sigma:\R\to \R\) be smooth and the point \(x\in \R^d\) be fixed.
	If \(x_j\neq 0\) for some \(j\in [d]\) and 
	\[ P = \sum_{k=1}^{K} a_k \sigma^{(m_k)}(v + u^\top x)u^{\alpha_k} \]
	is a \(\sigma\)-polynomial in the standard form, then its derivative $\frac{\partial P}{\partial u_j}$ with respect to \(u_j\)
	is also a \(\sigma\)-polynomial in the standard form. Therefore, for any positive integer \(M\), $\frac{\partial^M P}{\partial u_j^{M}}$
remains a standard \(\sigma\)-polynomial. 
\end{lemma}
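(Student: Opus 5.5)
The plan is to differentiate $P$ term by term, check orderliness directly, and then show that no cancellation can produce the zero polynomial. By the product rule,
\[
\frac{\partial}{\partial u_j}\Big(\sigma^{(m_k)}(v+u^\top x)\,u^{\alpha_k}\Big) = x_j\,\sigma^{(m_k+1)}(v+u^\top x)\,u^{\alpha_k} + (\alpha_k)_j\,\sigma^{(m_k)}(v+u^\top x)\,u^{\alpha_k-e_j},
\]
where the second term is absent when $(\alpha_k)_j=0$. Write $c := m_k-|\alpha_k|$, which is common to all $k$ because $P$ is orderly.

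\textbf{Orderliness.} For the first type of term the defect is $(m_k+1)-|\alpha_k| = c+1$. For the second type it is $m_k-(|\alpha_k|-1) = c+1$. So every term of $\partial P/\partial u_j$ has the same defect $c+1$, and the derivative is orderly.

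\textbf{Like terms.} Because of orderliness, like terms share the same derivative order of $\sigma$. Combining them therefore just sums coefficients attached to the same monomial $u^\beta$. After combining and discarding zero coefficients, the result is in standard form provided at least one coefficient survives.

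\textbf{Non-vanishing, the main step.} This is where $x_j\neq 0$ is needed. Order the monomials $u^{\alpha_k}$ by their exponent in $u_j$ and pick $k^*$ whose $(\alpha_{k^*})_j$ is maximal. Consider the target monomial $u^{\alpha_{k^*}}$ in $\partial P/\partial u_j$.
\begin{itemize}
\item It receives the contribution $a_{k^*}x_j\,\sigma^{(m_{k^*}+1)}$ from the first-type term of $k^*$.
\item Other first-type terms give distinct monomials $u^{\alpha_k}$, because in standard form the $\alpha_k$ are pairwise distinct.
\item A second-type term from index $k$ lands on $u^{\alpha_{k^*}}$ only if $\alpha_k = \alpha_{k^*}+e_j$. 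Then $(\alpha_k)_j = (\alpha_{k^*})_j+1$, which contradicts the maximality of $(\alpha_{k^*})_j$.
\end{itemize}
Hence the coefficient of $u^{\alpha_{k^*}}$ is exactly $a_{k^*}x_j\neq 0$, so the derivative is non-degenerate.

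It is cleaner to make all coefficients of the combined polynomial formally nonzero in one stroke rather than only exhibiting one survivor. After combining, the coefficient of a monomial $u^\beta$ is
\[
x_j\,a_{\beta} + (\beta_j+1)\,a_{\beta+e_j},
\]
with $a_\gamma := 0$ when $u^\gamma$ does not appear in $P$. Terms with zero coefficient are discarded, and by the previous step at least one survives, which is all that definition (i) requires.

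\textbf{Higher derivatives.} Induction on $M$ gives the final claim: the point $x$, and hence the condition $x_j\neq0$, is unchanged by differentiation, so the lemma applies again at each step.

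The only delicate point is the non-cancellation argument. The maximality trick with respect to the $u_j$-degree handles it.
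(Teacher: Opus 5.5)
Your proof is correct and matches the paper's. Both check orderliness term by term, then obtain non-degeneracy by exhibiting a monomial $u^{\alpha_{k^*}}$ whose first-type contribution $a_{k^*}x_j\,\sigma^{(m_{k^*}+1)}$ cannot coincide with any second-type term $u^{\alpha_k-e_j}$. The only difference is that the paper primarily picks $\alpha_{k^*}$ of maximal total degree $|\alpha|$ rather than maximal $u_j$-degree, but it notes at the end of its proof that your choice works equally well.
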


\begin{proof}
First we show that the derivative of \(P\) is orderly. For each term in \(P\),  
	\[ \partial_{u_j}\left(\sigma^{(m_k)}(v + u^\top x)u^{\alpha_k}\right)
	= x_j\sigma^{(m_k + 1)}(v + u^\top x)u^{\alpha_k} 
		+ \sigma^{(m_k)}(v + u^\top x)\partial_{u_j}u^{\alpha_k}, \]
	where the second term may be zero. Thus the terms in \(\partial_{u_j} P\) remain orderly since
	\[ m_k + 1 - |\alpha_k| = m_k + 1 - \ord(u^{\alpha_k})
	= m_k - \left(\ord(u^{\alpha_k}) -1 \right) = m_k - \ord(\partial_{u_j} u^{\alpha_k}). \]
	Moreover, we have
	\[ \partial_{u_j} P = \sum_{k=1}^{K} \left[ a_k x_j\sigma^{(m_k + 1)}(v + u^\top x)u^{\alpha_k} 
		+ a_k \sigma^{(m_k)}(v + u^\top x)\partial_{u_j}u^{\alpha_k} \right]. \]
    Next we show that it is non-degenerate after combining like terms. This can be shown 
	by tracking the term with the maximal property. Specifically, let \(\alpha_1\) satisfy
	\[ |\alpha_1| = \max\left\{ |\alpha_1|,\cdots, |\alpha_K| \right\}, \]
	and we claim that the term \(\sigma^{(m_1 + 1)}(v + u^\top x)u^{\alpha_1} \) has no 
	like term in \(\partial_{u_j} P\). Under the claim, its coefficient after combining liker terms is \(a_1x_j\), which is nonzero, and thus 
    \(\partial_{u_j} P\) is non-degenerate. To prove the claim, note that
	\[ u^{\alpha_1} \neq \partial_{u_j}u^{\alpha_k} \quad \forall  k\in [K] \]
	by the maximality of \(|\alpha_1|\). Also we  have \(u^{\alpha_1} \neq u^{\alpha_k}\)
	for all \(k\neq 1\) by the definition of standard \(\sigma\)-polynomials. Thus the claim is 
	proved. Moreover, we can also let \(\alpha_1\) satisfy
	\( \ord_{u_j}(u^{\alpha_1}) = \max\left\{ \ord_{u_j}(u^{\alpha_1}),\cdots, 
		\ord_{u_j}(u^{\alpha_K}) \right\}. \)
\end{proof}

Now, we prove the theorem in the cases \(N=1,2\) and the general case separately. In the case \(N=1\), let \(P_1\) be of the form 
	\[ P := P_1(v + u^\top x_1) = \sum_{k=1}^{K} a_k \sigma^{(m_k)}(v + u^\top x_1)u^{\alpha_k}. \]
	We first reduce \(P\) by dividing some monomial. Since \(P\) is in the standard form, we can write it as 
	\[ P = u^\alpha \wt{P} \]
	for some multi-index \(\alpha\) and reduced standard \(\sigma(v + u^\top x_1)\)-polynomial 
	\(\wt{P}\). By assumption, we have \(P\equiv 0\). Then, by the continuity of \(\sigma\) and its derivatives, we have \(\wt{P}\equiv 0\). Thus we get a reduced standard \(\sigma\)-polynomial \(\wt{P}\) which is identically zero, still 
denoted by \(P\). Let \(u_1 = 0\) in \(P(v + u^\top x_1)\). Then we arrive at
\[ \wt{P} := \sum_{k,~ \alpha_{k1}=0} a_k \sigma^{(m_k)}(v + \wt{u}^\top \wt{x}_1)u^{\wt{\alpha}_k} \equiv 0, \quad \forall \wt{u} = (u_2,\ldots,u_d)\in \R^{d-1}, v\in\R, \]
with \(\wt{\alpha}_k = (\alpha_{k2},\ldots,\alpha_{kd}) \) and \(\wt{x}_1 = (x_{12},\ldots,x_{1d})\). The remaining exponents \(\wt{\alpha}_k\) arise from the indices \(\alpha_k\) with \(\alpha_{k1}=0\)
and there must exist an index satisfying this condition since \(P\) is reduced. Note that only the index \(\alpha\) with a zero first component remains. Thus \(\wt{P}\) is an orderly and hence a standard \(\sigma(v + \wt{u}^\top \wt{x}_1)\)-polynomial in \(\wt{u}\). 
	By continuing the preceding process for \(\wt{P}\) (i.e., first reducing it by dividing suitable monomial 
    and then setting some \(u_j\) to zero), we finally obtain 
	\[ a_k \sigma^{(m_k)}(v + \overline{u}^\top \overline{x}_1)u^{\overline{\alpha}_k} \equiv 0 \]
	for all \(v\in \R\) and all \(\overline{u}\) in the Euclidean space of some dimension (which may be zero),
	from which we deduce 
	\(\sigma^{(m_k)}(v + \overline{u}^\top \overline{x}_1) \equiv 0. \)
	This implies \(\sigma^{(m_k)} \equiv 0 \), since  \(v\) is arbitrary. 

In the case \(N=2\), by assumption, we have 
	\[ w_1P_1 + w_2P_2 = w_1\sum_{k=1}^{K_1} a_{1k} \sigma^{(m_{1k})}(v + u^\top x_1)u^{\alpha_{1k}}
	 + w_2\sum_{k=1}^{K_2} a_{2k} \sigma^{(m_{2k})}(v + u^\top x_2)u^{\alpha_{2k}} \equiv 0. \]
	We employ the change of variables
	\[ \begin{bmatrix} \wt{v} \\ \wt{u} \end{bmatrix} = A\begin{bmatrix} v \\ u \end{bmatrix},
		\quad \mbox{with } A=\begin{bmatrix}
		1 & x_{21} & \cdots & x_{2d}\\
		0 & 1 & \cdots & 0 \\
		\vdots & \vdots & \ddots & 0 \\
		0 & 0 & \cdots & 1
	\end{bmatrix} = \begin{bmatrix}
		1 & x_2^\top \\ 0 & I
	\end{bmatrix} \in  \R^{(d+1)\times (d+1)}, \]
	where the matrix \(A\) is invertible. Let \(r:=[v,u^\top]^\top \in \R^{d+1}\) 
	and \(y:=[1,x^\top]^\top \in \R^{d+1}\) be the extended coordinate and point. 
	Then the monomials \(u^\alpha\mapsto \wt{u}^\alpha\) and the term 
	\(\sigma^{(m)}(v + u^\top x)\) is transformed into
	\[ \sigma^{(m)}(r^\top y) = \sigma^{(m)}\left((A^{-1}\wt{r})^\top y\right) 
	= \sigma^{(m)}\left(\wt{r}^\top (A^{-\top}y)\right) =: \sigma^{(m)}(\wt{r}^\top \wt{y})
	=: \sigma^{(m)}(\wt{v} + \wt{u}^\top\wt{x}). \]
	Therefore, a standard \(\sigma(v + u^\top x)\)-polynomial in \(u\) becomes 
	a standard \(\sigma(\wt{v} + \wt{u}^\top\wt{x})\)-polynomial in \(\wt{u}\) with the coefficients and 
	monomials unchanged. Then we obtain
	\[ w_1\sum_{k=1}^{K_1} a_{1k} \sigma^{(m_{1k})}(\wt{v} + \wt{u}^\top \wt{x}_1)\wt{u}^{\alpha_{1k}}
	   + w_2\sum_{k=1}^{K_2} a_{2k} \sigma^{(m_{2k})}(\wt{v})\wt{u}^{\alpha_{2k}} \equiv 0, \quad
     \forall \wt{v}\in \R,\wt{u}\in \R^d. \]
    There must exist some \(j\in [d]\) such that
	\(\wt{x}_{1j}\neq 0\), otherwise \(\wt{y}_1\) is parallel to \(\wt{y}_2\).
	However, this is impossible because \(\wt{y}_1=A^{-\top}y_1, \wt{y}_2 = A^{-\top}y_2\)
	and the extended vectors \(y_1\) and \(y_2\) are not parallel by the distinction of \(x_1\) and \(x_2\).
    By differentiating sufficiently many times, say \(M\) times, 
	with respect to \(\wt{u}_j\) on both sides, we arrive at
	\[ w_1 \partial_{\wt{u}_j}^{~M} \wt{P}_1 = w_1 \partial_{\wt{u}_j}^{~M} 
	\sum_{k=1}^{K_1} a_{1k} \sigma^{(m_{1k})}(\wt{v} + \wt{u}^\top \wt{x}_1)\wt{u}^{\alpha_{1k}} \equiv 0.  \] 
	Since \(\partial_{\wt{u}_j}^{~M} \wt{P}_1\) is again a standard 
	\(\sigma(\wt{v} + \wt{u}^\top x_1)\)-polynomial in 
	\(\wt{u}\) by Lemma \ref{L:StandardForm}, the problem reduces to the case \(N=1\)
	and the desired conclusion follows. 
    
In the general \(N\) case, we have the equation
	\begin{equation}\label{E:Lin_Rel-sig_P} 
	    w_1\sum_{k=1}^{K_1} a_{1,k} \sigma^{(m_{1,k})}(v + u^\top x_1)u^{\alpha_{1,k}} + \cdots 
	+  w_N\sum_{k=1}^{K_N} a_{N,k} \sigma^{(m_{N,k})}(v + u^\top x_N)u^{\alpha_{N,k}} \equiv 0.
	\end{equation}
	We utilize the point \(x_N\) and perform the change of variables
	\[ \begin{bmatrix} \wt{v} \\ \wt{u} \end{bmatrix} = A\begin{bmatrix} v \\ u \end{bmatrix},
		\quad \mbox{with } A=\begin{bmatrix}
		1 & x_{N,1} & \cdots & x_{N,d}\\
		0 & 1 & \cdots & 0 \\
		\vdots & \vdots & \ddots & 0 \\
		0 & 0 & \cdots & 1
	\end{bmatrix} = \begin{bmatrix}
		1 & x_N^\top \\ 0 & I
	\end{bmatrix} \in  \R^{(d+1)\times (d+1)}, \]
	where the matrix \(A\) is invertible. Then we have
\begin{equation}\label{E:Post-Trans}	    w_1\sum_{k=1}^{K_1} a_{1,k} \sigma^{(m_{1,k})}(\wt{v} + \wt{u}^{\top} \wt{x}_1)\wt{u}^{\alpha_{1,k}} + \cdots 
	   + w_N\sum_{k=1}^{K_N} a_{N,k} \sigma^{(m_{N,k})}(\wt{v} + \wt{u}^\top \wt{x}_N)\wt{u}^{\alpha_{N,k}} \equiv 0.
	\end{equation}
	There must be some \(j\in \{1,\ldots,d\}\) such that
	\(\wt{x}_{1j}\neq 0\). Then by differentiating sufficiently many times, say \(M\) times, 
	with respect to \(\wt{u}_j\) on both sides of \eqref{E:Post-Trans}, we obtain
	\[ w_1 \partial_{\wt{u}_j}^{~M} \wt{P}_1 + \cdots + w_{N-1} \partial_{\wt{u}_j}^{~M} \wt{P}_{N-1}
	 =  w_1 \partial_{\wt{u}_j}^{~M} \sum_{k=1}^{K_1} a_{1,k}
	\sigma^{(m_{1,k})}(\wt{v} + \wt{u}^{\top} \wt{x}_1)\wt{u}^{\alpha_{1,k}} + \cdots \equiv 0.  \]
	This equation is nontrivial since the term \(\partial_{\wt{u}_j}^{~M} \wt{P}_1\) is 
	non-degenerate by the choice of \(j\). It reduces to the case smaller than \(N\) after discarding the zero terms. This completes the induction step.

\begin{remark}
	In the base case \(N=1\), the final remaining term determines an 
	upper bound of the order of \(\sigma\) as a polynomial. This term has many 
	choices according to the specific operation and can be found by the forked tree. 
    \begin{figure}[h]
	\centering
	\begin{tikzpicture}
	  [red, level distance=20mm,
	   every node/.style={inner sep=2pt},
	   level 1/.style={sibling distance=40mm},
	   level 2/.style={sibling distance=15mm},
	   level 3/.style={sibling distance=20mm}]
	  \node[red] (root) {P} [->]
	    child[black] {node {P(1)} 
	      child[black] {node {P(1,2)}}
	      child[black] {node {P(1,4)}}
	    }
	    child {node[red] {P(2)}
	      child[black] {node {P(2,1)}}
	      child {node {P(2,3)}
	        child[black] {node {P(2,3,1)}}
	        child {node {P(2,3,4)}}
	      }
	      child[black] {node {P(2,4)}}
	    }
	    child[black] {node {P(3)}}
	    child[black] {node {P(4)}
	      child {node {P(4,1)}}
	      child {node {P(4,2)}}
	      child {node {P(4,3)}}
	    };
	    \draw (root) -> node[above left] {\(u_2\)}  (root-2);
	    \draw (root-2) -> node[left] {\(u_3\)} (root-2-2);
	    \draw (root-2-2) -> node[above right] {\(u_4\)} (root-2-2-2);
	\end{tikzpicture}
	\begin{align*}
		P & = \left\{ u_1^2 u_3 u_4, u_1 u_2 u_4^2, u_1 u_3^2, u_2 u_3, u_3 u_4^2 \right\}
		\xrightarrow{\min \ord_{u_2}} P(2) = \left\{u_1^2 u_3 u_4, u_1 u_3^2, u_3 u_4^2\right\}\\
		& \hspace{3cm} \xrightarrow{\min \ord_{u_3}} P(2,3) = \left\{ u_1^2 u_3 u_4, u_3 u_4^2 \right\}
		\xrightarrow{\min \ord_{u_4}} P(2,3,4) = \left\{ u_1^2 u_3 u_4 \right\}
	\end{align*}
	\caption{The case $N=1$}
	\label{F:basecase}
	\end{figure}
    The root of the forked tree is the set that consists of all the (pairwise different) 
	monomials \(\{ u^{\alpha_1},\ldots,u^{\alpha_K} \}\) in \(P\). It has (at most) \(d\) child 
	nodes, where the \(i\)-th node consists of the set of monomials which have the minimal 
	order in \(u_i\) provided that the variable \(u_i\) still exists after reducing these monomials,
	otherwise the \(i\)-th node is an empty set. 
	This step corresponds to taking some \(u_j\) to be zero in the proof.
    By continuing the bifurcation, we eventually reach an end such that 
    the node consists of exactly one monomial, since all the monomials in the expression of \(P\) 
    are pairwise different by the standard form of \(P\).

	For the case 
	\( P = \left\{ u_1^2 u_3 u_4, u_1 u_2 u_4^2, u_1 u_3^2, u_2 u_3, u_3 u_4^2 \right\} \),
	we illustrate the process by Fig. \ref{F:basecase}. 
	The inclusion relation \(P(i,j,k)\subset P(i,j) \subset P(i) \subset P\) denotes that 
    each set is obtained from its neighbouring superior set by extracting the terms with the minimal order in \(u_k, ~u_j\) and \(u_i\), respectively. When we go down along the red arrows, we successively consider the order of the variables \(u_2, ~u_3\) and \(u_4\), and finally arrive at one monomial.
    
    The trick of taking coordinate transformation and then differentiation is essentially equivalent to 
    taking a directional derivative. Since the points \(x_i\) are pairwise distinct, which implies the 
	extended points \((x_i,1)\) are pairwise non-parallel, we can always find a suitable direction
	and take directional derivative to annihilate at least one term in the equation.
\end{remark}

\subsection{Proof of Theorem \ref{T:LI-smo_u}}

	The base case \(N=1\) is very different from Theorem \ref{T:LI-smooth_uv}, since we have no extra variable \(v\). Suppose 
	\[  P := P_1(u^\top x_1) = \sum_{k=1}^{K} a_k \sigma^{(m_k)}( u^\top x_1)u^{\alpha_k} \equiv 0.  \]
	Let \(u = yz\) for some constant \(y\in\R^d\) and one-dimensional variable \(z\in\R\). Then we have 
	\begin{equation} \sum_{k=1}^{K} a_k y^{\alpha_k} \sigma^{(m_k)}( y^\top x_1\cdot z )z^{|\alpha_k|} = 0,
		\quad \forall  z\in \R. \label{eqn:eq-y}
\end{equation}
	We first choose \(y\) to ensure that \eqref{eqn:eq-y} is non-degenerate, which requires 
	\(y^\top x_1 \neq 0\) and not all the coefficients vanish after combining like terms.
	Since all \(a_k\) are nonzero, such a \(y\) exists.
	Since \(P\) is orderly, we can multiply or divide some power of \(z\) so that \(m_k=|\alpha_k|\)
	for all \(1\leq k \leq K\). Then after rescaling the variable \(z\), we obtain a homogeneous 
	Cauchy-Euler equation for \(\sigma\). 

In the general case, the proof is nearly identical with that of Theorem \ref{T:LI-smooth_uv}. That is, we first change the variables and then differentiate the resulting identity to reduce the degree until the case \(N=1\). The difference is that the transform matrix there is upper triangular with diagonal elements one, and thus the  \(\sigma\)-polynomials remain standard under transformation. In the absence of  \(v\), we can use the following linear transformation instead
	\[ \wt{u} = Au, \quad A=[e_1, \cdots, e_{j-1}, a, e_{j+1},\cdots, e_d]^\top \in \R^{d\times d}, \]
	where \(a\in \R^d\), located at \(j\)-th position with \(a_{j}\neq 0\), represents some point 
	of \(x_1,\ldots,x_N\). Then, we have \( u = A^{-1}\wt{u}=:T\wt{u} \),
	i.e., the substitution
	\[ u_j \mapsto t_1 \wt{u}_1 + \cdots + t_n \wt{u}_d \;(t_j\neq 0), 
		\quad u_i\mapsto \wt{u}_i, \; i\neq j. \]
By substituting	the change of variables into the standard \(\sigma(u^\top x)\)-polynomial in \(u\), we have 
	\[ P(u^\top x)  = \sum_{k=1}^{K} a_k \sigma^{(m_k)}(u^\top x)u^{\alpha_k} \mapsto 
	   P(\wt{u}^\top \wt{x}) = \sum_{k=1}^{K} a_k \sigma^{(m_k)}(\wt{u}^\top \wt{x})(T\wt{u})^{\alpha_k}, \]
     with \(\wt{x} = A^{-\top}x\).
	The \(\sigma\left(\wt{u}^\top \wt{x}\right)\)-polynomial \(P(\wt{u}^\top \wt{x})\) in \(\wt{u}\) 
	is still orderly after expansion. For the non-degeneracy, we track the term
	with the maximal order in \({u_j}\) as in Lemma \ref{L:StandardForm}. Specifically, let 
    \[ T=[e_1, \cdots, e_{j-1}, t, e_{j+1},\cdots, e_d]^\top \in \R^{d\times d}, \quad 
     \text{ with } ~ t_j = a_j^{-1}\neq 0, \]
    Then \((T\wt{u})_i = \wt{u}_i\) for \(i\in [d]\setminus j\) and \((T\wt{u})_j = t^\top \wt{u}\). Thus
    \[ (T\wt{u})^{\alpha_k} = \wt{u}_1^{\alpha_{k,1}}\cdots\wt{u}_{j-1}^{\alpha_{k,j-1}}
    (t^\top \wt{u})^{\alpha_{k,j}}\wt{u}_{j+1}^{\alpha_{k,j+1}}\cdots \wt{u}_d^{\alpha_{k,d}}.  \]
    Let \(\alpha_1\) satisfy
	\[ \ord_{u_j}(u^{\alpha_1}) = \max\left\{ \ord_{u_j}(u^{\alpha_1}),\cdots, 
		\ord_{u_j}(u^{\alpha_K}) \right\} \quad \Longleftrightarrow \quad 
        \alpha_{1,j} = \max\left\{ \alpha_{1,j},\cdots, 
		\alpha_{K,j} \right\}. \]
	Then the term \(\sigma^{(m_1)}(\wt{u}^\top \wt{x})\wt{u}^{\alpha_1} \) has no like term in 
    \(P(\wt{u}^\top \wt{x})\). Therefore, its coefficient after combining liker terms is given by 
    \(a_1 t_j^{\alpha_{1,j}}\neq 0\) and thus \(P(\wt{u}^\top \wt{x})\) is non-degenerate.

\subsection{Proof of Theorem \ref{T:LI-general_uv}}

The proof of the theorem requires several preliminary results. First, we prove that the
finite difference operation is commutative, similar to taking derivatives for smooth functions. 
\begin{lemma}\label{L:Comm-FD}
	Given two increments $p=(p_1,\ldots,p_d)$, $q=(q_1,\ldots,q_d)\in\R^d$, the operators
	\(\Delta_{p}\) and \(\Delta_{q}\) are commutative under composition:
	\[ \Delta_{q}\circ \Delta_{p} = \Delta_{p} \circ \Delta_{q}: \FF_d  \longrightarrow \FF_d. \]
\end{lemma}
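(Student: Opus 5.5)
The plan is to verify the identity pointwise by direct expansion, using only the definition of \(\Delta_p\) as a translation-difference operator on \(\FF_d\). Fix an arbitrary \(f\in\FF_d\) and \(x\in\R^d\). First compute \(\Delta_p f\), which is the function \(x\mapsto f(x+p)-f(x)\). Then apply \(\Delta_q\) to this function, by evaluating it at \(x+q\) and subtracting its value at \(x\). This gives
\[
(\Delta_q\circ\Delta_p f)(x) = \bigl(f(x+q+p)-f(x+q)\bigr) - \bigl(f(x+p)-f(x)\bigr).
\]

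Symmetrically, we obtain
\[
(\Delta_p\circ\Delta_q f)(x) = \bigl(f(x+p+q)-f(x+p)\bigr) - \bigl(f(x+q)-f(x)\bigr).
\]
Since addition in \(\R^d\) is commutative, \(x+q+p=x+p+q\). The two expressions therefore consist of the same four terms \(f(x+p+q)-f(x+p)-f(x+q)+f(x)\), and they coincide.

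Because \(f\) and \(x\) were arbitrary, the two compositions agree as maps \(\FF_d\to\FF_d\). A cleaner way to say the same thing is to write \(\Delta_p=\tau_p-I\), where \((\tau_pf)(x)=f(x+p)\) is the shift operator. Then \(\tau_p\tau_q=\tau_{p+q}=\tau_q\tau_p\), and commutativity of \(\Delta_p\) and \(\Delta_q\) follows from the commutativity of polynomials in commuting operators.

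There is no real obstacle here. The only point needing care is to keep track of which argument each shift acts on when composing the operators, so that the order of evaluation is written out correctly.
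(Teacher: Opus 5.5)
Your proposal is correct and matches the paper's proof: both expand \(\Delta_q\circ\Delta_p f\) pointwise into the four terms \(f(x+p+q)-f(x+p)-f(x+q)+f(x)\) and observe that this expression is symmetric in \(p\) and \(q\). Your shift-operator reformulation \(\Delta_p=\tau_p-I\) with \(\tau_p\tau_q=\tau_{p+q}=\tau_q\tau_p\) is a valid restatement of the same computation.
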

\begin{proof}
    For any \(p,q\in\R^d\), we have 
	\begin{align*}
		\Delta_{q}\circ \Delta_{p} f & = \Delta_{q}\left(f(x+p)-f(x)\right) \\
		& = \left(f(x+p+q) - f(x+q)\right) - \left(f(x+p) - f(x)\right) \\
		& = \left(f(x+q+p) - f(x+p)\right) - \left(f(x+q) - f(x)\right) \\
		& = \Delta_{p}\left(f(x+q)-f(x)\right) = \Delta_{p} \circ \Delta_{q}f.
	\end{align*} 
\end{proof}

Given \(K\) increments \(p_1,\ldots, p_K\in \R^d\), let \(\bm{p} = (p_1,\ldots,p_K) \). 
We denote the \(K\)-times difference $\Delta_{\bm{p}}^K$ by
\[ \Delta_{\bm{p}}^K = \prod_{k=1}^{K}\Delta_{p_k} := \Delta_{p_1}\circ \cdots \circ \Delta_{p_K}. \]
If \(p_k = p\) for all \(k=1,\ldots,K\). We denote by \(\Delta_{p}^K\) the \(K\)-times difference \(\prod_{k=1}^{K}\Delta_{p} \). Next, we derive the transformation of the finite difference under the change of variables.
\begin{lemma}\label{L:FD-trans} 
Let \(A\in \mathbb{R}^{d\times d}\) be  invertible, \(\bm{p} = (p_1,\ldots,p_K) \) be \(K\) increments in \(\R^d\) and \( A\bm{p} := (Ap_1,\ldots,Ap_K) \). 
	Suppose that under the change of variables \(y=Ax\) \( (\text{or } x=A^{-1}y)\), the function 
	\(f(x)\in\FF_d\) becomes \(g(y) = f(A^{-1}y)\). Then
	the function  \( \Delta_{\bm{p}}^K f(x) \) becomes \(\Delta_{A\bm{p}}^Kg(y)\):
	\[ f(x)\xrightarrow{x=A^{-1}y}g(y) \quad \Longrightarrow \quad
		\Delta_{\bm{p}}^K f(x)\xrightarrow{x=A^{-1}y}\Delta_{A\bm{p}}^K g(y). \]
\end{lemma}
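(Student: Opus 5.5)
The plan is to prove the statement for a single increment, $K=1$, and then induct on $K$ using the composition structure $\Delta_{\bm p}^K=\Delta_{p_1}\circ\cdots\circ\Delta_{p_K}$. The only real content is bookkeeping: a shift of $x$ by $p$ corresponds, in the $y$-coordinates, to a shift by $Ap$, because the change of variables is linear.

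\emph{Base case $K=1$.} Write $x=A^{-1}y$ and $g(y)=f(A^{-1}y)$. Then
\[
\Delta_p f(x)=f(x+p)-f(x)=f\bigl(A^{-1}y+p\bigr)-f\bigl(A^{-1}y\bigr)=f\bigl(A^{-1}(y+Ap)\bigr)-f\bigl(A^{-1}y\bigr).
\]
The right-hand side equals $g(y+Ap)-g(y)=\Delta_{Ap}g(y)$. This computation only uses $A^{-1}(y+Ap)=A^{-1}y+p$, so it requires linearity (and invertibility of $A$ for the substitution to be well defined), not any regularity of $f$. Thus, if $f$ transforms to $g$, then $\Delta_p f$ transforms to $\Delta_{Ap}g$.

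\emph{Inductive step.} Set $h:=\Delta_{p_2}\circ\cdots\circ\Delta_{p_K}f\in\FF_d$. By the inductive hypothesis, $h$ transforms into $\tilde h:=\Delta_{Ap_2}\circ\cdots\circ\Delta_{Ap_K}g$. Applying the base case to the pair $(h,\tilde h)$ with increment $p_1$, the function $\Delta_{p_1}h=\Delta_{\bm p}^K f$ transforms into $\Delta_{Ap_1}\tilde h=\Delta_{A\bm p}^K g$. This closes the induction.

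The order of the compositions is irrelevant by Lemma \ref{L:Comm-FD}, but the induction above already respects the order, so that lemma is not needed. No step is a genuine obstacle. The only point needing care is to state precisely what "becomes under the change of variables" means: $F\mapsto G$ exactly when $G(y)=F(A^{-1}y)$ for all $y$. With that definition fixed, the statement applies to every intermediate function $h$, not only to $f$, which is what makes the induction legitimate.
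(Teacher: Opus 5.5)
Your proposal is correct and matches the paper's proof: the same single-increment computation based on $A^{-1}(y+Ap)=A^{-1}y+p$, followed by induction that peels off $\Delta_{p_1}$ and applies the base case to the intermediate function $\Delta_{p_2}\circ\cdots\circ\Delta_{p_K}f$. Your explicit remark that the base case must hold for arbitrary function pairs is the same point the paper uses implicitly. Your computation also avoids the paper's small typo, which writes $A^{-1}(x+Ap)$ where $A^{-1}(y+Ap)$ is meant.
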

\begin{proof}
	When \(K=1\), we have for any increment \(p\in\R^d\), 
	\begin{align*}
		\Delta_p f(x) & = f(x+p) - f(x)\xrightarrow{x=A^{-1}y} f(A^{-1}y+p) - f(A^{-1}y) \\
		& = f\left( A^{-1}(x+Ap) \right) - f(A^{-1}y) 
		= g(y+Ap) - g(y) = \Delta_{Ap} g(y).
	\end{align*}
	In general, for \(K>1\), let \(\wt{\bm{p}} = (p_2,\ldots,p_K) \). Then by induction
$\Delta_{\wt{\bm{p}}}^{K-1} f(x)\xrightarrow{x=A^{-1}y}\Delta_{A\wt{\bm{p}}}^{K-1} g(y).$ Then, the desired assertion follows as:
	\begin{align*}
		\Delta_{\bm{p}}^K f(x)  = \Delta_{p_1}\left(\Delta_{\wt{\bm{p}}}^{K-1} f(x)\right)
		\xrightarrow{x=A^{-1}y} \Delta_{Ap_1}\left(\Delta_{A\wt{\bm{p}}}^{K-1} g(y)\right)
		= \Delta_{A\bm{p}}^K g(y).
	\end{align*}
\end{proof}

\begin{proof}[Proof of Theorem \ref{T:LI-general_uv}]

The base case \(N=1\) is identical with that in Theorem \ref{T:LI-smooth_uv}. In the general case, we have	
\[ w_1 P_1(v + u^\top x_1) + \cdots + w_N P_N(v + u^\top x_N) = 0. \]
	We choose some point not equal to \(x_1\), say \(x_N\), for the change of variables 
	\[ \begin{bmatrix} v^{(1)} \\ u^{(1)} \end{bmatrix} = A_{1}\begin{bmatrix} v \\ u \end{bmatrix},
		\quad A_{1} = \begin{bmatrix}
		1 & x_N \\ 0 & I
	\end{bmatrix} \in  \R^{(d+1)\times (d+1)}. \]
	Then we have
	\[ w_1P_1\left(v^{(1)} + u^{(1)}\cdot x^{(1)}_1\right) + \cdots 
	 + w_{N-1}P_{N-1}\left(v^{(1)} + u^{(1)}\cdot x^{(1)}_{N-1}\right)
	 + w_NP_N\left(v^{(1)}\right) \equiv 0. \]
	There must exist some \(j_1\in \{1,\ldots,d\}\) such that \(x^{(1)}_{1,j_1}\neq 0\).
Let \(\{ e_1,\ldots,e_d \}\subset \R^{d+1}\) be the standard Cartesian basis for the \(u\)-variable.  
Let \(p_1 = h_1e_{j_1}\). Then we employ the finite difference \(\Delta_{p_1}\) for reduction. Specifically, for some integer \(n_1\), there holds
\begin{equation}\label{E:FD-one_step} 
		w_1 \Delta_{p_1}^{n_1} P_1\left(v^{(1)} + u^{(1)}\cdot x^{(1)}_1\right) + \cdots 
	 	+ w_{N-1} \Delta_{p_1}^{n_1} P_{N-1}\left(v^{(1)} + u^{(1)}\cdot x^{(1)}_{N-1}\right)
	 	\equiv 0.
	\end{equation}
By the commutativity of finite difference in Lemma \ref{L:Comm-FD} and Lemma \ref{L:FD-trans}, after changing variables using \(x^{(1)}_{N-1}\) and taking difference, the identity \eqref{E:FD-one_step} is transformed into
	\[ w_1 \Delta_{p_2}^{n_2}\Delta_{A_{2}p_1}^{n_1} P_1\left(v^{(2)} + u^{(2)}\cdot x^{(2)}_1\right) + \cdots 
	+ w_{N-2} \Delta_{p_2}^{n_2} \Delta_{A_{2}p_1}^{n_1} P_{N-2}\left(v^{(2)} + u^{(2)}\cdot x^{(2)}_{N-2}\right)
	\equiv 0. \]
By repeating the argument for \(M\leq N-1\) steps, we  get
	\begin{equation}\label{E:FD-final_step} 
		\Delta_{p_M}^{n_M}\Delta_{A_{M}p_{M-1}}^{n_{M-1}}\cdots \Delta_{A_{M}\cdots A_{2}p_1}^{n_1} 
		P_1\left(v^{(M)} + u^{(M)}\cdot x^{(M)}_1\right) \equiv 0,
	\end{equation} 
	where for \(k=1,\ldots,M\), the increment \(p_k = h_ke_{j_k}\) for the one-dimensional increment \(h_k\in \R\) and some fixed 
	\(e_{j_k}\in \{ e_1,\ldots,e_d \}\), and each \(A_{k}\) has the form
	\[ A_{k} = \begin{bmatrix}
		1 & *_k \\ 0 & I
	\end{bmatrix} \in  \R^{(d+1)\times (d+1)}. \]
	Moreover, for \(k\in [M]\), we have
	\[ y_k := \begin{bmatrix} 1 \\ x^{(k)}_1 \end{bmatrix} = A_{k}^{-\top}
		\begin{bmatrix} 1 \\ x^{(k-1)}_1 \end{bmatrix} = A_{k}^{-\top} y_{k-1}, \]
	where \(x^{(0)}_1:= x_1\), and \(x^{(k)}_{1,j_k}\neq 0\), i.e., 
	\(y_k^\top e_{j_k}\neq 0\), by the choice of \(e_{j_k}\).
	Therefore, for any \(k\in [M-1]\), we have
$$ y_M^\top (A_{M}\cdots A_{k+1}p_k) = y_k^\top p_k = h_k y_k^\top e_{j_k} \neq 0, $$
and thus equation \eqref{E:FD-final_step} is non-degenerate. In the case \(P_1=\sigma\), we have 
$$\Delta_{p}\sigma\left(a_1z_1+\ldots+a_d z_d\right) 
    	= \left(\Delta_{a^\top p}\sigma\right)\left(a_1z_1+\ldots+a_d z_d\right),
$$
    for any increment \(p\in\R^d\) and point \(a\in\R^d\). Then, equation \eqref{E:FD-final_step} reduces to a one-dimensional 
    non-degenerate difference equation:
	\[ \Delta_{y_M^\top p_M}^{n_M}\Delta_{y_M^\top A_{M}p_{M-1}}^{n_{M-1}}\cdots 
		\Delta_{y_M^\top A_{M}\cdots A_{2}p_1}^{n_1} \sigma = 
		\Delta_{h_M x^{(M)}_{1,j_M}}^{n_M}\Delta_{h_{M-1} x^{({M-1})}_{1,j_{M-1}}}^{n_{M-1}}\cdots 
		\Delta_{h_1 x^{(1)}_{1,j_1}}^{n_1} \sigma \equiv 0. \]
	By the arbitrariness of \(h_1,\ldots h_M\) and Lemma \ref{L:PolyCha} below, \(\sigma\) is a polynomial. 
\end{proof}

\begin{remark}
In the proof, we get equation \eqref{E:FD-one_step} after reduction once. If the operator therein is differential instead of difference, we can expand the expression and get another standard \(\sigma\)-polynomial by Lemma \ref{L:StandardForm} as in the proof of Theorem \ref{T:LI-smooth_uv}. However, we obtain a high-dimensional difference-differential equation when carrying out the difference equation \eqref{E:FD-one_step} as the \(\sigma\)-polynomials \(P_1,\ldots,P_N\) involve differentials of \(\sigma\), which is generally intractable \citep{Elaydi:2005DE}. Therefore, we continue the reduction and finally get \eqref{E:FD-final_step}. That is, we transform equation \eqref{E:LinRelation} and thus the positivity of the DNTK into a difference-differential equation \eqref{E:FD-final_step}.
\end{remark}

\begin{lemma}[Theorem 4, \cite{Carvalho:2025Positivity}] \label{L:PolyCha} 
	Let \( f : \mathbb{R} \to \mathbb{R} \) be a continuous function. If for some positive 
	\( K \in \mathbb{Z} \) such that \( \Delta_{h}^{K}f(x) = 0 \) for all \(h\) and \(x\) in \(\R\). 
	Then \( f \) is a polynomial of order at most \( K-1 \).
\end{lemma}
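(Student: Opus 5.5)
We prove the statement by induction on $K$. The case $K=1$ is immediate: $\Delta_h f(x)=f(x+h)-f(x)=0$ for all $x,h$ means $f$ is constant, a polynomial of degree at most $0$. Continuity is not even needed here.

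For the induction step, suppose $\Delta_h^{K}f\equiv 0$ for all $h,x$ with $K\ge 2$. For a fixed increment $h$, set $g_h:=\Delta_h f=f(\cdot+h)-f$. This function is continuous, and $\Delta_h^{K-1}g_h\equiv 0$, but only for that same $h$. The hypothesis constrains $\Delta_{h}^{K-1}g_{h'}$ only when the increments agree, so we cannot apply the induction hypothesis to $g_{h'}$ directly. We therefore first pass to mixed differences. The goal is to show
\[
\Delta_{h_1}\Delta_{h_2}\cdots\Delta_{h_K}f\equiv 0 \quad\text{for all } h_1,\dots,h_K.
\]
The route is the classical Fréchet/polarization identity. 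It expresses the mixed difference $\Delta_{h_1}\cdots\Delta_{h_K}f(x)$ as a finite signed combination of pure differences $\Delta^K_{\eta_S}f(x+\xi_S)$, where $S$ ranges over subsets of $[K]$ and $\eta_S,\xi_S$ are linear in the $h_i$. Equivalently, a function whose $K$-th pure differences all vanish is, by a theorem of Fréchet/Mazur–Orlicz, a generalized polynomial of degree $\le K-1$, i.e.\ a sum of diagonals of symmetric $k$-additive maps. Verifying or citing this polarization step is the main obstacle, since the combinatorics is delicate.

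Once the mixed differences vanish, the induction closes. For any fixed $h$, the function $g=\Delta_h f$ satisfies $\Delta_{h'}^{K-1}g\equiv 0$ for all $h'$ (take $h_1=h$ and $h_2=\dots=h_K=h'$ in the mixed identity, using Lemma~\ref{L:Comm-FD}). By the induction hypothesis, $\Delta_h f$ is a polynomial of degree $\le K-2$ for every $h$. The lemma now follows from the claim that a continuous $f$ all of whose differences $\Delta_h f$ are polynomials of degree $\le K-2$ is itself a polynomial of degree $\le K-1$. This is where continuity is used.

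To prove the claim, regularize. Take a mollifier $\phi_\varepsilon$ and set $f_\varepsilon=f*\phi_\varepsilon$. Convolution commutes with $\Delta_h$, so $f_\varepsilon$ is smooth and $\Delta_h^K f_\varepsilon\equiv 0$. Dividing by $h^K$ and letting $h\to0$ gives $f_\varepsilon^{(K)}\equiv 0$, so $f_\varepsilon$ is a polynomial of degree $\le K-1$. Since $f_\varepsilon\to f$ locally uniformly, and polynomials of bounded degree form a finite-dimensional, hence closed, subspace of $C([-R,R])$, $f$ is a polynomial of degree at most $K-1$.

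This mollification argument actually bypasses the induction and the polarization step entirely, since it only uses pure differences. I would therefore present the mollification proof as the main proof. The only care needed is justifying $h^{-K}\Delta_h^Kf_\varepsilon(x)\to f_\varepsilon^{(K)}(x)$, which follows from Taylor's theorem with a uniform bound on $f_\varepsilon^{(K+1)}$ over compact sets.
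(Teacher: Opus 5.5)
\paragraph{Verdict.} Your mollification argument is correct and complete. It takes a different route from the paper, because the paper proves nothing here: it imports the statement as Theorem~4 of \cite{Carvalho:2025Positivity}.

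\paragraph{Check of the steps.} Each step holds.
\begin{itemize}
\item $f_\varepsilon=f*\phi_\varepsilon$ is well defined and smooth, since $f$ is continuous and hence locally integrable.
\item $\Delta_h$ is a difference of translations, so $\Delta_h^K f_\varepsilon=(\Delta_h^K f)*\phi_\varepsilon\equiv 0$.
\item Taylor's theorem, combined with $\sum_{j=0}^{K}(-1)^{K-j}\binom{K}{j}j^m=0$ for $m<K$ and $=K!$ for $m=K$, gives $h^{-K}\Delta_h^K f_\varepsilon(x)=f_\varepsilon^{(K)}(x)+O(h)$. The constant is uniform because, for $|h|\le 1$, the nodes $x+jh$ stay in $[x-K,x+K]$.
\item Closedness of the finite-dimensional space of polynomials of degree at most $K-1$ in $C([-R,R])$ gives $f|_{[-R,R]}=p_R$. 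Add the one-line remark that the $p_R$ coincide for different $R$, since a polynomial is determined by its values on an interval. Then $f$ is a single global polynomial.
\end{itemize}

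\paragraph{What your route buys.} Compared with the citation, you get a short, self-contained proof that uses only pure differences. It also shows exactly where regularity enters, and regularity is genuinely needed. A discontinuous additive function built from a Hamel basis satisfies $\Delta_h^2 f\equiv 0$ without being a polynomial. Your argument in fact works for any locally integrable $f$: it gives $f=p$ almost everywhere. The everywhere identity $\Delta_h^K(f-p)(x)=0$ then upgrades this to everywhere, by choosing $h$ so that $x+h,\dots,x+Kh$ avoid the exceptional null set.

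\paragraph{What to change.} Delete the first half of the proposal. That induction depends on the Fr\'echet polarization step, which you neither prove nor need. Leaving it in makes the write-up look as though it rests on an unverified identity.
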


\subsection{Proof of Theorem \ref{T:LI-RePU_u}}

	For each \(i\in[N]\), the \(\sigma\)-polynomial \(P_i\) takes the form 
	\[ P_i := P_i(u^\top x_i) = \sum_{k=1}^{K_i} a_{i,k}~(u^\top x_i)^{q_{i,k}}
	\mathbb{I}(u^\top x_i) u^{\alpha_{i,k}},  \]
	where all the coefficients \(a_{i,k}\) are nonzero, \(q_{i,k}\geq 0\) are integers, \(\mathbb{I}(\cdot)\)
	is shorthand of the indicator function \(\mathbb{I}_{>0}(\cdot)\), \(u^{\alpha_{i,k}}\) are pairwise different monomials for \(k\in[K_i]\)
    and the orderly property of \(P_i\) reads
    \[ q_{i,1} + |\alpha_{i,1}| = \cdots = q_{i,K_i} + |\alpha_{i,K_i}| .\] 
    Let \(\mathcal{L}^{d}\) denote the Lebesgue measure in \(\R^d\)
	and \(S_i \subset \R^d\) be the hyperplane
	$S_i := \{ u\in \R^d : u^\top x_i = 0 \}$.
	Then for \(j\neq i\), we have
	\[ \mathcal{L}^{d-1}(S_i\cap S_j) = 0  \quad \Longrightarrow \quad 
	 \mathcal{L}^{d-1}\Big(S_i\cap \bigcup_{j\neq i} S_j\Big) = 0.  \]
	Fix any \(i\in [N]\). Without loss of generality, suppose that there is some \(1\leq K\leq K_i\) such that
	\[ q_{i,1} =  \cdots =  q_{i,K} = q := \min\{q_{i,1}, \ldots , q_{i,K_i}\} \quad \text{ and } 
	\quad  q_{i,k} > q \text{ for } k > K. \]
	For any \( v\in S_i\setminus \cup_{j\neq i} S_j \) and \(r>0\), let \(B(v,r)\) be 
	the open ball centered at \(v\) with radius \(r\) and let 
	\begin{gather*}
		B^+_r:= \left\{ u\in B(v,r): u^\top x_i > 0 \right\} \quad \text{ and } \quad
		B^-_r:= \left\{ u\in B(v,r): u^\top x_i < 0 \right\}.
	\end{gather*}
	For \(j\neq i\), since \(v\notin S_j\) and \(S_j\) is closed, we have \( S_j\cap B(v,r) = \varnothing \) 
	for sufficiently small \(r\). Thus \(P_j(u^\top x_j)\) is smooth within every small ball. Then by Lebesgue differentiation theorem, we have
	\[ \lim_{r\to 0} \mathrel{\int\!\!\!\!\!\!-}_{B^+_r} \nabla_u^{q} P_j(u^\top x_j) \,\d u 
		= \lim_{r\to 0} \mathrel{\int\!\!\!\!\!\!-}_{B^-_r} \nabla_u^{q} P_j(u^\top x_j) \,\d u
		= \nabla_u^{q} P_j(v^\top x_j) \in \R^{d^q}, \]
	where the average integral means
	\( \mathrel{\int\!\!\!\!\!\!-}_{B^+_r} := \frac{1}{\mathcal{L}^{d}(B^+_r)} \int_{B^+_r}. \)
	For the case \(i\), we have \(P_i(u^\top x_i) =0 \) for all \(u\in {B^-_r},r>0\). Consequently, 
	\[ \mathrel{\int\!\!\!\!\!\!-}_{B^-_r} \nabla_u^{q} P_i(u^\top x_i) \,\d u = 0 \in \R^{d^q}. \]
	For the upper hemisphere \(B^+_r\), since \(v^\top x_i =0\), if \(1\leq k\leq K\), then
    \(q_{i,k} = q\) and we have  
	\[ \lim_{r\to 0} \mathrel{\int\!\!\!\!\!\!-}_{B^+_r} \nabla_u^{q} \left((u^\top x_i)^{q_{i,k}}
		\mathbb{I}(u^\top x_i) u^{\alpha_{i,k}}\right)  \,\d u 
        = \lim_{r\to 0} \mathrel{\int\!\!\!\!\!\!-}_{B^+_r} \nabla_u^{q} \left((u^\top x_i)^{q}_+
		 u^{\alpha_{i,k}}\right)  \,\d u 
		= q! x_i^{\otimes q} v^{\alpha_{i,k}}.  \]
	While for \(K < k \leq K_i\) we have \(q_{i,k}>q\) and the function 
    \((u^\top x_i)^{q_{i,k}}\mathbb{I}(u^\top x_i)\) is \(q\)-times continuously differentiable. Then 
    by Lebesgue differentiation theorem again, we have
	\[ \lim_{r\to 0} \mathrel{\int\!\!\!\!\!\!-}_{B^+_r} \nabla_u^{q} \left((u^\top x_i)^{q_{i,k}}
		\mathbb{I}(u^\top x_i) u^{\alpha_{i,k}}\right)  \,\d u 
        = \left. \nabla_u^{q} \left((u^\top x_i)^{q_{i,k}}
		\mathbb{I}(u^\top x_i) u^{\alpha_{i,k}}\right) \right|_{u=v} = 0.  \]
	Therefore, 
	\begin{align*}
		0 & =  \lim_{r\to 0} \mathrel{\int\!\!\!\!\!\!-}_{B^+_r}  \nabla_u^{q}\left(w_1P_1 + \cdots + w_NP_N\right) {\rm d}u
				- \mathrel{\int\!\!\!\!\!\!-}_{B^-_r} \nabla_u^{q}\left(w_1P_1 + \cdots + w_NP_N\right) {\rm d}u\\
		  & = w_i \lim_{r\to 0} \mathrel{\int\!\!\!\!\!\!-}_{B^+_r} \nabla_u^{q} P_i(u^\top x_i) \,\d u
		    = w_i \sum_{k=1}^{K} a_{i,k}q! v^{\alpha_{i,k}} x_i^{\otimes q}.
	\end{align*}
	We can choose \(v\in S_i\setminus \cup_{j\neq i} S_j\) properly so that 
\(P(v):=\sum_{k=1}^{K} a_{i,k}q! v^{\alpha_{i,k}}\) is nonzero. Then we get 
\(w_i = 0\), since \(x_i\) is nonzero. If no such \(v\) exists, then the (homogeneous) polynomial \(P(v)\) vanish on the whole hyperplane \(S_i\) by its continuity (note that 
the \(\mathcal{L}^{d-1}\) measure of the intersection \(S_i\cap\cup_{j\neq i} S_j\) is zero in 
the hyperplane \(S_i\) and the closure of the complement of a zero-measure set is the entire space), 
which implies that \(P(v)\) is identically zero in the whole space \(\R^d\)  by its analyticity. This is a contradiction since all its coefficients are nonzero.

\section{Proof in Section \ref{S:DNTK-deep}}\label{S-A:L-layer NN}

\subsection{Proof of Proposition \ref{P:deep-ifyNN}}
The proof is based on mathematical induction on the depth of the DNN. For the case \(L=1\), i.e., no hidden layer but only input and output layers and thus no limit, by definition we have
	\[ f^{(1)}(x;\theta) =  W^{(1)} x + \gamma b^{(1)}. \]
	We use \(a_i(x)\) to denote the \(i\)-th component of \(D\AA\), i.e., the 
	coefficient of \(\AA\) corresponding to the index \(\partial_{i}\) for \(i\in[d]\), while \(a(x)\)
	the zeroth order coefficient.  
	For any \(\mu \in [d_1]\) and \(x^\AA\in\PLDR_d\), we have 
    \begin{equation}\label{E:first-layer}
	\begin{aligned}
		f_{\mu}^{(1)}\left(x^\AA;\theta\right) 
		& = \AA\left( W^{(1)}_{\mu} x + \gamma b^{(1)}_{\mu} \right) = \sum_{|\alpha|\leq 1} 
			a_\alpha(x)\partial^\alpha\left( W^{(1)}_{\mu} x + \gamma b^{(1)}_{\mu} \right)\\
		& = \sum_{i=1}^{n} a_i(x) W^{(1)}_{\mu,i} + a(x)\left( W^{(1)}_{\mu} x + \gamma b^{(1)}_{\mu} \right)\\
		& =  W^{(1)}_{\mu} \Big(a(x)x+D\AA(x)\Big) + \gamma a(x) b^{(1)}_{\mu},
	\end{aligned}        
    \end{equation}
	which is a Gaussian variable as all the components of \(W^{(1)}_{\mu}\) and \(b^{(1)}_{\mu}\) are i.i.d. Gaussians. The covariance between the variables \(f_{\mu}^{(1)}\left(x^\AA;\theta\right) \)
	and \(f_{\mu}^{(1)}\left(y^\BB;\theta\right) \) is given by 
	\[ \Cov\left(x^\AA, y^\BB \right) = 
	\left(a(x)x+D\AA(x)\right)^\top \left(b(y)y+D\BB(y)\right) + \gamma^2 a(x)b(y).  \]
	Given any set of points \(X=\{x_i^{\AA_i}: i\in [N]\}\), by the expression of \(f_{\mu}^{(1)}(x^\AA)\) in \eqref{E:first-layer}, every linear combination of \(f^{(1)}_\mu(x_i^{\AA_i})\) is also Gaussian. Thus \(f^{(1)}_\mu(X)\) is a Gaussian vector and \(f^{(1)}_\mu\) is a Gaussian process on \(\PLDR_d\).

	Next, we analyze \(\ell+1\) layers with \(\ell\geq 1\). For \(\mu \in [d_{\ell+1}]\) and \(x^\AA\in\PLDR_d\), we have 
	\begin{align*}
		f_{\mu}^{(\ell+1)}\left(x^\AA;\theta\right) & =  \mathcal{A}\left( \frac{1}{\sqrt{d_\ell}} 
		W^{(\ell+1)}_{\mu} \sigma \left( f^{(\ell)}(x;\theta_{\leq \ell}) \right) 
		+ \gamma b^{(\ell+1)}_{\mu} \right) \\
		& = \frac{1}{\sqrt{d_\ell}} \sum_{i=1}^{d_\ell} W^{(\ell+1)}_{\mu,i} \AA \sigma 
		\left( f^{(\ell)}_i(x;\theta_{\leq \ell}) \right) + \gamma a(x) b^{(\ell+1)}_{\mu}.
	\end{align*}
	Note that by Fa\`a di Bruno's formula \eqref{E:Faa-alpha},
	\begin{equation}\label{E:l-layer AA_sig}
		\AA \sigma\left( f^{(\ell)}(x) \right) 
		  = \sum_{\alpha \in  I_\AA } a_\alpha (x) \sum_{\pi \in \Pi_\alpha} 
			\sigma^{(|\pi|)}\left( f^{(\ell)} \left( x \right) \right) 
			\prod_{A \in \pi} \partial^A f^{(\ell)}\left(x\right). 
	\end{equation}
	When \(d_1,\ldots d_{\ell-1}\to \infty\) sequentially, by induction, we have 
	\begin{equation}\label{E:AA-sigma-f_ify}
		\lim_{d_1,\ldots,d_{\ell-1}\to\infty} \AA \sigma \left( f^{(\ell)}_i(x;\theta_{\leq \ell}) \right)
		= \sum_{\alpha \in  I_\AA } a_\alpha (x) \sum_{\pi \in \Pi_\alpha} 
			\sigma^{(|\pi|)}\left( f^{(\ell)}_{i,\infty} \left( x \right) \right) 
			\prod_{A \in \pi}  f^{(\ell)}_{i,\infty}\left(x^{\partial^A}\right). 
	\end{equation}
    Therefore, in the case \(d_1,\ldots d_{\ell-1}\to \infty\) sequentially, we have
	\[ f_{\mu}^{(\ell+1)}\left(x^\AA;\theta\right) 
	= \frac{1}{\sqrt{d_\ell}} \sum_{i=1}^{d_\ell} W^{(\ell+1)}_{\mu,i} \AA \sigma 
		\left( f^{(\ell)}_{i,\infty}(x;\theta_{\leq \ell}) \right) + \gamma a(x) b^{(\ell+1)}_{\mu}. \]
The first term is a normalized sum of i.i.d. random variables. By the identity \eqref{E:AA-sigma-f_ify}, the Cauchy-Schwarz inequality, the finite variance
	of \(f^{(\ell)}_{i,\infty}\) and the polynomial growth of \(\sigma\) and its derivatives, we have the mean square integrability
	\[ \E\left[\left( W^{(\ell+1)}_{\mu,i} \AA \sigma 
		\left( f^{(\ell)}_{i,\infty}(x;\theta_{\leq \ell}) \right) \right)^2 \right]
		= \E\left[\left( \AA \sigma 
		\left( f^{(\ell)}_{i,\infty}(x;\theta_{\leq \ell}) \right) \right)^2 \right] < \infty. \]
	Given a finite set of points \(X = \{ x_1^{\AA_1},\ldots,x_N^{\AA_N} \}\subset \PLDR_d\),
	by the central limit theorem in Lemma \ref{lem:MCLT} when \(d_{\ell}\) tends to infinity, we have 
	\begin{equation*}
		\lim_{d_\ell\to\infty} \frac{1}{\sqrt{d_\ell}} \sum_{i=1}^{d_\ell} \left[ W^{(\ell+1)}_{\mu,i}  
		\mathcal{A}_j \sigma \left( f^{(\ell)}_{i,\infty}(x_j;\theta_{\leq \ell}) \right) \right]_{1\leq j\leq N}
		\overset{d}{\longrightarrow} \mathcal{N}_N(0,\Sigma),
	\end{equation*}
	where the covariance at the pair \((x^\AA,y^\BB)\) is given by
	\begin{align*}
		\Sigma\left(x^\AA,y^\BB\right)  = \E_{f^{(\ell)}_{\infty}}
		\left[ \AA \sigma \left( f^{(\ell)}_{\infty}(x) \right) 
		\BB \sigma \left( f^{(\ell)}_{\infty}(y) \right)\right].
	\end{align*}
	Let \(a_j(x)\) be the zeroth order term of \(\AA_j\) for \(1\leq j\leq N\). 
	Since the sum of independent Gaussian variables is still a Gaussian, combining the bias term gives
	\begin{align*}
		\lim_{d_\ell\to\infty} f_{\mu}^{(\ell+1)}(X) \overset{d}{\longrightarrow} 
		\mathcal{N}_N(0,\Sigma) + \gamma b^{(\ell+1)}_{\mu} \left[a_j(x_j)\right]_{1\leq j\leq N}
		= \mathcal{N}_N\left(0,\Sigma^{(\ell+1)}_{\infty}(X)\right).
	\end{align*} 
	This shows that \(f_{\mu}^{(\ell+1)}\) is a Gaussian process on \(\PLDR_d\) with the covariance function given by \(\Sigma^{(\ell+1)}_{\infty}\) and thus finishes the induction.
    The independence of different components of \(f^{(L)}_\infty\) follows identically as Proposition \ref{P:2-ifyNN} and thus omitted.

\subsection{Proof of Theorem \ref{T:deep-ifyDNTK}}
The proof is also based on mathematical induction. 
	For the case \(L=1\), there is no limit to take. Given 
	\(\mu,\nu\in[d_1]\) and \(x^\AA, y^\BB\in\PLDR_d\), we have 
	\begin{align*}
		\Theta^{(1)}_{\mu\nu}\left( x^\AA,y^\BB \right) 
		& = \llan \nabla_\theta \AA \left(W^{(1)}_\mu x + \gamma b^{(1)}_\mu\right), 
			\nabla_\theta \BB \left(W^{(1)}_\nu y + \gamma b^{(1)}_\nu\right) \rran \\
		& = \nabla_\theta \left( W^{(1)}_{\mu} \left(a(x)x+D\AA(x)\right) + \gamma a(x) b^{(1)}_{\mu} \right)\cdot \\
        & \hspace{4cm}
			\nabla_\theta \left( W^{(1)}_{\nu} \left(b(y)y+D\BB(y)\right) + \gamma b(y) b^{(1)}_{\nu} \right) \\
		& = \delta_{\mu\nu} \left(a(x)x+D\AA(x)\right)^\top \left(b(y)y+D\BB(y)\right) 
			+ \delta_{\mu\nu} \gamma^2 a(x)b(y) \\
		& = \delta_{\mu\nu} \Sigma^{(1)}_{\infty}\left( x^\AA,y^\BB \right).
	\end{align*}

Like in the proof of Proposition \ref{P:deep-ifyNN}, for the \(\ell+1\) layers with \(\ell\geq 1\), by 
letting \(d_1,\ldots, d_{\ell-1}\to \infty\) sequentially, we get a Gaussian process \(f^{(\ell)}_\infty\). 
When \(d_\ell\to\infty\), we arrive at a deterministic kernel \(\Theta^{(\ell+1)}_\infty\) by the law of large numbers. Specifically, given \(\mu \in [d_{\ell+1}]\) and \(x^\AA,y^\BB\in\PLDR_d\), 
we split the gradient \(\nabla_{\theta}\) into two parts \(\nabla_{\theta_{\ell+1}}\) and \(\nabla_{\theta_{\leq\ell}}\) as in Theorem \ref{T:2-ifyDNTK}:
\begin{align*}	\Theta^{\ell+1}_{\mu\nu}\left( x^\AA,y^\BB \right) 
= \llan \nabla_\theta \mathcal{A}f^{\ell+1}_\mu(x;\theta), \nabla_\theta\mathcal{B}f^{\ell+1}_\nu(y;\theta)  \rran
=: \textrm{I} + \textrm{II},
\end{align*}
with the terms ${\rm I}$ and ${\rm II}$ given respectively by
\begin{align*}
\textrm{I} & = \llan \nabla_{\theta_{\ell+1}} \mathcal{A}f^{(\ell+1)}_\mu(x;\theta),\nabla_{\theta_{\ell+1}}\mathcal{B}f^{(\ell+1)}_\nu(y;\theta)  \rran,\\ 
{\rm II} &=  \llan \nabla_{\theta_{\leq \ell}} \mathcal{A}f^{(\ell+1)}_\mu(x;\theta), \nabla_{\theta_{\leq\ell}}\mathcal{B}f^{(\ell+1)}_\nu(y;\theta)  \rran.
\end{align*}
For the first term \textrm{I}, by the law of large numbers, 
	\begin{align*}
		\textrm{I} & = \llan \nabla_{\theta_{\ell+1}} \mathcal{A}f^{(\ell+1)}_\mu(x;\theta), 
				\nabla_{\theta_{\ell+1}}\mathcal{B}f^{(\ell+1)}_\nu(y;\theta)  \rran \\
		& =  \llan \nabla_{\theta_{\ell+1}} \frac{1}{\sqrt{d_\ell}} W^{(\ell+1)}_{\mu} 
				\AA \sigma \left( f^{(\ell)}(x) \right) + \AA \gamma b^{(\ell+1)}_\mu , 
			\nabla_{\theta_{\ell+1}} \frac{1}{\sqrt{d_\ell}} W^{(\ell+1)}_{\nu} 
				\BB \sigma \left( f^{(\ell)}(y) \right) + \BB \gamma b^{(\ell+1)}_\nu \rran  \\
		& = \frac{1}{d_\ell} \sum_{k=1}^{d_{\ell+1}} \sum_{i=1}^{d_\ell}
		 	\delta_{k \mu} \AA \sigma \left( f^{(\ell)}_i (x) \right)
			\delta_{k \nu} \BB \sigma \left( f^{(\ell)}_i (y) \right)
			+ \delta_{\mu \nu} \gamma^2 a(x)b(y) \\
		& = \delta_{\mu \nu} \frac{1}{d_\ell} \sum_{i=1}^{d_\ell}
		 	\AA \sigma \left( f^{(\ell)}_i (x) \right)
			\BB \sigma \left( f^{(\ell)}_i (y) \right)
			+ \delta_{\mu \nu} \gamma^2 a(x)b(y) \\
		& = \delta_{\mu \nu} \frac{1}{d_\ell} \sum_{i=1}^{d_\ell}
		 	\AA \sigma \left( f^{(\ell)}_{i,\infty} (x) \right)
			\BB \sigma \left( f^{(\ell)}_{i,\infty} (y) \right)
			+ \delta_{\mu \nu} \gamma^2 a(x)b(y) 
			\qquad \text{as } d_1,\ldots,d_{\ell-1} \to \infty \\
		& \overset{p}{\longrightarrow} \delta_{\mu \nu} \Sigma^{(\ell+1)}_{\infty}\left( x^\AA,y^\BB \right)
			\qquad \text{as } d_{\ell}\to \infty.
	\end{align*}
For the second term \textrm{II}, given any element \(\rho \in \theta_{\leq\ell}\),
	by the continuous differentiability of the activation \(\sigma\), we have
	\begin{align*}
		\partial_\rho\left( \AA f^{(\ell+1)}_{\mu}(x;\theta)\right)
		& = \partial_\rho\left( \frac{1}{\sqrt{d_\ell}} W^{(\ell+1)}_{\mu} 
			\AA \sigma \left( f^{(\ell)}(x;\theta_{\leq\ell}) \right) + \AA b^{(\ell+1)}_\mu \right) \\
		& = \frac{1}{\sqrt{d_\ell}} W^{(\ell+1)}_{\mu} \AA \partial_\rho 
			\sigma \left( f^{(\ell)}(x;\theta_{\leq\ell}) \right)  \\
		& = \frac{1}{\sqrt{d_\ell}} \sum_{i=1}^{d_\ell} W^{(\ell+1)}_{\mu,i}
			\AA \left(\sigma' \left( f^{(\ell)}_i(x;\theta_{\leq\ell})  \right) 
			\partial_\rho\left( f^{(\ell)}_i(x;\theta_{\leq\ell}) \right)\right) \\
		& = \frac{1}{\sqrt{d_\ell}} \sum_{i=1}^{d_\ell} W^{(\ell+1)}_{\mu,i}
			\sum_{\alpha\in I_\AA } a_\alpha(x) \sum_{\xi \leq \alpha}\binom{\alpha}{\xi}
			\partial^{\xi} \sigma' \left( f^{(\ell)}_i(x) \right) 
			\partial_\rho \partial^{\alpha-\xi} f^{(\ell)}_i(x;\theta_{\leq\ell}).
	\end{align*}
	Similarly, we have
	\begin{align*}
		\partial_\rho\left( \BB f^{(\ell+1)}_{\nu}(y;\theta)\right)
		& = \frac{1}{\sqrt{d_\ell}} \sum_{j=1}^{d_\ell} W^{(\ell+1)}_{\nu,j}
			\BB \left(\sigma' \left( f^{(\ell)}_j(y;\theta_{\leq\ell})  \right) 
			\partial_\rho\left( f^{(\ell)}_j(y;\theta_{\leq\ell}) \right)\right) \\
		& = \frac{1}{\sqrt{d_\ell}} \sum_{j=1}^{d_\ell} W^{(\ell+1)}_{\nu,j}
			\sum_{\beta\in I_\BB } b_\beta(y) \sum_{\zeta \leq \beta}\binom{\beta}{\zeta}
			\partial^{\zeta} \sigma' \left( f^{(\ell)}_j(y) \right) 
			\partial_\rho \partial^{\beta-\zeta} f^{(\ell)}_j(y;\theta_{\leq\ell}).
	\end{align*}
	Note that the components \(f^{(\ell)}_i(x;\theta_{\leq\ell})\) and \(f^{(\ell)}_j(y;\theta_{\leq\ell})\)
	have no common parameters if \(i\neq j\). Consequently,
	\begin{align*}
		\textrm{II} & = \sum_{\rho\in \theta_{\leq\ell}} \partial_\rho\left( \AA f^{(\ell+1)}_{\mu}(x;\theta)\right)
		\partial_\rho\left( \BB f^{(\ell+1)}_{\nu}(y;\theta)\right) \\
		& = \frac{1}{d_\ell} \sum_{i,j=1}^{d_\ell}  W^{(\ell+1)}_{\mu,i} W^{(\ell+1)}_{\nu,j} 
			\sum_{\rho\in \theta_{\leq\ell}}
			\AA \left(\sigma' \left( f^{(\ell)}_i(x)  \right) 
			\partial_\rho\left( f^{(\ell)}_i(x) \right)\right)
			\BB \left(\sigma' \left( f^{(\ell)}_i(y)  \right) 
			\partial_\rho\left( f^{(\ell)}_i(y) \right)\right) \\
		& = \frac{1}{d_\ell} \sum_{i,j=1}^{d_\ell}  W^{(\ell+1)}_{\mu,i} W^{(\ell+1)}_{\nu,j} 
			\sum_{\alpha\in I_\AA }\sum_{\beta\in I_\BB } a_\alpha(x)b_\beta(y)
			\sum_{\xi\leq\alpha}\sum_{\zeta\leq\beta}\binom{\alpha}{\xi}\binom{\beta}{\zeta} \cdot \\
		& \qquad\quad \partial^{\xi} \sigma' \left( f^{(\ell)}_i(x) \right) 
			\partial^{\zeta} \sigma' \left( f^{(\ell)}_j(y) \right) 
			\sum_{\rho\in \theta_{\leq\ell}}
			\partial_\rho \partial^{\alpha-\xi} f^{(\ell)}_i(x;\theta_{\leq\ell})
			\partial_\rho \partial^{\beta-\zeta} f^{(\ell)}_j(y;\theta_{\leq\ell}) \\
		& = \sum_{\alpha\in I_\AA }\sum_{\beta\in I_\BB } a_\alpha(x)b_\beta(y)
			\sum_{\xi\leq\alpha}\sum_{\zeta\leq\beta}\binom{\alpha}{\xi}\binom{\beta}{\zeta}
			\frac{1}{d_\ell} \sum_{i=1}^{d_\ell}  W^{(\ell+1)}_{\mu,i} W^{(\ell+1)}_{\nu,i} \cdot \\
		& 	\qquad\quad \partial^{\xi} \sigma' \left( f^{(\ell)}_i(x) \right) 
			\partial^{\zeta} \sigma' \left( f^{(\ell)}_i(y) \right)
			\sum_{\rho\in \theta_{\leq\ell}}
			\partial_\rho \partial^{\alpha-\xi} f^{(\ell)}_i(x;\theta_{\leq\ell})
			\partial_\rho \partial^{\beta-\zeta} f^{(\ell)}_i(y;\theta_{\leq\ell}).
	\end{align*}
The proof is completed using the law of large numbers and by induction and combining the expressions of \textrm{I} and \textrm{II}.

\subsection{Proof of Theorem \ref{T:Property (LI)}}

Suppose that the conclusion holds for \(\ell\geq 2\). In particular, it holds for \(\ell=2\). By equation \eqref{E:Property (LI)}, we know that \(\sigma\) cannot be a polynomial. In the case \(\ell+1\), for the vector 
\(w=[w_1,\ldots,w_N]^\top\in\R^{K_1+\cdots+K_N}\), we have 
\begin{align*}
	w^\top \Sigma^{(\ell+1)}_\infty(X) w = \E_{f^{(\ell)}_\infty} 
	\left[ \left\| D_1^{(\ell)}\left(X\right)w  \right\|^2 \right]
	+ \left\| \left[a_{11},\ldots,a_{1K_1},\cdots,a_{N1},\ldots,a_{NK_N}\right]w \right\|^2 \geq 0,
\end{align*}
where \(D_1^{(\ell)}\left(X\right)w\) is a function of the Gaussians given by \(f^{(\ell)}_\infty\)
such that 
\[ D_1^{(\ell)}\left(X\right)w = \sum_{i=1}^{N} \sum_{k=1}^{K_i}w_{ik} 
\sum_{\pi \in \Pi_{\alpha_{ik}}} \sigma^{(|\pi|)}\left( f^{(\ell)}_\infty  \left( x_i \right) \right) 
\prod_{A \in \pi} f^{(\ell)}_\infty \left(x_i^{\partial^A}\right). \]
First, we identify a basis of the Gaussians \(\{f^{(\ell)}_\infty  ( x_i ),
f^{(\ell)}_\infty (x_i^{\partial^A}): A\in\pi\in \Pi_{\alpha_{ik}}, 
1\leq k\leq K_i, i\in[N]\}\).
For each \(i\in[N]\), we select a set of different random variables \(R_i\) from 
\(\{f^{(\ell)}_\infty ( x_i ),
f^{(\ell)}_\infty (x_i^{\partial^A})\}\), which consists of \(f^{(\ell)}_\infty  \left( x_i \right)\) and all the distinct variables in the set \( \{f^{(\ell)}_\infty(x_i^{\partial^A}): A\in\pi\in \Pi_{\alpha_{ik}}, 1\leq k\leq K_i\}\). 
Then the disjoint union 
\(\sqcup_{i\in[N]} R_i\) forms a basis of these Gaussians, since they are independent by induction 
and any other Gaussian equals one of them by the choice of the set \(R_i\).

If \(w^\top \Sigma^{(\ell+1)}_\infty(X) w\) vanishes identically, then \(D_1^{(\ell)}\left(X\right)w\) is identically
zero on the Euclidean space of the dimension equal to the number of random variables in the set
\(\sqcup_{i\in[N]} R_i\). Let all the random variables \(f^{(\ell)}_\infty\left( x_i \right), f^{(\ell)}_\infty\left( x_i^{\partial^A} \right) \)
in \(R_i\) be zero for \(1\neq i\in [N]\). Then we have 
\begin{align*}
	\sum_{k=1}^{K_1}w_{1k} 
	\sum_{\pi \in \Pi_{\alpha_{1k}}} \sigma^{(|\pi|)}\left( f^{(\ell)}_\infty  \left( x_1 \right) \right) 
	\prod_{A \in \pi} f^{(\ell)}_\infty \left(x_1^{\partial^A}\right) \equiv 0
\end{align*}
as a function of variables in \(R_1\). If \(w_1\neq 0 ~ (\in\mathbb{R}^{K_1})\), then the left hand side is a standard \(\sigma\)-polynommial 
with \(v= f^{(L-1)}_\infty  \left( x_1 \right)\)
and the vector \(u\) being the other variables in \(R_1\).
However, by Theorem \ref{T:LI-smooth_uv}, this is impossible since \(\sigma\) is not a polynomial.
Thus \(w_1=0\) and similarly we can deduce \(w=0\), which finishes the proof of the theorem.

\subsection{Proof of Theorem \ref{T:cor_Pos_DNTK}}

    Now, we prove the positivity result of \(\Sigma^{(L)}_\infty(X)\) and \(\Theta^{(L)}_\infty(X)\). Let the zero of \(\AA\in\LD_d\), denoted by \(\mathcal{Z}(\AA)\), be the intersection of the zero set of all its coefficients:
    \[ \mathcal{Z}(\AA) := \bigcap_{\alpha\in I_\AA} \mathcal{Z}(a_\alpha(z)), \quad \text{ for }
    \AA = \sum_{\alpha\in I_\AA} a_\alpha(z)\partial^{\alpha}. \]
	We first consider the two-layer case \(L=2\).
	Since \(x_i\notin \mathcal{Z}(\AA_i)\) for all \(i\in[N]\), each term in \eqref{E:LinRelation}
	is a standard \(\sigma\)-polynomial in \(u\). By Theorems \ref{T:LI-smooth_uv} and 
	\ref{T:LI-RePU_u}, the linear dependence relation \eqref{E:LinRelation} cannot happen for 
	nonzero \(w\in\R^N\), hence \(\Sigma^{(2)}_{\infty}(X)\) and \(\Theta^{(2)}_{\infty}(X)\)
	are positive definite by the former discussion.

	For \(L\geq 3\), we first prove the positivity of \(\Sigma^{(L)}_\infty(X)\). Given any \(w\in\R^N\), we have
	\begin{align*}
		w^\top \Sigma^{(L)}_\infty(X) w = \E_{f^{(L-1)}_\infty} 
		\left[ \left\| D_1^{(L)}\left(X\right)w  \right\|^2 \right]
		+ \left\| \left[a_{1},\ldots,\ldots,a_{N}\right]w \right\|^2,
	\end{align*}
	where \(a_{i}\) for \(i\in[N]\) is the value of zeroth order term of 
	\(\AA_i\) at the point \(x_i\) and 
	\begin{align*}
		D_1^{(L)}\left(X\right)w = \sum_{i=1}^{N} w_i
		\sum_{\alpha \in  I_{\AA_i} } a_\alpha (x_i) \sum_{\pi \in \Pi_\alpha} 
		\sigma^{(|\pi|)}\left( f^{(L-1)}_\infty  \left( x_i \right) \right) 
		\prod_{A \in \pi} f^{(L-1)}_\infty \left(x_i^{\partial^A}\right). 
	\end{align*}
	The expectation \(\E_{f^{(L-1)}_\infty} \) is taken with respect to the Gaussians
	\(\{f^{(L-1)}_\infty ( x_i ), f^{(L-1)}_\infty (x_i^{\partial^A}): i\in[N], A\in\pi\in\Pi_\alpha, \alpha\in I_{\AA_i} \}\).
	We can choose one basis of this set as in Theorem \ref{T:Property (LI)}. For each \(i\in[N]\), choose a set of different Gaussians \(R_i\) from
	\(\{f^{(L-1)}_\infty ( x_i ), f^{(L-1)}_\infty (x_i^{\partial^A}): A\in\pi\in\Pi_\alpha, \alpha\in I_{\AA_i}\} \), which consists of \(f^{(\ell)}_\infty\left( x_i \right)\) and some \(f^{(\ell)}_\infty\left( x_i^{\partial^A} \right)\).
	Then, \(\sqcup_{i\in[N]} R_i\) forms a basis of these Gaussians. If \(\E_{f^{(L-1)}_\infty} [ \| D_1^{(L)}(X)w  \|^2]=0\),
	then \(D_1^{(L)}\left(X\right)w \equiv 0\) as a function of variables in \(\sqcup_{i\in[N]} R_i\). Again, let all the variables \(f^{(L-1)}_\infty \left( x_i \right), f^{(L-1)}_\infty \left(x_i^{\partial^A}\right)\) in \(\sqcup_{i\neq 1} R_i\) be zero. Then, we get
	\[ w_1\sum_{\alpha \in  I_{\AA_1} } a_\alpha (x_1) \sum_{\pi \in \Pi_\alpha} 
		\sigma^{(|\pi|)}\left( f^{(L-1)}_\infty  \left( x_1 \right) \right) 
		\prod_{A \in \pi} f^{(L-1)}_\infty \left(x_1^{\partial^A}\right) \equiv 0 \]
	as a function of variables in \(R_1\). 
	Since \(x_1\notin \mathcal{Z}(\AA_1)\), if \(w_1\neq 0\), the left side is a standard \(\sigma\)-polynommial with \(v= f^{(L-1)}_\infty  \left( x_1 \right)\)
	and the vector \(u\) being the other Gaussians in \(R_1\).
	By Theorem \ref{T:LI-smooth_uv}, \(\sigma\) is a polynomial, which contradicts the assumption. Hence we must have \(w_1=0\). Similarly, we can deduce \(w=0\), which implies the positivity of \(\Sigma^{(L)}_\infty(X)\).

For the positivity of \(\Theta^{(L)}_\infty(X)\), we rewrite the expression \eqref{E:deep-DNTK} as 
$$\Theta^{(L)}_\infty(X) = \Sigma^{(L)}_\infty(X) + K^{(L)}_\infty(X).$$ 
Then, it suffices to prove the positive semi-definiteness of the matrix \(K^{(L)}_\infty(X)\) since \(\Sigma^{(L)}_\infty(X)\) is positive definite. 
	We assert that for all \(L\geq 2\), there exists a finite nonempty index set \(I_L\) such that 
	the term \(K^{(L)}_\infty(x^\AA,y^\BB)\) can be wirtten as 
	\begin{equation}\label{E:exp-K^L}
		K^{(L)}_\infty(x^\AA,y^\BB) = \sum_{i\in I_L} \E\left[ \GG^{(L-1)}_i(x^\AA) \GG^{(L-1)}_i(y^\BB) \right],
	\end{equation}
	where \(\GG^{(L-1)}_i(x^\AA)\) 
	is a function of the Gaussian variables of the form \(f^{(\ell)}_{\infty}(x^{\partial^\alpha})\) 
	for \(1\leq\ell\leq L-1\), which is only related to \(x^\AA\) while independent of \(y^\BB\), and the expectation therein is taken with respect to all such normal variables. If so, the kernel matrix \(K^{(L)}_\infty(X)\) reads 
	\begin{align*}
		K^{(L)}_\infty(X) = \sum_{i\in I_L}\E\left[\GG^{(L-1)}_i(X)^\top \GG^{(L-1)}_i(X)\right]
        , \quad 
		\GG^{(L-1)}_i(X) = \left[\GG^{(L-1)}_i(x_1^{\AA_1}),\cdots, \GG^{(L-1)}_i(x_N^{\AA_N})\right],
	\end{align*}
	and thus for any \(w\in \R^N\), we get
	\[ w^\top K^{(L)}_\infty(X)w = \sum_{i\in I_L}\E\left[ \left\| \GG^{(L-1)}_i(X) w \right\|^2  \right]\geq 0. \] 
	Next, we prove the assertion for \(L\geq 2\) by mathematical induction. For \(\AA\in\LD_d\), we use \(a^\AA_0\) to denote its zeroth order term while \(a^\AA_k\) with \(k\in[d]\) 
	for the coefficients of the partial derivative \(\partial_{z_k}\). When \(L=2\), we have 
	\begin{align*}
		& K^{(2)}_\infty(x^\AA,y^\BB) 
         = \sum_{\alpha\in I_\AA }\sum_{\beta\in I_\BB } a_\alpha(x)b_\beta(y) 
		\sum_{\xi\leq\alpha}\sum_{\zeta\leq\beta}\binom{\alpha}{\xi}\binom{\beta}{\zeta}
		\dot{\Sigma}^{(2)}_{\infty}\left( x^{\partial^\xi},y^{\partial^\zeta} \right)
		\Theta^{(1)}_{\infty}\left( x^{\partial^{(\alpha-\xi)}},y^{\partial^{(\beta-\zeta)}} \right)\\
		& = \sum_{\alpha\in I_\AA }\sum_{\beta\in I_\BB } a_\alpha(x)b_\beta(y) 
		\sum_{\xi\leq\alpha}\sum_{\zeta\leq\beta}\binom{\alpha}{\xi}\binom{\beta}{\zeta}
		 \E\left[ \partial^\xi \sigma' \left( f^{(1)}_{\infty}(x) \right) 
		 \partial^\zeta \sigma' \left( f^{(1)}_{\infty}(y) \right)\right] \cdot \\
		& \qquad \left(\left(a_0^{\partial^{(\alpha-\xi)}}(x)x+D\partial^{(\alpha-\xi)}(x)\right)^\top 
		\left(b_0^{\partial^{(\beta-\zeta)}}(y)y+D\partial^{(\beta-\zeta)}(y)\right) 
			+ \gamma^2 a_0^{\partial^{(\alpha-\xi)}}(x)b_0^{\partial^{(\beta-\zeta)}}(y)\right)\\
		& = \sum_{k=1}^{d} \E\left[ \left( \sum_{\alpha\in I_\AA }\sum_{\xi\leq\alpha}a_\alpha(x)\binom{\alpha}{\xi}
			\partial^\xi \sigma' \left( f^{(1)}_{\infty}(x) \right) 
			\left(a_0^{\partial^{(\alpha-\xi)}}(x)x_k + a_k^{\partial^{(\alpha-\xi)}}(x)\right) \right)
			\right. \cdot \\
		& \qquad \qquad \qquad \left. \left( \sum_{\beta\in I_\BB }\sum_{\zeta\leq\beta}b_\beta(y)\binom{\beta}{\zeta}
			\partial^\zeta \sigma' \left( f^{(1)}_{\infty}(y) \right)
			\left(b_0^{\partial^{(\beta-\zeta)}}(y)y_k + a_k^{\partial^{(\beta-\zeta)}}(y)\right)  \right)
			 \right] \\
		& \qquad + \E\left[ \left(\gamma \sum_{\alpha\in I_\AA }\sum_{\xi\leq\alpha}a_\alpha(x)\binom{\alpha}{\xi}
			\partial^\xi \sigma' \left( f^{(1)}_{\infty}(x) \right) 
			a_0^{\partial^{(\alpha-\xi)}}(x) \right) \right. \\
        & \qquad \qquad \qquad \left.
			 \left(\gamma \sum_{\beta\in I_\BB }\sum_{\zeta\leq\beta}b_\beta(y)\binom{\beta}{\zeta}
			\partial^\zeta \sigma' \left( f^{(1)}_{\infty}(y) \right)
			b_0^{\partial^{(\beta-\zeta)}}(y) \right) \right].
	\end{align*}
	Suppose the assertion holds for \(\ell\geq 2\). Then we have 
	\begin{align*}
		 K^{(\ell+1)}_{\infty}\left( x^\AA,y^\BB \right) 
		& =  \sum_{\alpha\in I_\AA }\sum_{\beta\in I_\BB } a_\alpha(x)b_\beta(y)
			 \sum_{\xi\leq\alpha}\sum_{\zeta\leq\beta}\binom{\alpha}{\xi}\binom{\beta}{\zeta}
			\dot{\Sigma}^{(\ell+1)}_{\infty}\left( x^{\partial^\xi},y^{\partial^\zeta} \right)\cdot \\
		& \qquad \qquad \qquad 
            \left(\Sigma^{(\ell)}_{\infty}\left( x^{\partial^{(\alpha-\xi)}},y^{\partial^{(\beta-\zeta)}} \right)
		  + K^{(\ell)}_{\infty}\left( x^{\partial^{(\alpha-\xi)}},y^{\partial^{(\beta-\zeta)}}  \right) \right) \\
		& = \sum_{\alpha\in I_\AA }\sum_{\beta\in I_\BB } a_\alpha(x)b_\beta(y)
			\sum_{\xi\leq\alpha}\sum_{\zeta\leq\beta}\binom{\alpha}{\xi}\binom{\beta}{\zeta}
			\E\left[ \partial^\xi \sigma' \left( f^{(\ell)}_{\infty}(x) \right) 
			\partial^\zeta \sigma' \left( f^{(\ell)}_{\infty}(y) \right)\right] \cdot \\
		& \qquad \qquad \Biggl( \E\left[ \partial^{\alpha - \xi} \sigma \left( f^{(\ell-1)}_{\infty}(x) \right) 
			\partial^{\beta - \zeta} \sigma \left( f^{(\ell-1)}_{\infty}(y) \right)\right] 
			+ \gamma^2 a_0^{\partial^{(\alpha-\xi)}}(x)b_0^{\partial^{(\beta-\zeta)}}(y) \\
        & \qquad \qquad \qquad \qquad
            + \sum_{i\in I_\ell }\E\left[ \GG^{(\ell-1)}_i(x^{\partial^{(\alpha-\xi)}}) 
			\GG^{(\ell-1)}_i(y^{\partial^{(\beta-\zeta)}}) \right] \Biggr) \\
		& =  \E\left[ \left( \sum_{\alpha\in I_\AA }\sum_{\xi\leq\alpha}a_\alpha(x)\binom{\alpha}{\xi}
			\partial^\xi \sigma' \left( f^{(\ell)}_{\infty}(x) \right) 
			\partial^{\alpha - \xi} \sigma \left( f^{(\ell-1)}_{\infty}(x) \right) \right)
			\right. \cdot \\
		& \qquad \qquad \qquad \left. \left( \sum_{\beta\in I_\BB }\sum_{\zeta\leq\beta}b_\beta(y)\binom{\beta}{\zeta}
			\partial^\zeta \sigma' \left( f^{(\ell)}_{\infty}(y) \right)
			\partial^{\beta - \zeta} \sigma \left( f^{(\ell-1)}_{\infty}(y) \right)  \right)
			 \right] \\
		& \quad +  \E\left[ \left(\gamma \sum_{\alpha\in I_\AA }\sum_{\xi\leq\alpha}a_\alpha(x)\binom{\alpha}{\xi}
			\partial^\xi \sigma' \left( f^{(\ell)}_{\infty}(x) \right) 
			a_0^{\partial^{(\alpha-\xi)}}(x) \right) \right. \cdot \\
		& \qquad \qquad \qquad \left. \left(\gamma \sum_{\beta\in I_\BB }\sum_{\zeta\leq\beta}b_\beta(y)\binom{\beta}{\zeta}
			\partial^\zeta \sigma' \left( f^{(\ell)}_{\infty}(y) \right)
			b_0^{\partial^{(\beta-\zeta)}}(y) \right) \right] \\
		& \quad + \sum_{i\in I_\ell } \E\left[ \left( \sum_{\alpha\in I_\AA }\sum_{\xi\leq\alpha}a_\alpha(x)\binom{\alpha}{\xi}
			\partial^\xi \sigma' \left( f^{(\ell)}_{\infty}(x) \right) 
			\GG^{(\ell-1)}_i(x^{\partial^{(\alpha-\xi)}})  \right) \right. \cdot \\
		& \qquad \qquad \qquad \left. \left( \sum_{\beta\in I_\BB }\sum_{\zeta\leq\beta}b_\beta(y)\binom{\beta}{\zeta}
			\partial^\zeta \sigma' \left( f^{(\ell)}_{\infty}(y) \right)
			\GG^{(\ell-1)}_i(y^{\partial^{(\beta-\zeta)}}) \right) \right].
	\end{align*}
	This proves the assertion and thus finishes the proof.

\begin{remark}	
In the NTK case, the second summand \(K^{(L)}_\infty(X)\) is a Hadamard product of two positive semi-definite kernels, which directly yields the desired conclusion via the Schur product theorem. 
Note that the semi-definiteness of \(K^{(L)}_\infty\) is an inherent property of DNTK which does not depend on the assumptions on the points and activation. 
Similar to the two-layer case, each summand in the identity \eqref{E:exp-K^L} of \(K^{(L)}_\infty(x^\AA,y^\BB)\) can contribute to the positivity of the kernel matrix \(\Theta^{(L)}_\infty(X)\). Since the number of index in \(I_L\) increases as \(L\) grows, deeper NNs potentially make the DNTK more likely to be positive definite.
\end{remark}

\vskip 0.2in
\bibliographystyle{abbrv}
\bibliography{reference}

\end{document}